\documentclass[twoside,11pt]{article}

\RequirePackage[OT1]{fontenc}
\RequirePackage{amsmath,amssymb,enumerate,lmodern}
\RequirePackage[numbers]{natbib}
\usepackage[colorlinks=false,allbordercolors={1 1 1}]{hyperref}
\usepackage{jmlr2e}

\newtheorem{assump}{Assumption}

\usepackage{mathtools}
\usepackage{amsmath}

\font\tencmmib=cmmib10 \skewchar\tencmmib '60
\newfam\cmmibfam
\textfont\cmmibfam=\tencmmib



\def\lessim{\ \lower4pt\hbox{$
\buildrel{\displaystyle <}\over\sim$}\ }
\def\gessim{\ \lower4pt\hbox{$\buildrel{\displaystyle >}
\over\sim$}\ }


%

\def\eps{\varepsilon}

\def\go0{\to 0}

\def\leftitem#1{\item{\hbox to\parindent{\enspace#1\hfill}}}

\def\sg{\sigma}

\def\sg2{\sigma^2}

\def\__{_{\infty}}

\jmlrheading{16}{2015}{..}{7/15}{9/15}{Vladimir Koltchinskii and Dong Xia}
\ShortHeadings{Low Rank Density Matrices Estimation}{Koltchinskii and Xia}
\begin{document}

\title{Optimal Estimation of Low Rank Density Matrices}

\author{Vladimir Koltchinskii\thanks{Supported in part by NSF Grants DMS-1509739, DMS-1207808, CCF-1523768 and CCF-1415498} \email vlad@math.gatech.edu\\
 \bf{Dong Xia}  \thanks{Supported in part by NSF Grant DMS-1207808} \email dxia7@math.gatech.edu\\
\addr School of Mathematics\\
Georgia Institute of Technology\\
Atlanta, GA 30332, USA.
}


\editor{Alex Gammerman and Vladimir Vovk}

\maketitle

\begin{abstract}%
The density matrices are positively semi-definite Hermitian matrices of unit trace that describe the state of a quantum system. The goal of the paper is to develop minimax lower bounds on error rates of estimation of low rank density matrices in trace regression models 
used in quantum state tomography (in particular, in the case of Pauli measurements)
with explicit dependence of the bounds on the rank and other complexity parameters.  
Such bounds are established for several statistically relevant distances, including quantum versions of Kullback-Leibler divergence (relative entropy distance) and of Hellinger distance (so called 
Bures distance), and Schatten $p$-norm distances. Sharp upper bounds and oracle inequalities 
for least squares estimator with von Neumann entropy penalization are obtained 
showing that minimax lower bounds are attained (up to logarithmic factors) for these distances. 
\end{abstract}

\begin{keywords}
 quantum state tomography, low rank density matrix, minimax lower bounds
\end{keywords}

\section{Introduction}
\label{intro}

{\it This paper deals with optimality properties of estimators of density matrices, describing states of quantum systems, that are based on penalized empirical risk minimization with specially designed complexity penalties such as von Neumann entropy of the state.
Alexey Chervonenkis was a co-founder of the theory of empirical risk minimization that is of crucial importance in 
machine learning, but he also had very broad interests that included, in particular, quantum mechanics. By the choice 
of the topic, we would like to honor the memory of this great man and great scientist.}

Let $\mathbb{M}_m(\mathbb{C})$ be the set of all $m\times m$ matrices with complex entries and 
let ${\mathbb H}_m=\mathbb{H}_m(\mathbb{C})\subset \mathbb{M}_m(\mathbb{C})$ be the set of all Hermitian matrices: ${\mathbb H}_m=\{A\in {\mathbb M}_m({\mathbb C}):A=A^{\ast}\},$ $A^{\ast}$ denoting the adjoint matrix of $A.$
For $A\in\mathbb{H}_m,$ $\text{tr}(A)$ denotes the trace of $A$ and $A\succcurlyeq 0$ means that $A$ is positively semi-definite. Let $\mathcal{S}_m:=\left\{S\in\mathbb{H}_m: 
S\succcurlyeq 0, \textrm{tr}(S)=1\right\}$ be the set of all positively semi-definite Hermitian matrices of unit trace called {\it density matrices}.  
In quantum mechanics, the state of a quantum system is usually characterized by a density matrix $\rho\in\mathcal{S}_m$ (or, more generally, by a self-adjoint positively semi-definite operator of unit trace acting 
in an infinite-dimensional Hilbert space, called a density operator). Often, very large density 
matrices are needed to represent or to approximate the density operator of the state. For instance, 
for a quantum system consisting of $b$ qubits, the density matrices are of the size $m\times m$
with $m=2^b,$ so the dimension of the density matrix grows exponentially with $b.$  
For instance, for a $10$ qubit system, one has to deal with matrices that have $2^{20}$
entries. Thus, it becomes natural in the problems of statistical estimation of density matrix 
$\rho$ to take an advantage of the fact that it might be low rank, or nearly low rank 
(that is, it could be well approximated by low rank matrices) which reduces the complexity
of the estimation problem.

In {\it quantum state tomography (QST)}, the goal is to estimate an unknown state $\rho\in {\mathcal S}_m$ based on a number of specially designed measurements for the system prepared in state $\rho$ (see \citealt{gross2010quantum}, \citealt{gross2011recovering}, \citealt{koltchinskii2011neumann}, \citealt{caioptimal} and references therein).  
Given an observable $A\in {\mathbb H}_m$ with spectral representation $A=\sum_{j=1}^{m'}\lambda_j P_j,$ where $m'\leq m,$ $\lambda_j$ being the eigenvalues of $A$ and $P_j$
being the corresponding mutually orthogonal eigenprojectors, the outcome of a measurement 
of $A$ for the system prepared in state $\rho$ is a random variable $Y$ taking values $\lambda_j$
with probabilities $\textrm{tr}(\rho P_j).$ The expectation of $Y$ is then 
${\mathbb E}_{\rho}Y=\textrm{tr}(\rho A),$ so, $Y$ could be viewed as a noisy 
observation of the value of linear functional $\textrm{tr}(\rho A)$ of the unknown 
density matrix $\rho.$ A common approach is to choose an observable $A$ at random,
assuming that it is the value of a random variable $X$ with some design distribution 
$\Pi$ in the space ${\mathbb H}_m.$ More precisely, given a sample of $n$ i.i.d. copies 
$X_1,\dots, X_n$ of $X,$ $n$ measurements are being performed for the system identically
prepared $n$ times in state $\rho$ resulting in outcomes $Y_1,\dots, Y_n.$ Based on 
the data $(X_1,Y_1), \dots, (X_n,Y_n),$ the goal is to estimate the target density matrix 
$\rho.$ Clearly, the observations satisfy the following model 
\begin{equation}
\label{trace_regression}
Y_j={\rm tr}(\rho X_j)+ \xi_j,\ j=1,\dots, n, 
\end{equation}
where $\{\xi_j\}$ is a random noise consisting of $n$ i.i.d. random variables satisfying 
the condition ${\mathbb E}_{\rho}(\xi_j|X_j)=0, j=1,\dots, n.$ This is a special 
case of so called {\it trace regression model} intensively studied in the recent literature
(see, e.g., \citealt{koltchinskii2011nuclear}, \citealt{koltchinskii2011oracle} and references therein). 

\subsection{Assumptions}

A common choice of design distribution in this type of problems is so called {\it uniform sampling 
from an orthonormal basis} described in the following assumptions.

\begin{assump}
\label{orthonorm}
Let ${\mathcal E}=\{E_1,\dots, E_{m^2}\}\subset {\mathbb H}_m$ be an orthonormal basis of ${\mathbb H}_m$ with respect to the Hilbert--Schmidt inner 
product: $\langle A,B\rangle=\textrm{tr}(AB).$ 
Moreover, suppose that, for some $U>0,$ 
$$\|E_j\|_{\infty}\leq U, j=1,\dots, n,$$ 
where $\|\cdot\|_{\infty}$
denotes the operator norm (the spectral norm). 
\end{assump}

Since $\|E_j\|_2=1,$ where $\|\cdot\|_2$ denotes 
the Hilbert--Schmidt (or Frobenius) norm, we can assume that $U\leq 1.$ Moreover,
$U\geq m^{-1/2}$ since $1=\|E_j\|_2\leq m^{1/2}\|E_j\|_{\infty}\leq m^{1/2}U.$

\begin{assump}
\label{unif_sample}
Let $\Pi$ be the uniform distribution in the finite set ${\mathcal E}$ (see Assumption \ref{orthonorm}),
let $X$ be a random variable sampled from $\Pi$ and let $X_1,\dots , X_n$ be i.i.d. copies of $X.$
\end{assump}

It will be assumed in what follows that assumptions \ref{orthonorm} and \ref{unif_sample} hold (unless it is stated otherwise). Under these assumptions, $Y_1,\dots, Y_n$ could be viewed 
as noisy observations of a random sample of Fourier coefficients $\langle \rho, X_1\rangle,
\dots, \langle \rho, X_n\rangle$ of the target density matrix $\rho$ in the basis ${\mathcal E}.$
The above model (in which $X_1,\dots, X_n$ are uniformly sampled from an orthonormal basis 
and $Y_1,\dots, Y_n$ are the outcomes of measurements of the observables $X_1,\dots, X_n$ for the system being 
identically prepared $n$ times in the same state $\rho$) will be called in what follows the {\it standard QST model}.
It is a special case of {\it trace regression model with bounded response}:

\begin{assump}[Trace regression with bounded responce]
\label{bounded_response}
Suppose that Assumption \ref{orthonorm} holds and let $(X,Y)$ be a random couple such 
that $X$ is sampled from the uniform distribution $\Pi$ in an orthonormal basis ${\mathcal E}\subset {\mathbb H}_m.$
Suppose also that, for some $\rho\in {\mathcal S}_m,$ ${\mathbb E}(Y|X)=\langle \rho,X\rangle$ a.s. and, for 
some $\bar U>0,$  $|Y|\leq \bar U$ a.s.. The data $(X_1,Y_1),\dots (X_n,Y_n)$ consists of $n$ i.i.d.
copies of $(X,Y).$
\end{assump}

We are also interested in the {\it trace regression model with Gaussian noise}: 

\begin{assump}[Trace regression with Gaussian noise]
\label{Gaussian_noise}
Suppose Assumption \ref{orthonorm} holds and let $(X,Y)$ be a random couple such 
that $X$ is sampled from the uniform distribution $\Pi$ in an orthonormal basis ${\mathcal E}\subset {\mathbb H}_m$
and, for some $\rho\in {\mathcal S}_m,$ 
$
Y=\langle \rho,X\rangle +\xi,
$
where $\xi$ is a normal random variable with mean $0$ and variance $\sigma_{\xi}^2,$
$\xi$ and $X$ being independent.
The data $(X_1,Y_1),\dots (X_n,Y_n)$ consists of $n$ i.i.d.
copies of $(X,Y).$
\end{assump}

Note that this model is not directly applicable to the ``standard QST problem" described above,
where the response variable $Y$ is discrete. However, if the measurements are repeated multiple 
times for each observable $X_j$ and the resulting outcomes are averaged to reduce the variance,
the noise of such averaged measurements becomes approximately Gaussian and it is of interest 
to characterize the estimation error in terms of the variance of the noise.

An important example of an orthonormal basis used in quantum state tomography is 
so called {\it Pauli basis}, see, e.g., \cite{gross2010quantum}, \cite{gross2011recovering}.
The Pauli basis in the space ${\mathbb H}_2$ of $2\times 2$ Hermitian matrices 
(observables in a single qubit system) consists of four matrices $W_1,W_2,W_3,W_4$ 
defined as $W_i=\frac{1}{\sqrt{2}}\sigma_i,\ i=1,\dots, 4,$ where
 \begin{equation*}
  \sigma_1:=\left(\begin{array}{cc}1&0\\0&1 \end{array}\right),\quad \sigma_2:=\left(\begin{array}{cc}0&-i\\i&0 \end{array}\right),
  \quad \sigma_3:=\left(\begin{array}{cc}0&1\\1&0 \end{array}\right),\quad \sigma_4:=\left(\begin{array}{cc}1&0\\0&-1 \end{array}\right).
 \end{equation*}
 It is easy to check that $\{W_0,W_1,W_2,W_3\}$ indeed forms an orthonormal basis in $\mathbb{H}_2.$ The Pauli basis in the space ${\mathbb H}_m$ for $m=2^b$ (the space 
 of observables for a $b$ qubits system) is defined by tensorisation, namely, it consists 
 of $4^b$ tensor products $W_{i_1}\otimes\ldots\otimes W_{i_b}, (i_1,\ldots,i_b)
\in \left\{1,2,3,4\right\}^b$. Let us write these matrices as $E_1,\dots, E_{m^2}$
with $E_1=W_1\otimes\ldots\otimes W_1.$ It is easy to see that each of them has 
eigenvalues $\pm\frac{1}{\sqrt{m}}$ and $\|E_j\|_{\infty}=m^{-1/2},$ so, for this basis,
$U=m^{-1/2}.$ The fact that, for the Pauli basis, the operator norms of basis matrices are as small as possible plays an important role in quantum state tomography 
\citep{gross2010quantum,gross2011recovering,liu2011universal}.
Let $E_j=\frac{1}{\sqrt{m}}Q_j^+-\frac{1}{\sqrt{m}}Q_j^-$ be the spectral 
representation of $E_j.$ Then, an outcome of a measurement of $E_j$ in state 
$\rho$ is a random variable $\tau_j$ taking values $\pm\frac{1}{\sqrt{m}}$
with probabilities $\big<\rho,Q_t^{\pm}\big>.$ Its expectation is ${\mathbb E}_{\rho}\tau_j=\langle \rho,E_j\rangle.$ Of course, there exists a unique representation of density matrix $\rho$ in the Pauli 
basis that can be written as follows:
$\rho=\sum_{j=1}^{m^2}\frac{\alpha_j}{\sqrt{m}}E_j$ with $\alpha_1=1.$
Then, we clearly have ${\mathbb E}_{\rho}\tau_j=\frac{\alpha_j}{\sqrt{m}}$
and $\mathbb{P}_{\rho}\Bigl\{\tau_j=\pm\frac{1}{\sqrt{m}}\Bigr\}=\frac{1\pm \alpha_j}{2}$
(for $j=1,$ this gives $\mathbb{P}_{\rho}\Bigl\{\tau_1=\frac{1}{\sqrt{m}}\Bigr\}=1$).
As a consequence, $\text{Var}_{\rho}(\tau_j)=\frac{1-\alpha_j^2}{m}.$
Note that $\sum_{j=1}^{m^2}\frac{\alpha_j^2}{m}=\|\rho\|_2^2\leq {\rm tr}^2(\rho)=1.$
This implies that there exists $j$ such that $\alpha_j^2 \leq \frac{1}{2}$
and $\text{Var}_{\rho}(\tau_j)\geq \frac{1}{2m}.$ In fact, the number of 
such $j$ must be large, say, at least $\frac{m^2}{2}$ (provided that $m>4$). 
Thus, for ``most" of the values of $j,$ $\text{Var}_{\rho}(\tau_j) \asymp\frac{1}{m}.$
A way to reduce the variance is to repeat the measurement of each observable $X_j$
$K$ times (for a system identically prepared in state $\rho$) and to average the outcomes 
of such $K$ measurements. The resulting response variable is $Y_j=\langle \rho, X_j\rangle +
\xi_j,$ where ${\mathbb E}_{\rho}(\xi_j|X_j)=0$ and ${\mathbb E}_{\rho}(\xi_j^2|X_j)=\text{Var}_{\rho}(Y_j|X_j)=\frac{1-\alpha_{\nu_j}^2}{Km},$ $\nu_j$ being defined by the relationship
$X_j=E_{\nu_j}.$

\subsection{Preliminaries and Notations} 

Some notations will be used throughout the paper. The Euclidean norm in $\mathbb{C}^{m}$
will be denoted by $\|\cdot\|$ and the notation $\langle \cdot,\cdot \rangle$ will be used 
for both the Euclidean inner product in ${\mathbb C}^m$ and for the Hilbert--Schmidt inner 
product in ${\mathbb H}_m.$
$\|\cdot\|_p, p\geq 1$ will be used to denote the 
{\it Schatten p}-norm in $\mathbb{H}_m,$ namely $\|A\|_p^p=
\sum\limits_j^m |\lambda_j(A)|^p,\ A\in\mathbb{H}_m,$ $\lambda_1(A)\geq \ldots \geq \lambda_m(A)$ being the eigenvalues of $A.$ In particular,  $\|\cdot\|_2$ denotes the Hilbert--Schmidt (or Frobenius) norm, $\|\cdot\|_1$ denotes the nuclear (or trace) norm and
$\|\cdot\|_{\infty}$ denotes the operator (or spectral) norm: $\|A\|_{\infty}=\max_{1\leq j\leq m}|\lambda_j(A)|=|\lambda_1(A)|.$ The following well known {\it interpolation inequality} for 
Schatten $p$-norms will be used to extend the bounds proved for some values of $p$
to the whole range of its values. It easily follows from similar bounds for $\ell_p$-spaces. 

\begin{lemma}[Interpolation inequality]
\label{interlem}
 For $1\leq p<q<r\leq\infty$, and let $\mu\in[0,1]$ be such that
\begin{equation*}
 \frac{\mu}{p}+\frac{1-\mu}{r}=\frac{1}{q}.
\end{equation*}
Then, for all $A\in\mathbb{H}_m,$
\begin{equation*}
 \|A\|_q\leq \|A\|_p^{\mu}\|A\|_r^{1-\mu}.
\end{equation*}
\end{lemma}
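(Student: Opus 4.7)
The plan is to reduce the matrix inequality to the corresponding inequality for sequences and then apply Hölder's inequality. Since every $A \in \mathbb{H}_m$ is Hermitian, its Schatten norms depend only on the eigenvalue vector: $\|A\|_p = \|\lambda(A)\|_{\ell_p}$ where $\lambda(A) = (\lambda_1(A), \dots, \lambda_m(A))$. Hence it suffices to establish, for any finite sequence $(a_j)_{j=1}^m$ of complex (or real) numbers,
\begin{equation*}
  \|a\|_{\ell_q} \leq \|a\|_{\ell_p}^{\mu}\,\|a\|_{\ell_r}^{1-\mu}.
\end{equation*}

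To prove the $\ell_p$ version in the case $r<\infty$, I would write $|a_j|^q = |a_j|^{\mu q}\cdot |a_j|^{(1-\mu)q}$ and apply Hölder's inequality with conjugate exponents $s_1 = p/(\mu q)$ and $s_2 = r/((1-\mu)q)$. The requirement $1/s_1 + 1/s_2 = 1$ becomes
\begin{equation*}
  \frac{\mu q}{p} + \frac{(1-\mu)q}{r} = 1,
\end{equation*}
which is exactly the hypothesis $\mu/p + (1-\mu)/r = 1/q$ multiplied by $q$. Hölder then yields
\begin{equation*}
  \sum_j |a_j|^q \leq \Bigl(\sum_j |a_j|^{p}\Bigr)^{\mu q/p}\Bigl(\sum_j |a_j|^{r}\Bigr)^{(1-\mu)q/r},
\end{equation*}
and taking $q$-th roots gives the claim. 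For the boundary case $r=\infty$, the bound $|a_j|^q \leq \|a\|_{\ell_\infty}^{q-p}|a_j|^p$ summed over $j$ gives $\|a\|_{\ell_q}^q \leq \|a\|_{\ell_\infty}^{q-p}\|a\|_{\ell_p}^p$, which matches the statement since $\mu/p = 1/q$ forces $\mu = p/q$ and $1-\mu = (q-p)/q$.

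There is no real obstacle here: the step that requires the most care is verifying that the Hölder exponents $s_1, s_2 \geq 1$ are conjugate, which is purely algebraic and follows directly from the hypothesis on $\mu$. The transfer from $\ell_p$-sequences to Schatten norms is automatic because Schatten norms of a Hermitian matrix are, by definition, $\ell_p$-norms of the eigenvalues; no inequality like Weyl's is needed. Thus the entire proof is essentially a one-line Hölder argument at the level of eigenvalues.
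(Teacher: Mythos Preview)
Your proposal is correct and follows exactly the approach the paper indicates: the paper simply remarks that the lemma ``easily follows from similar bounds for $\ell_p$-spaces,'' and you supply precisely that argument, reducing to eigenvalue sequences and applying H\"older. There is nothing to add.
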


Given $A\in {\mathbb H}_m,$ define a function $f_A:{\mathbb H}_m\mapsto {\mathbb R}:$
$f_A(x):=\langle A,x\rangle, x\in {\mathbb H}_m.$ For a given random variable $X$ in 
${\mathbb H}_m$ with a distribution $\Pi,$ we have 
$
\|f_A\|_{L_2(\Pi)}^2 = {\mathbb E}f_A^2(X)= {\mathbb E}\langle A,X\rangle^2. 
$
Sometimes, with a minor abuse of notation, we might write $\|A\|_{L_2(\Pi)}^2=
\int_{{\mathbb H}_m}\langle A,x\rangle^2\Pi(dx)=\|f_A\|_{L_2(\Pi)}^2.$ In what 
follows, $\Pi$ will be typically the uniform distribution in an orthonormal basis 
${\mathcal E}=\{E_1,\dots, E_{m^2}\}\subset {\mathbb H}_m,$ implying that 
$$
\|f_A\|_{L_2(\Pi)}^2 = \|A\|_{L_2(\Pi)}^2 = m^{-2}\|A\|_2^2,
$$
so, the $L_2(\Pi)$-norm is just a rescaled Hilbert--Schmidt norm.

Consider $A\in {\mathbb H}_m$ with spectral representation 
$A=\sum_{j=1}^{m'} \lambda_j P_j,$ $m'\leq m$ with distinct non-zero eigenvalues $\lambda_j.$
Denote by 
${\rm sign}(A):=\sum_{j=1}^{m'} {\rm sign}(\lambda_j) P_j$
and by ${\rm supp}(A)$ the linear span of the images of projectors $P_j, j=1,\dots, m'$
(the subspace ${\rm supp}(A)\subset {\mathbb C}^m$ will be called {\it the support} of $A$).

Given a subspace $L\subset {\mathbb C}^m,$ 
$L^{\perp}$ denotes the orthogonal complement of $L$ and  $P_L$ denotes the orthogonal projection onto $L.$ Let 
${\mathcal P}_L, {\mathcal P}_L^{\perp}$ be orthogonal projection operators 
in the space ${\mathbb H}_m$ (equipped with the Hilbert--Schmidt inner product),
defined as follows:
$$
{\mathcal P}_L^{\perp}(A)=P_{L^{\perp}}AP_{L^{\perp}},\ \ {\mathcal P}_L(A)=A-P_{L^{\perp}}AP_{L^{\perp}}.
$$
These two operators split any Hermitian matrix $A$ into two orthogonal parts, 
${\mathcal P}_L(A)$ and ${\mathcal P}_L^{\perp}(A),$ the first one being of rank
at most $2{\rm dim}(L).$ 

For a convex function $f:{\mathbb H}_m\mapsto {\mathbb R},$ $\partial f(A)$ denotes the subdifferential 
of $f$ at the point $A\in {\mathbb H}_m.$
It is well known that 
\begin{equation}
\label{subdiff}
\partial \|A\|_1 = \Bigl\{{\rm sign}(A)+ {\mathcal P}_L^{\perp}(M): M\in {\mathbb H}_m, 
\|M\|_{\infty}\leq 1\Bigr\},
\end{equation}
where $L={\rm supp}(A)$ (see \citealt{koltchinskii2011oracle}, p. 240 and references therein). 

$C, C_1, C',c, c',$ etc will denote constants (that do not depend on parameters of interest 
such as $m,n,$ etc) whose values could change from line to line (or, even, within the same line)
without further notice.  For nonnegative $A$ and $B,$ $A\lesssim B$ (equivalently, $B\gtrsim A$) means that $A\leq CB$ for some absolute constant $C>0,$ and $A\asymp B$ means that 
$A\lesssim B$ and $B\lesssim A.$ Sometimes, symbols $\lesssim, \gtrsim$ and $\asymp$
could be provided with subscripts (say, $A\lesssim_{\gamma}B$) to indicate that constant 
$C$ may depend on a parameter (say, $\gamma$).   

In what follows, $P$ denotes the distribution of $(X,Y)$ and $P_n$ denotes the corresponding 
empirical distribution based on the sample $(X_1,Y_1),\ldots,(X_n,Y_n)$ of $n$ i.i.d. observations.
Similarly, $\Pi$ is the distribution of $X$ (typically, uniform in an orthonormal basis) and $\Pi_n$
is the corresponding empirical distribution based on the sample $(X_1,\dots, X_n).$ 
We will use standard notations $Pf={\mathbb E}f(X,Y), P_n f=n^{-1}\sum_{j=1}^n f(X_j,Y_j)$
and $\Pi g={\mathbb E}g(X), P_n g=n^{-1}\sum_{j=1}^n g(X_j).$

\subsection{Estimation Methods}

Recall that the central problem in quantum state tomography is to estimate a large density 
matrix $\rho$ based on the data $(X_1,Y_1),\ldots,(X_n,Y_n)$ satisfying the trace regression 
model.  Often, the goal is to develop adaptive estimators with optimal dependence of the estimation error 
(measured by various statistically relevant distances) on the unknown rank of the target 
matrix $\rho$ under the assumption that $\rho$ is low rank, or on other complexity 
parameters in the case when the target matrix $\rho$ can be well approximated by low rank 
matrices.

The simplest estimation procedure for density matrix $\rho$ is the least squares estimator 
defined by the following convex optimization problem:
\begin{equation}
\label{least_squares}
 \hat{\rho}:=\underset{S\in\mathcal{S}_m}{\arg\min}\frac{1}{n}\sum\limits_{j=1}^n\left(Y_j-\left<S,X_j\right>\right)^2.
\end{equation}
Since, for all $S\in {\mathcal S}_m,$ $\|S\|_1=\text{tr}(S)=1,$
we have that
\begin{equation}
\label{trwnn}
\hat \rho= \hat{\rho}^{\eps}:=\underset{S\in\mathcal{S}_m}{\arg\min}\biggl[\frac{1}{n}\sum\limits_{j=1}^n\left(Y_j-\left<S,X_j\right>\right)^2+\eps\|S\|_1\biggr], \ \ \eps\geq 0.
\end{equation}
Thus, in the case of density matrices, the least squares estimator $\hat \rho$ coincides 
with the {\it matrix LASSO} estimator $\hat \rho^{\eps}$ with nuclear norm penalty and arbitrary 
value of regularization parameter $\eps.$ The nuclear norm penalty is used as a proxy 
of the rank that provides a convex relaxation for rank penalized least squares method. 
Matrix LASSO is a standard method of low rank estimation in trace regression models 
that has been intensively studied in the recent years, see, for instance, \cite{candes2011tight}, \cite{rohdetsybakov}, \cite{koltchinskii2011oracle}, \cite{koltchinskii2011nuclear}, \cite{negahban} and references therein. In the case of estimation of density matrices, due to their positive semidefiniteness
and trace constraint, the nuclear norm penalization is present implicitly even in the case of a non-penalized 
least squares estimator $\hat \rho$ (see also \citealt{koltchinskii2013remark}, \citealt{kalev2015informationally} where similar 
ideas were used).      

Note that the estimator $\hat \rho$ can be 
also rewritten as 
\begin{equation}
 \hat{\rho}:=\underset{S\in\mathcal{S}_m}{\arg\min}\biggl[\|S\|_{L_2(\Pi_n)}^2
 -\frac{2}{n}\sum_{j=1}^nY_j\big<S,X_j\big>\biggr].
\end{equation}
Replacing the empirical $\|\cdot\|_{L_2(\Pi_n)}$-norm with the ``true" $\|\cdot\|_{L_2(\Pi)}$-norm
(which could make sense in the case when the design distribution $\Pi$ is known) yields the following {\it modified least squares} estimator
studied in \cite{koltchinskii2011nuclear}, \cite{koltchinskii2013remark}: 
\begin{equation}
\label{KLT}
 \check{\rho}:=\underset{S\in\mathcal{S}_m}{\arg\min}\biggl[\|S\|_{L_2(\Pi)}^2-\frac{2}{n}\sum_{j=1}^nY_j\big<S,X_j\big>\biggr].
\end{equation}

Another estimator was proposed in \cite{koltchinskii2011neumann} 
and it is based on an idea of using so called {\it von Neumann entropy} as a penalizer
in least squares method. Von Neumann entropy is a canonical extension of Shannon's 
entropy to the quantum setting. For a density matrix $S\in {\mathcal S}_m,$ it is defined as 
${\mathcal E}(S):=-\text{tr}(S\log S).$ The estimator proposed in \cite{koltchinskii2011neumann}
is defined as follows
\begin{equation}
\label{trwvne}
\tilde{\rho}^{\eps}:=\underset{S\in\mathcal{S}_m}{\arg\min}
\biggl[\frac{1}{n}\sum\limits_{j=1}^n(Y_j-\big<S,X_j\big>)^2+\eps\text{tr}(S\log S)\biggr].
\end{equation}
Essentially, it is based on a trade-off between fitting the model via the least squares method in the class of all density matrices and maximizing the entropy of the quantum state. Note that (\ref{trwvne}) is also a convex optimization problem (due to concavity of von Neumann entropy, see \citealt{Nielsen2000}) and its solution $\tilde \rho^{\eps}$ is a full rank matrix (see
\citealt{koltchinskii2011neumann}, the proof of Proposition 3). It should be also mentioned 
that the idea of estimation of a density matrix of a quantum state by maximizing the 
von Neumann entropy subject to constraints based on the data has been used in quantum 
state tomography earlier (see \citealt{buvzek20046} and references therein).

\subsection{Distances between Density Matrices}

The main purpose of this paper is to study the optimality properties of estimator $\tilde{\rho}^{\epsilon}$ with respect to a variety of statistically meaningful distances,
in the case when the underlying density matrix $\rho$ is low rank. 
These distances include Schatten $p$-norm distances for $p\in [1,2],$\footnote{Similar problems for estimators $\hat \rho, \check \rho$
and for Schatten $p$-norm distances with  
$p\in (2,+\infty]$ are studied in a related paper by \cite{KoltchinskiiXia2015}}
but also quantum versions of Hellinger distance and Kullback-Leibler divergence that are 
of importance in quantum statistics and quantum information.  
A version of the (squared) Hellinger distance that will be studied is defined as
\begin{equation*}
 H^2(S_1,S_2):=2-2\text{tr}\sqrt{S_1^{\frac{1}{2}}S_2S_1^{\frac{1}{2}}}
\end{equation*}
for $S_1,S_2\in\mathcal{S}_m$ (see also \citealt{Nielsen2000}).  Clearly, $0\leq H^2(S_1,S_2)\leq 2.$ 
In quantum information literature, it is usually called Bures distance 
and it does not coincide with $\text{tr}(\sqrt{S_1}-\sqrt{S_2})^2$
(which is another possible non-commutative extension of the classical 
Hellinger distance). In fact, $H^2(S_1,S_2)\leq \text{tr}(\sqrt{S_1}-\sqrt{S_2})^2,
S_1,S_2\in {\mathcal S}_m,$ but the opposite inequality does not necessarily 
hold. The quantity $\text{tr}\sqrt{S_1^{\frac{1}{2}}S_2S_1^{\frac{1}{2}}}$ in the right
hand side of the definition of $H^2$ is a quantum version of Hellinger affinity. 

The noncommutative Kullback-Leibler divergence (or relative entropy distance)
$K(\cdot\|\cdot)$ is defined as (see also \citealt{Nielsen2000}): 
$$K(S_1\|S_2):=\big<S_1,\log S_1-\log S_2\big>.$$ 
If $\log S_2$ is not well-defined (for instance, some of the eigenvalues of $S_2$ are equal to $0$) we set $K(S_1\|S_2)=+\infty.$
The symmetrized version 
of Kullback-Leibler divergence is defined as 
$$
K(S_1;S_2):=K(S_1\|S_2)+K(S_2\|S_1)=\langle S_1-S_2, \log S_1-\log S_2\rangle.
$$ 

The following very useful inequality is a noncommutative extension of similar  
classical inequalities for total variation, Hellinger and Kullback-Leibler distances. 
It follows from representing the ``noncommutative distances" involved in the 
inequality as suprema of the corresponding classical distances between the distributions 
of outcomes of measurements for two states $S_1,S_2$ over all possible measurements
represented by positive operator valued measures  
(see, \citealt{Nielsen2000}, \citealt{Klauck2007}, \citealt{koltchinskii2011neumann}, Section 3 and references therein).
 
\begin{lemma}
For all $S_1,S_2\in\mathcal{S}_m,$ the following inequalities hold:
\begin{equation}
\label{khtracelem}
\frac{1}{4}\|S_1-S_2\|_1^2\leq H^2(S_1,S_2)\leq \big(K(S_1\|S_2)\wedge \|S_1-S_2\|_1\big).
\end{equation}
\end{lemma}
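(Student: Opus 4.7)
The plan is to reduce each of the three bounds in \eqref{khtracelem} to its classical analog between probability distributions on a finite set, exploiting variational representations of the noncommutative distances in terms of outcomes of measurements. Given any POVM $\{M_i\}$ on $\mathbb{C}^m$ (i.e.\ $M_i\succcurlyeq 0$, $\sum_i M_i=I$), set $p_i:=\textrm{tr}(S_1M_i)$, $q_i:=\textrm{tr}(S_2M_i)$, and write $H_c^2(p,q):=\sum_i(\sqrt{p_i}-\sqrt{q_i})^2$ and $K_c(p\|q):=\sum_i p_i\log(p_i/q_i)$ for the classical Hellinger and Kullback--Leibler functionals.

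I would first record three elementary classical inequalities: (a) $\tfrac14\|p-q\|_1^2\le H_c^2(p,q)$, obtained by Cauchy--Schwarz from the factorization $|p_i-q_i|=|\sqrt{p_i}-\sqrt{q_i}|(\sqrt{p_i}+\sqrt{q_i})$ combined with $\sum_i(\sqrt{p_i}+\sqrt{q_i})^2\le 4$; (b) $H_c^2(p,q)\le \|p-q\|_1$, since $|\sqrt{p_i}-\sqrt{q_i}|\le\sqrt{p_i}+\sqrt{q_i}$; and (c) $H_c^2(p,q)\le K_c(p\|q)$, via Jensen's inequality applied to $\log\sum_i p_i\sqrt{q_i/p_i}\ge -\tfrac12 K_c(p\|q)$, together with $1-e^{-x}\le x$.

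The noncommutative side rests on three well-known representation/monotonicity facts, all covered by the references cited just before the lemma: Helstrom's identity $\|S_1-S_2\|_1=\sup_{\{M_i\}}\|p-q\|_1$, with the supremum attained by the projective POVM associated with the eigendecomposition of $S_1-S_2$; the Fuchs--Caves identity $H^2(S_1,S_2)=\sup_{\{M_i\}}H_c^2(p,q)$ with the supremum attained; and monotonicity of the relative entropy under measurement, $K_c(p\|q)\le K(S_1\|S_2)$ for every POVM.

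Combining these ingredients proves \eqref{khtracelem}. For the left-hand inequality, choose the Helstrom POVM and combine (a) with the trivial bound $H_c^2\le H^2$: $\tfrac14\|S_1-S_2\|_1^2=\tfrac14\|p-q\|_1^2\le H_c^2(p,q)\le H^2(S_1,S_2)$. For the two upper bounds on $H^2$, pick the Fuchs--Caves optimal POVM, so that $H^2(S_1,S_2)=H_c^2(p,q)$; then (c) and monotonicity of relative entropy give $H^2(S_1,S_2)\le K_c(p\|q)\le K(S_1\|S_2)$, while (b) together with Helstrom's identity gives $H^2(S_1,S_2)\le \|p-q\|_1\le\|S_1-S_2\|_1$. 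The only non-elementary ingredient is the Fuchs--Caves variational formula for $H^2$; the rest is a short calculation, so I would simply cite that formula from the references preceding the lemma and do the three line-by-line combinations above.
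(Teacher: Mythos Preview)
Your proposal is correct and follows exactly the approach the paper indicates: the paper does not give a detailed proof of this lemma but simply states that it follows by representing the noncommutative distances as suprema of their classical counterparts over POVMs and then citing \cite{Nielsen2000}, \cite{Klauck2007}, and \cite{koltchinskii2011neumann}. Your argument is a faithful and correct expansion of that one-sentence sketch; the only minor remark is that attainment of the Fuchs--Caves supremum is not actually needed for the two upper bounds (taking the supremum over POVMs of $H_c^2(p,q)\le K_c(p\|q)\le K(S_1\|S_2)$ and $H_c^2(p,q)\le\|p-q\|_1\le\|S_1-S_2\|_1$ already suffices).
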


\subsection{Matrix Bernstein Inequalities}

Non-commutative (matrix) versions of Bernstein inequality will be used in what 
follows. The most common version is stated (in a convenient form for our applications) in the following lemma. 

\begin{lemma}
\label{matBernlem}
Let $X,X_1,\ldots,X_n\in\mathbb{H}_m$ be i.i.d. random matrices with $\mathbb{E}X=0,$ $\sigma_X^2:=\|\mathbb{E}X^2\|_{\infty}$ and $\|X\|_{\infty}\leq U$ a.s. for some $U>0.$ Then, for all $t\geq 0$ with probability at least $1-e^{-t},$
\begin{equation*}
 \biggl\|\frac{1}{n}\sum_{j=1}^nX_j\biggr\|_{\infty}\leq 2\biggl[\sigma_X\sqrt{\frac{t+\log(2m)}{n}}\bigvee U\frac{t+\log(2m)}{n}\biggr].
\end{equation*}
\end{lemma}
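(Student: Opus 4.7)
The plan is to follow the now-standard matrix Laplace transform method, which lifts the scalar Chernoff-Bernstein argument to the Hermitian matrix setting via Lieb's concavity theorem. Write $S_n=\sum_{j=1}^n X_j$ and observe that $\|S_n\|_{\infty}=\max\bigl(\lambda_{\max}(S_n),\lambda_{\max}(-S_n)\bigr)$, so a union bound reduces the task to a one-sided tail estimate for $\lambda_{\max}(S_n)$; the other tail is handled identically by replacing each $X_j$ with $-X_j$ (which has the same $\sigma_X^2$ and $U$).

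For the one-sided tail, I would start from the Markov-type bound $\mathbb{P}(\lambda_{\max}(S_n)\ge s)\le e^{-\theta s}\,\mathbb{E}\,\mathrm{tr}\,e^{\theta S_n}$ for $\theta>0$, and then invoke Lieb's concavity theorem (in the form of Tropp's master tail inequality) to reach
\begin{equation*}
\mathbb{E}\,\mathrm{tr}\,e^{\theta S_n}\le \mathrm{tr}\,\exp\!\Bigl(\sum_{j=1}^n \log \mathbb{E}\,e^{\theta X_j}\Bigr)=\mathrm{tr}\,\exp\!\bigl(n\log \mathbb{E}\,e^{\theta X}\bigr).
\end{equation*}
The next step is the operator-valued Bernstein moment generating function bound: since $\mathbb{E}X=0$ and $\|X\|_{\infty}\le U$, a Taylor expansion argument combined with the elementary inequality $e^x-1-x\le x^2\,g(\theta U)$ (evaluated spectrally at $X$) gives $\mathbb{E}\,e^{\theta X}\preccurlyeq I+g(\theta)\mathbb{E}X^2$ and hence $\log \mathbb{E}\,e^{\theta X}\preccurlyeq g(\theta)\mathbb{E}X^2$, where $g(\theta)=(e^{\theta U}-1-\theta U)/U^2$. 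Since $\mathrm{tr}\,e^A\le m\,e^{\lambda_{\max}(A)}$, this yields $\mathbb{E}\,\mathrm{tr}\,e^{\theta S_n}\le m\exp(n\sigma_X^2\,g(\theta))$.

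Putting the pieces together, $\mathbb{P}(\lambda_{\max}(S_n)\ge s)\le m\exp\bigl(-\theta s+n\sigma_X^2\,g(\theta)\bigr)$, and using the standard Bernstein estimate $g(\theta)\le \theta^2/\bigl(2(1-\theta U/3)\bigr)$ for $0<\theta<3/U$, optimization over $\theta$ yields the classical form
\begin{equation*}
\mathbb{P}\bigl(\lambda_{\max}(S_n)\ge s\bigr)\le m\,\exp\!\left(-\frac{s^2/2}{n\sigma_X^2+Us/3}\right).
\end{equation*}
A union bound over the two tails replaces $m$ with $2m$; setting the right-hand side equal to $e^{-t}$ and solving the resulting quadratic inequality produces two regimes, $s\lesssim \sqrt{n\sigma_X^2(t+\log 2m)}$ in the sub-Gaussian range and $s\lesssim U(t+\log 2m)$ in the sub-exponential range. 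Dividing through by $n$ and absorbing the numerical constants into the factor $2$ recovers the stated inequality.

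The main obstacle is the passage from $\mathbb{E}\,\mathrm{tr}\,\exp(\sum_j \theta X_j)$ to $\mathrm{tr}\,\exp(\sum_j \log\mathbb{E}\,e^{\theta X_j})$: this is precisely where non-commutativity bites, since matrix exponentials do not factor along sums of non-commuting matrices. This step is not elementary and relies on Lieb's concavity theorem applied inductively over $j$; in practice I would just cite it. Everything else (the scalar-type semidefinite mgf bound, the Chernoff optimization, and the union bound separating the two regimes) is routine once that tool is in hand.
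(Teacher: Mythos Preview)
Your proposal is correct and is precisely the standard matrix Laplace transform / Lieb concavity argument due to Tropp; the paper does not actually give its own proof of this lemma but simply cites \cite{tropp2012user}, whose proof you have faithfully reproduced. The only minor slip is notational (you write $g(\theta U)$ in one place but define $g(\theta)=(e^{\theta U}-1-\theta U)/U^2$), which does not affect the argument.
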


The proof of such bounds could be found, e.g., in \cite{tropp2012user}. Other versions 
on matrix Bernstein type inequalities for not necessarily bounded random matrices will be also used in what follows and they could be found in \cite{koltchinskii2011oracle}, \cite{koltchinskii2013remark}. A simple consequence of the inequality of Lemma 
\ref{matBernlem} is the following expectation bound:
$$
{\mathbb E}\biggl\|\frac{1}{n}\sum_{j=1}^nX_j\biggr\|_{\infty}
\lesssim \biggl[\sigma_X\sqrt{\frac{\log(2m)}{n}}\bigvee U\frac{\log(2m)}{n}\biggr].
$$
It follows from the exponential bound by integrating the tail probabilities.

The paper is organized as follows. 
In Section~\ref{minimaxsec}, minimax lower bounds on estimation error of low rank 
density matrices are provided in Schatten $p$-norm, Hellinger (Bures)
and Kullback-Leibler distances. In Section~\ref{vonneumann_1}, sharp low rank oracle inequalities for
von Neumann entropy penalized least squares estimator are derived in the case of trace 
regression model with bounded response. In Section~\ref{vonneumann_2},  low rank 
oracle inequalities are established in the case of trace regression with Gaussian noise.
In addition to this, in these two sections, upper bounds on estimation error with respect to Kullback-Leibler 
distance are obtained. In Section \ref{tildesec}, they are further developed and extended to other distances (Hellinger distance, 
Schatten $p$-norm distances for $p\in [1,2]$) showing the minimax optimality (up to logarithmic factors)
of the error rates of the least squares estimator with von Neumann entropy penalization.

\section{Minimax Lower Bounds}\label{minimaxsec}

In this section, we provide main results on the minimax lower bounds on the risk of estimation of density matrices
with respect to Schatten $p$-norm (or, rather $q$-norm in the notations used below) distances as well as Hellinger-Bures distance and Kullback-Leibler divergence.  

Minimax lower bounds will be derived for the class 
${\mathcal S}_{r,m}:=\{S\in {\mathcal S}_m: {\rm rank}(S)\leq r\}$ consisting of all density matrices 
of rank at most $r$ (the low rank case). 
We will start with the case of trace regression with Gaussian noise. Given that the sample 
$(X_1,Y_1), \dots, (X_n,Y_n)$
satisfies Assumption \ref{Gaussian_noise} with the target density matrix $\rho\in {\mathcal S}_m$ 
and noise variance $\sigma_{\xi}^2,$ let ${\mathbb P}_{\rho}$
denote the corresponding probability distribution.

Note that \cite{ma2013volume} developed a method of deriving minimax lower bounds for distances based on unitary invariant norms, including Schatten $p$-norms in matrix problems, and obtained such lower bounds, in particular,  in matrix completion problem.
The approach used in our paper is somewhat different and the aim is to develop such bounds under an additional constraint 
that the target matrix is a density matrix.
The resulting bounds are also somewhat different, they involve an additional 
term that does not depend on the rank, but does depend on $q.$ Essentially, it means that the ``complexity" of the problem 
is controlled by a ``truncated rank" $r \wedge \frac{1}{\tau},$ where $\tau=\frac{\sigma_{\xi}m^{3/2}}{\sqrt{n}}$ rather than 
by the actual rank $r.$ The upper bounds of Section \ref{tildesec} show 
that such a structure of the bound is, indeed, necessary. 
It should be also mentioned that minimax lower bounds on the nuclear norm error 
of estimation of density matrices have been obtained earlier in \cite{flammia2012quantum}
(see Remark \ref{Gross_Flammia} below).

\begin{theorem}
 \label{minmaxthm1}
For all $q\in [1,+\infty],$ there exist constants $c,c'>0$ such that,  the following bounds hold:  
\begin{equation}
 \label{minmaxthm1boundq}
 \underset{\hat{\rho}}{\inf}\underset{\rho\in\mathcal{S}_{r,m}}{\sup}\mathbb{P}_{\rho}
 \biggl\{\|\hat{\rho}-\rho\|_q\geq c\biggl(\frac{\sigma_{\xi}m^{\frac{3}{2}}r^{1/q}}{\sqrt{n}}\bigwedge 
 \biggl(\frac{\sigma_{\xi}m^{3/2}}{\sqrt{n}}\biggr)^{1-\frac{1}{q}}\bigwedge 1\biggr)\biggr\}\geq c',
\end{equation}
 \begin{equation}
 \label{minmaxthm1boundH}
 \underset{\hat{\rho}}{\inf}\underset{\rho\in\mathcal{S}_{r,m}}{\sup}\mathbb{P}_{\rho}\biggl\{H^2(\hat{\rho},\rho)\geq c\biggl(\frac{\sigma_{\xi}m^{\frac{3}{2}}r}{\sqrt{n}}\bigwedge 1\biggr)\biggr\}\geq c',
 \end{equation}
and
 \begin{equation}
 \label{minmaxthm1boundK}
 \underset{\hat{\rho}}{\inf}\underset{\rho\in\mathcal{S}_{r,m}}{\sup}\mathbb{P}_{\rho}\biggl\{K(\rho\|\hat{\rho})\geq c\biggl(\frac{\sigma_{\xi}m^{\frac{3}{2}}r}{\sqrt{n}}\bigwedge 1\biggr)\biggr\}\geq c',
 \end{equation}
 where $\inf_{\hat{\rho}}$ denotes the infimum over all estimators $\hat{\rho}$ in $\mathcal{S}_m$
 based on the data $(X_1,Y_1), \dots, (X_n,Y_n)$ satisfying the Gaussian trace regression 
 model with noise variance $\sigma_{\xi}^2.$
\end{theorem}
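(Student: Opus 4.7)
The plan is to establish each of the three bounds via Fano's lemma applied to a suitably constructed finite family of low rank density matrices. Under Assumption~\ref{Gaussian_noise}, a direct calculation with the Gaussian likelihood gives
$$
K({\mathbb P}_\rho^{\otimes n}\,\|\,{\mathbb P}_{\rho'}^{\otimes n}) = \frac{n}{2\sigma_\xi^2}\,\|\rho-\rho'\|_{L_2(\Pi)}^2 = \frac{n}{2\sigma_\xi^2 m^2}\|\rho-\rho'\|_2^2,
$$
since $\Pi$ is uniform on the orthonormal basis ${\mathcal E}$ and $\|A\|_{L_2(\Pi)}^2 = m^{-2}\|A\|_2^2$ as recorded in the preliminaries. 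Consequently, for each target distance $d$, it suffices to exhibit a subset $\{\rho_{\omega}\}_{\omega \in \Omega}\subset{\mathcal S}_{r,m}$ with pairwise $d$-separation $\geq 2s$ and $\sup_{\omega\neq\omega'}\|\rho_\omega-\rho_{\omega'}\|_2^2 \lesssim \sigma_\xi^2 m^2 \log|\Omega|/n$; Fano's inequality then lower-bounds the minimax $d$-risk by a constant multiple of $s$.

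For the Schatten $q$-norm bound (\ref{minmaxthm1boundq}), set $\tau := \sigma_\xi m^{3/2}/\sqrt{n}$ and fix an $r$-dimensional coordinate subspace $L_0 \subset {\mathbb C}^m$. The natural hypothesis class is
$$
\rho_\omega = r^{-1}P_{L_0} + \delta B_\omega, \quad \omega \in \Omega,
$$
where $\{B_\omega\}$ is a Varshamov--Gilbert packing of traceless Hermitian sign-pattern perturbations built from the symmetrized off-diagonal basis elements $e_ie_j^{\ast}+e_je_i^{\ast}$, chosen so that $\log|\Omega|$ is large and $\rho_\omega \in {\mathcal S}_{r',m}$ for $r' \asymp r$. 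Non-commutative Khintchine-type bounds on Rademacher sums of such building blocks control the Hilbert--Schmidt and Schatten $q$-norms of $B_\omega-B_{\omega'}$. Calibrating $\delta$ so that the KL budget $\sigma_\xi^2 m^2 \log|\Omega|/n$ is saturated yields a pairwise $q$-norm separation of order $\tau r^{1/q}$. In the complementary regime $\tau r \gtrsim 1$, the positivity constraint $\rho_\omega \succcurlyeq 0$ caps the effective number of usable singular directions at $r^{\ast} \asymp \tau^{-1}$ and the same construction then delivers the rank-independent rate $\tau^{1-1/q}$; the trivial ceiling $\|\rho-\rho'\|_q \leq 2$ for density matrices produces the final $\wedge 1$. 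For general $q$, the interpolation inequality of Lemma~\ref{interlem} is used to pass from endpoint values $q \in \{1,2,\infty\}$ to intermediate $q$.

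For the Hellinger (\ref{minmaxthm1boundH}) and Kullback--Leibler (\ref{minmaxthm1boundK}) bounds, I would use a variant of the same family in which $B_\omega$ is supported on $L_0$, so that the baseline $\rho_0 := r^{-1}P_{L_0}$ has all its nonzero eigenvalues equal to $1/r$ on the support of every perturbation. A second-order Taylor expansion of $S \mapsto K(\rho_0\|S)$ and $S \mapsto H^2(\rho_0,S)$ around $\rho_0$ in such on-support directions yields
$$
H^2(\rho_0,\rho_\omega) \asymp K(\rho_0\|\rho_\omega) \asymp r\,\delta^2\,\|B_\omega\|_2^2,
$$
which, combined with the saturated KL budget, reproduces the claimed rate $\tau r \wedge 1$ via Fano.

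The main technical obstacle is the simultaneous three-way balance, in the Fano construction for (\ref{minmaxthm1boundq}), between (i) feasibility $\rho_\omega \in {\mathcal S}_{r,m}$ (positivity, unit trace, rank constraint), (ii) the KL budget matching $\log|\Omega|$, and (iii) a sharp lower bound on the pairwise Schatten $q$-norm separation. The tension between (i) and (iii) is precisely what produces the ``truncated rank'' $r \wedge \tau^{-1}$ in the statement, distinguishing the density-matrix problem from the unconstrained low-rank matrix completion setting of \cite{ma2013volume}. A secondary difficulty is verifying, in the Hellinger/KL case, that the quadratic term in the Taylor expansion at the non-scalar $\rho_0$ is correctly captured despite non-commutativity between $\rho_0$ and the perturbation $B_\omega$; supporting $B_\omega$ on $L_0$ is what makes this step tractable.
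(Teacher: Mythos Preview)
Your KL computation and the Fano framework match the paper. However, there is a genuine gap in your treatment of the Hellinger and Kullback--Leibler bounds (\ref{minmaxthm1boundH}) and (\ref{minmaxthm1boundK}).

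Your proposed family $\rho_\omega = r^{-1}P_{L_0}+\delta B_\omega$ with $B_\omega$ supported on $L_0$, analyzed via a second-order Taylor expansion, yields a pairwise separation $H^2(\rho_\omega,\rho_{\omega'})\asymp r\,\delta^2\|B_\omega-B_{\omega'}\|_2^2$, i.e.\ \emph{quadratic} in the perturbation size. Once you saturate the Fano KL budget $\delta^2\|B_\omega-B_{\omega'}\|_2^2\asymp \sigma_\xi^2 m^2\log|\Omega|/n$ with $\log|\Omega|\lesssim r^2$ (the on-support construction only has $O(r^2)$ degrees of freedom), you obtain $H^2\asymp r^3\sigma_\xi^2 m^2/n = (\tau r)^2\cdot r/m$, which is strictly smaller than the target $\tau r$ throughout the nontrivial regime $\tau r\leq 1$. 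The Taylor approach is inherently confined to the locally-quadratic regime of the Hellinger distance and cannot recover the correct linear-in-$\tau r$ rate.

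The paper's construction bypasses this by using a block-diagonal family
\[
S_Q=\begin{pmatrix}1-\kappa & 0\\ 0 & \kappa\,Q/(r-1)\end{pmatrix},\qquad Q\in{\mathcal D}\subset\mathfrak{P}_{r-1,m-1},
\]
where ${\mathcal D}$ is a Schatten $q$-norm packing of the Grassmannian $\mathfrak{P}_{r-1,m-1}$ of cardinality $\geq 2^{(r-1)(m-r)}$ (Pajor's lemma). The crucial identity $H^2(S_{Q_1},S_{Q_2})=\kappa\, H^2\bigl(\tfrac{Q_1}{r-1},\tfrac{Q_2}{r-1}\bigr)$, together with $H^2\bigl(\tfrac{Q_1}{r-1},\tfrac{Q_2}{r-1}\bigr)\geq \tfrac{1}{4(r-1)^2}\|Q_1-Q_2\|_1^2\gtrsim 1$, gives $H^2$ \emph{linear} in the mass parameter $\kappa\asymp \tau r$. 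This linearity is not a Taylor phenomenon: it comes from placing the mass $\kappa$ on subspaces that are macroscopically separated, so that the fidelity of the lower block is bounded away from $1$.

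For the Schatten $q$-norm bound your Varshamov--Gilbert/Khintchine route is in principle viable, but as written your family $r^{-1}P_{L_0}+\delta B_\omega$ with off-diagonal $B_\omega$ linking $L_0$ to $L_0^\perp$ is never positive semidefinite (the lower-right block vanishes), so $\rho_\omega\notin{\mathcal S}_m$; fixing this while retaining rank $\asymp r$ and sharp $q$-norm separation is nontrivial. The paper's Grassmannian packing handles feasibility, rank, and the $q$-norm separation simultaneously and uniformly in $q\in[1,\infty]$ via Pajor's bound, with no need for interpolation or Khintchine estimates.
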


\begin{proof} 
A couple of preliminary facts will be needed in the proof. We start with bounds on the packing numbers 
of Grassmann manifold $\mathcal{G}_{k,l},$ which is the set of all $k$-dimensional subspaces $L$ of the $l$-dimensional space $\mathbb{R}^l.$ Given such a subspace $L\subset \mathbb{R}^l$ with ${\rm dim}(L)=k,$
let $P_L$ be the orthogonal projection onto $L$ and let ${\mathfrak P}_{k,l}:=\{P_L: L\in {\mathcal G}_{k,l}\}.$
The set of all $k$-dimensional projectors  ${\mathfrak P}_{k,l}$ will be equipped with Schatten $q$-norm distances 
for all $q\in [1,+\infty]$ (which also could be viewed as distances on the Grassmannian itself):
$d_q(Q_1,Q_2):=\|Q_1-Q_2\|_q, Q_1,Q_2\in {\mathfrak P}_{k,l}.$ 
Recall that the $\eps$-\textit{packing number} of a metric space $(T,d)$ is defined as
\begin{equation*}
 D(T,d,\eps)=\max\Big\{n: \textrm{there are } t_1,\ldots,t_n\in T, \textrm{such that }\underset{i\neq j}{\min}\ d(t_i,t_j)>\eps\Big\}.
\end{equation*}

The following lemma (see \citealt[Proposition 8]{pajor1998metric})
will be used to control the packing numbers of ${\mathfrak P}_{k,l}$ with respect 
to Schatten distances $d_q.$

\begin{lemma}
\label{pajorlem}
 For all integer $1\leq k\leq l$ such that $k\leq l-k$, and all $1\leq q\leq\infty,$
 the following bounds hold
 \begin{equation}
 \left(\frac{c}{\eps}\right)^d\leq D({\mathfrak P}_{k,l},d_q,\eps k^{1/q})\leq \left(\frac{C}{\eps}\right)^d,\ \eps>0
 \end{equation}
with $d=k(l-k)$ and universal positive constants $c,C.$
\end{lemma}

In addition to this, we need the following well known information-theoretic bound frequently 
used in derivation of minimax lower bounds (see \citealt[Theorem 2.5]{intro}).
Let $\Theta=\{\theta_0,\theta_1,\dots, \theta_M\}$ be a finite parameter space equipped 
with a metric $d$ and let ${\mathcal P}:=\{{\mathbb P}_{\theta}:\theta \in \Theta\}$ be 
a family of probability distributions in some sample space.
Given ${\mathbb P}, {\mathbb Q}\in {\mathcal P},$ let 
$K({\mathbb P}\|{\mathbb Q}):={\mathbb E}_{\mathbb P}\log\frac{d{\mathbb P}}{d{\mathbb Q}}$
be the Kullback-Leibler divergence between ${\mathbb P}$ and ${\mathbb Q}.$ 

\begin{proposition}
\label{fanothm}
 Suppose that the following conditions hold:
 \begin{enumerate}[(i)]
 \item for some $s>0,$ $d(\theta_j,\theta_k)\geq 2s>0, 0\leq j<k\leq M;$
 \item for some $0<\alpha<1/8,$ 
 $\frac{1}{M}\sum\limits_{j=1}^MK(\mathbb{P}_{\theta_j}\|\mathbb{P}_{\theta_0})\leq \alpha\log M$
 \end{enumerate}
 Then, for a positive constant $c_{\alpha},$
 \begin{equation*}
 \underset{\hat{\theta}}{\inf}\underset{\theta\in\Theta}{\sup} \mathbb{P}_{\theta}\{d(\hat{\theta},\theta)\geq s\}\geq c_{\alpha},
 \end{equation*}
 where the infimum is taken over all estimators $\hat \theta\in \Theta$ based on an observation 
sampled from ${\mathbb P}_{\theta}.$ 
\end{proposition}

We now turn to the actual proof of Theorem \ref{minmaxthm1}.
Under Assumption \ref{Gaussian_noise}, 
the following computation is well known: for $\rho_1, \rho_2\in {\mathcal S}_{r,m},$
\begin{equation}
\label{kldiv}
\begin{split}
 K(\mathbb{P}_{\rho_1}\|\mathbb{P}_{\rho_2})&=\mathbb{E}_{\mathbb{P}_{\rho_1}}\log\frac{\mathbb{P}_{\rho_1}}{\mathbb{P}_{\rho_2}}\biggl(X_1,Y_1,\ldots,X_n,Y_n\biggr)\\
&=\mathbb{E}_{\mathbb{P}_{\rho_1}}\sum\limits_{j=1}^n\Big[-\frac{(Y_j-\left<\rho_1,X_j\right>)^2}{2\sigma_{\xi}^2}+\frac{(Y_j-\left<\rho_2,X_j\right>)^2}{2\sigma_{\xi}^2} \Big]\\
&=\mathbb{E}\sum\limits_{j=1}^n \frac{\left<\rho_1-\rho_2,X_j\right>^2}{2\sigma_\xi^2}=\frac{n}{2\sigma_{\xi}^2}\|\rho_1-\rho_2\|_{L_2(\Pi)}^2.
\end{split}
\end{equation}

It is enough to prove the bounds for $2\leq r\leq m/2.$ The proof in the case $r=1$ is simpler and the case $r>m/2$ easily reduces to the case $r\leq m/2.$ We will use Lemma \ref{pajorlem} to construct a well separated (with respect to $d_q$) subset of density 
matrices in ${\mathcal S}_{r,m}.$ To this end, first choose a subset ${\mathcal D}_q\subset \mathfrak{P}_{r-1,m-1}$
such that ${\rm card}({\mathcal D}_q)\geq 2^{(r-1)(m-r)}$ and, for some constant $c',$ 
$\|Q_1-Q_2\|_q \geq c' (r-1)^{1/q},$
$Q_1,Q_2\in \mathfrak{P}_{r-1,m-1}, Q_1\neq Q_2.$ Such a choice is possible due to the lower bound on the packing numbers of Lemma \ref{pajorlem}. For $Q\in {\mathcal D}_q$ (note that $Q$ can be viewed as an $(m-1)\times (m-1)$
matrix with real entries) and $\kappa\in (0,1),$ consider the following $m\times m$ matrix 
\begin{equation}
\label{construct1}
 S=S_Q=\left(
 \begin{array}{cc}
  1-\kappa&\bf{0}'\\
\bf{0}&\kappa \frac{Q}{r-1}
 \end{array}
 \right).
\end{equation}
Note that $S$ is symmetric positively-semidefinite real matrix of unit trace. It is straightforward 
to check that it defines a Hermitian positively-semidefinite operator in ${\mathbb C}^m$ of unit trace,
and it can be identified with a density matrix $S\in {\mathcal S}_m.$ Clearly, $S$ is of rank $r,$ so,
$S\in {\mathcal S}_{r,m}.$ 

We will take $\kappa:= c_1\frac{\sigma_{\xi}m^{3/2}(r-1)}{\sqrt{n}}$ with a small enough absolute constant $c_1>0$ and first assume that $\kappa<1$ (as it is needed in definition Equation \ref{construct1}).

Let ${\mathcal S}_q' := \{S_Q: Q\in {\mathcal D}_q\}$  and consider a family of 
$M+1={\rm card}({\mathcal D}_q)\geq 2^{(r-1)(m-r)}$ distributions $\{{\mathbb P}_S: S\in {\mathcal S}_q'\}.$
It is immediate 
that for $S_1=S_{Q_1}, S_2=S_{Q_2},$ $Q_1,Q_2\in {\mathcal D}_q, Q_1\neq Q_2,$ we have 
\begin{equation}
\label{separ}
\begin{split}
&
\|S_1-S_2\|_q=\frac{\kappa}{r-1} \|Q_1-Q_2\|_q\geq c'\kappa (r-1)^{1/q-1}\\
&\geq  c' c_1\frac{\sigma_{\xi}m^{3/2}(r-1)^{1/q}}{\sqrt{n}}\geq 
c\frac{\sigma_{\xi}m^{3/2}r^{1/q}}{\sqrt{n}}
\end{split}
\end{equation}
with some constant $c>0,$ implying condition (i) of Proposition \ref{fanothm}
with $s=\frac{c}{2}\frac{\sigma_{\xi}m^{3/2}r^{1/q}}{\sqrt{n}}.$

We will now check its condition (ii) . In view of (\ref{kldiv}), we have, for all 
$S_1=S_{Q_1},S_2=S_{Q_2}\in {\mathcal S}_q',$ 
\begin{equation} 
\label{fanothm2}
\begin{split}
K(\mathbb{P}_{S_1}\|\mathbb{P}_{S_2})&=\frac{n}{2\sigma_{\xi}^2}\|S_1-S_2\|_{L_2(\Pi)}^2=
\frac{n}{2\sigma_{\xi}^2m^2}\|S_1-S_2\|_2^2\\
&=\frac{n\kappa^2}{2\sigma_{\xi}^2m^2 (r-1)^2}\|Q_1-Q_2\|_2^2
\leq\frac{4n(r-1)\kappa^2}{2\sigma_{\xi}^2m^2(r-1)^2}=2c_1^2m(r-1)\\	
&\leq\alpha m(r-1)/\log(2)/4\leq \frac{\alpha}{2} (r-1)(m-r)\log(2)\leq \alpha\log M,
\end{split}
\end{equation}
provided that constant $c_1$ is small enough, so, condition (ii) of Proposition \ref{fanothm}
is also satisfied. 
Proposition \ref{fanothm} implies that, under the assumption $\kappa=c_1\frac{\sigma_{\xi}m^{3/2}(r-1)}{\sqrt{n}}<1,$
the following minimax lower bound holds for some $c, c'>0:$
\begin{equation}
\label{lower_first}
\underset{\hat{\rho}}{\inf}\underset{\rho\in\mathcal{S}_{r,m}}{\sup}\mathbb{P}_{\rho}
\biggl\{\|\hat{\rho}-\rho\|_q\geq c\frac{\sigma_{\xi}m^{\frac{3}{2}}r^{1/q}}{\sqrt{n}}\biggr\}\geq c'.
\end{equation}
In the case when 
$$ 
c_1\frac{\sigma_{\xi}m^{3/2}}{\sqrt{n}}<1\leq c_1\frac{\sigma_{\xi}m^{3/2}(r-1)}{\sqrt{n}},
$$
one can choose $2\leq r'<r-1$ such that, for some constant $c_2>0,$ 
$$c_2<c_1\frac{\sigma_{\xi}m^{3/2}(r'-1)}{\sqrt{n}}<1.$$
For such a choice of $r',$ it follows from (\ref{lower_first}) that 
\begin{equation}
\label{lower_first''}
\underset{\hat{\rho}}{\inf}\underset{\rho\in\mathcal{S}_{r',m}}{\sup}\mathbb{P}_{\rho}
\biggl\{\|\hat{\rho}-\rho\|_q\geq c\frac{\sigma_{\xi}m^{\frac{3}{2}}(r')^{1/q}}{\sqrt{n}}\biggr\}\geq c'.
\end{equation}
The definition of $r'$ implies that  
$$
r'\asymp r'-1\asymp \biggl(\frac{\sigma_{\xi}m^{3/2}}{\sqrt{n}}\biggr)^{-1}.
$$
Therefore, 
$$
\frac{\sigma_{\xi}m^{\frac{3}{2}}(r')^{1/q}}{\sqrt{n}}\asymp \biggl(\frac{\sigma_{\xi}m^{3/2}}{\sqrt{n}}\biggr)^{1-1/q},
$$
and, since ${\mathcal S}_{r',m}\subset {\mathcal S}_{r,m},$ bound (\ref{lower_first''}) yields 
\begin{equation}
\label{lower_second}
\underset{\hat{\rho}}{\inf}\underset{\rho\in\mathcal{S}_{r,m}}{\sup}\mathbb{P}_{\rho}
\biggl\{\|\hat{\rho}-\rho\|_q\geq c\biggl(\frac{\sigma_{\xi}m^{3/2}}{\sqrt{n}}\biggr)^{1-1/q}\biggr\}
\geq 
\underset{\hat{\rho}}{\inf}\underset{\rho\in\mathcal{S}_{r',m}}{\sup}\mathbb{P}_{\rho}
\biggl\{\|\hat{\rho}-\rho\|_q\geq c\biggl(\frac{\sigma_{\xi}m^{3/2}}{\sqrt{n}}\biggr)^{1-1/q}\biggr\}\geq c'
\end{equation}
for some constants $c,c'>0.$
This allows us to recover the second term in the minimum in bound (\ref{minmaxthm1boundq}).
Finally, in the case when 
$
c_1\frac{\sigma_{\xi}m^{3/2}}{\sqrt{n}}>1,
$
the minimax lower bound becomes a constant 
(and the proof is based on a simplified version of the above argument that could be done 
for $r=1$). This completes the proof of bound (\ref{minmaxthm1boundq}) for Schatten $q$-norms.

The proof of bound (\ref{minmaxthm1boundH}) for the Hellinger distance is similar. 
In the case $r\geq 2,$ we will use a ``well separated" set of density matrices ${\mathcal S}_q'\subset {\mathcal S}_{r,m}$
for $q=1$ constructed above. We still use $\kappa:= c_1\frac{\sigma_{\xi}m^{3/2}(r-1)}{\sqrt{n}}$
assuming first that $\kappa\in (0,1).$  
For $S_{Q_1}, S_{Q_2}\in {\mathcal S}_q'$ with $Q_1\neq Q_2,$
it follows by a simple computation and using bound (\ref{khtracelem}) that, for some $c''>0,$ 
\begin{equation*}
\begin{split}
&
H^2(S_{Q_1}, S_{Q_2})= \kappa H^2\Bigl(\frac{Q_1}{r-1},\frac{Q_2}{r-1}\Bigr)
\\
&
\geq \frac{1}{4}\frac{\kappa}{(r-1)^2} \|Q_1-Q_2\|_1^2\geq \frac{(c')^2}{4} \kappa \geq c''\frac{\sigma_{\xi}m^{3/2}(r-1)}{\sqrt{n}}.
\end{split}
\end{equation*}
Repeating the argument based on Proposition \ref{fanothm} yields bound (\ref{minmaxthm1boundH}) in the 
case when $\kappa= c_1\frac{\sigma_{\xi}m^{3/2}(r-1)}{\sqrt{n}}<1,$ and in the opposite case it is easy 
to see that the lower bound is a constant.  
 
Finally, bound (\ref{minmaxthm1boundK}) for the Kullback--Leibler divergence follows from (\ref{minmaxthm1boundH})
and the inequality $K(\rho\|\hat \rho)\geq H^2(\hat \rho, \rho)$ (see inequality \ref{khtracelem}).
 
\end{proof}

Next we state similar results in the case of trace regression model with bounded response (see Assumption \ref{bounded_response}).
Denote by ${\mathcal P}_{r,m}(\bar U)$ 
the class of all distributions $P$
of $(X,Y)$ such that Assumption \ref{bounded_response} holds for some $\bar U$ 
and ${\mathbb E}(Y|X)=\langle \rho_P,X\rangle$ for some $\rho_P\in {\mathcal S}_{r,m}.$
Given $P,$ ${\mathbb P}_P$ denotes the corresponding 
probability measure (such that $(X_1,Y_1),\ldots,(X_n,Y_n)$ are i.i.d. copies of $(X,Y)$ sampled from $P$).

\begin{theorem}
\label{minmaxthm3}
Suppose $\bar U\geq 2U.$
For all $q\in [1,+\infty],$ there exist absolute constants $c,c'>0$ such that the following bounds hold:
\begin{equation}
 \label{minmaxthm1boundq_U}
 \underset{\hat{\rho}}{\inf}\underset{P\in\mathcal{P}_{r,m}(\bar U)}{\sup}\mathbb{P}_{P}
 \biggl\{\|\hat{\rho}-\rho_P\|_q\geq c\biggl(\frac{\bar U m^{\frac{3}{2}}r^{1/q}}{\sqrt{n}}\bigwedge \biggl(\frac{\bar U m^{3/2}}{\sqrt{n}}\biggr)^{1-\frac{1}{q}}\bigwedge 1\biggr)\biggr\}\geq c',
\end{equation}
 \begin{equation}
 \label{minmaxthm1boundH_U}
 \underset{\hat{\rho}}{\inf}\underset{P\in\mathcal{P}_{r,m}(\bar U)}{\sup}\mathbb{P}_{P}\biggl\{H^2(\hat{\rho},\rho_P)\geq c\biggl(\frac{\bar U m^{\frac{3}{2}}r}{\sqrt{n}}\bigwedge 1\biggr)\biggr\}\geq c',
 \end{equation}
and
 \begin{equation}
 \label{minmaxthm1boundK_U}
 \underset{\hat{\rho}}{\inf}\underset{P\in\mathcal{P}_{r,m}(\bar U)}{\sup}\mathbb{P}_{P}\biggl\{K(\rho_P\|\hat{\rho})\geq c\biggl(\frac{\bar U m^{\frac{3}{2}}r}{\sqrt{n}}\bigwedge 1\biggr)\biggr\}\geq c',
 \end{equation}
 where $\inf_{\hat{\rho}}$ denotes the infimum over all estimators $\hat{\rho}$ in $\mathcal{S}_m$
 based on the data $(X_1,Y_1), \dots, (X_n,Y_n).$ 
\end{theorem}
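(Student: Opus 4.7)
The plan is to recycle the argument of Theorem~\ref{minmaxthm1} verbatim, replacing only the KL-divergence computation. Step 1 is to reuse the same well-separated family $\mathcal{S}'_q = \{S_Q : Q\in \mathcal{D}_q\}\subset \mathcal{S}_{r,m}$ constructed from the Grassmannian packing $\mathcal{D}_q\subset \mathfrak{P}_{r-1,m-1}$ (Lemma~\ref{pajorlem}), with the same block-diagonal form as in Equation~\ref{construct1} and $\kappa := c_1 \bar U m^{3/2}(r-1)/\sqrt{n}$, assuming first that $\kappa\in (0,1)$. The Schatten and Hellinger separation estimates used to verify condition (i) of Proposition~\ref{fanothm} are identical to those in the Gaussian case, so they carry over without change.

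Step 2 is the construction that makes the bounded-response model applicable. For each $S\in \mathcal{S}'_q$, take the conditional distribution of $Y$ given $X$ to be the two-point law
\begin{equation*}
\mathbb{P}_S\bigl(Y = \pm \bar U \mid X\bigr) = \frac{1}{2}\pm \frac{\langle S, X\rangle}{2\bar U}.
\end{equation*}
Then $\mathbb{E}(Y|X) = \langle S, X\rangle$ and $|Y|\leq \bar U$ a.s., so the induced distribution $P_S$ lies in $\mathcal{P}_{r,m}(\bar U)$. The assumption $\bar U\geq 2U$ guarantees that the Bernoulli parameter is in $[1/4, 3/4]$ for every $X\in\mathcal{E}$, since $|\langle S, X\rangle|\leq \|S\|_1\|X\|_\infty \leq U \leq \bar U/2$. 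On this range of parameters the Bernoulli KL divergence satisfies $\mathrm{KL}(\mathrm{Ber}(p_1)\|\mathrm{Ber}(p_2))\lesssim (p_1-p_2)^2$ (a standard second-order Taylor bound, since the second derivative of $p\log\frac{p}{q} + (1-p)\log\frac{1-p}{1-q}$ in $q$ is controlled on compact subsets of $(0,1)$). Tensorizing over the $n$ i.i.d. observations and integrating over $X\sim \Pi$ then yields
\begin{equation*}
K\bigl(\mathbb{P}_{S_1}\|\mathbb{P}_{S_2}\bigr) \;\lesssim\; \frac{n}{\bar U^2}\,\mathbb{E}\langle S_1-S_2, X\rangle^2 \;=\; \frac{n}{\bar U^2 m^2}\,\|S_1-S_2\|_2^2,
\end{equation*}
which is exactly the Gaussian bound (\ref{fanothm2}) with $\sigma_\xi^2$ replaced by a constant multiple of $\bar U^2$.

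Step 3 is to plug this KL bound into the same calculation as in (\ref{fanothm2}): with $\|Q_1-Q_2\|_2^2\leq 4(r-1)$ and $\kappa = c_1\bar U m^{3/2}(r-1)/\sqrt{n}$, choosing $c_1$ small enough yields condition (ii) of Proposition~\ref{fanothm}, and hence the lower bound $\|\hat\rho-\rho\|_q\gtrsim \bar U m^{3/2}r^{1/q}/\sqrt{n}$ in the regime $\kappa<1$. The truncation argument used in Theorem~\ref{minmaxthm1} (replacing $r$ by an $r'$ for which $c_1\bar U m^{3/2}(r'-1)/\sqrt{n}\asymp 1$) recovers the second term $(\bar U m^{3/2}/\sqrt{n})^{1-1/q}$ in (\ref{minmaxthm1boundq_U}); the regime $\bar U m^{3/2}/\sqrt{n}\gtrsim 1$ produces the trivial constant bound capped at $1$. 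The Hellinger bound (\ref{minmaxthm1boundH_U}) follows from the same separation computation $H^2(S_{Q_1}, S_{Q_2})\gtrsim \kappa$ as in the Gaussian proof, and (\ref{minmaxthm1boundK_U}) is an immediate consequence of (\ref{minmaxthm1boundH_U}) via the inequality $K(\rho\|\hat\rho)\geq H^2(\hat\rho,\rho)$ from (\ref{khtracelem}).

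The main obstacle is the KL bound in Step 2: one needs a concrete choice of conditional distribution $Y\mid X$ that realizes the trace-regression constraint with the prescribed $\bar U$ while keeping the per-observation KL divergence of the right order $(\langle S_1-S_2, X\rangle/\bar U)^2$. The hypothesis $\bar U\geq 2U$ is precisely what keeps the Bernoulli parameters bounded away from $\{0,1\}$ so that the quadratic bound on Bernoulli KL divergence applies; without such a condition the divergence would blow up at the boundary and the parallel with the Gaussian case would fail.
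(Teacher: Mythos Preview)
Your proposal is correct and matches the paper's proof essentially step for step: the same two-point conditional law $Y=\pm\bar U$ with probabilities $\frac12\pm\frac{\langle S,X\rangle}{2\bar U}$, the same use of $\bar U\geq 2U$ to confine the Bernoulli parameter to $[1/4,3/4]$, the same quadratic bound on the Bernoulli KL divergence yielding $K(\mathbb P_{S_1}\|\mathbb P_{S_2})\lesssim \frac{n}{\bar U^2}\|S_1-S_2\|_{L_2(\Pi)}^2$, and the same recycling of the remainder of the Gaussian argument.
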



\begin{proof}
The proof relies on an idea already used in a context of matrix completion by \cite{koltchinskii2011nuclear} (see their Theorem 7).
We need the same family ${\mathcal S}_q'\subset {\mathcal S}_{r,m}$ 
of ``well separated" density matrices of rank $r$ as in the proof of 
Theorem  \ref{minmaxthm1}. 
For a density matrix $\rho,$
let $(X,Y)$ be a random couple such that $X$ is sampled from 
the uniform distribution $\Pi$ in ${\mathcal E}$ and, conditionally on 
$X,$ $Y$ takes value $+\bar U$ with probability $p_{\rho}(X):=\frac{1}{2}+\frac{\langle \rho,X\rangle}{2\bar U}$
and value $-\bar U$ with probability $q_{\rho}(X):=\frac{1}{2}-\frac{\langle \rho,X\rangle}{2\bar U}.$
Since $\bar U\geq 2U$ and $|\langle \rho,X\rangle|\leq \|\rho\|_1\|X\|_{\infty}\leq U,$
we have $p_{\rho}(X), q_{\rho}(X)\in [1/4, 3/4]$ (so, they are bounded away
from $0$ and from $1$). Clearly, ${\mathbb E}_{\rho}(Y|X)=\langle \rho,X\rangle.$
Let $P_{\rho}$ denote the distribution of such a 
couple and ${\mathbb P}_{\rho}$ denote the corresponding distribution 
of the data $(X_1,Y_1), \dots, (X_n,Y_n).$ Then, for all $\rho\in {\mathcal S}_{r,m},$
$P_{\rho}\in \mathcal{P}_{r,m}(\bar U).$ The only difference with the proof of 
Theorem  \ref{minmaxthm1} is in the bound on Kullback-Leibler divergence $K({\mathbb P}_{\rho_1}\|{\mathbb P}_{\rho_2})$
(see Equation \ref{kldiv}). It is easy to see that  
\begin{equation}
\label{KL-binary}
K(\mathbb{P}_{\rho_1}\|\mathbb{P}_{\rho_2})=n\mathbb{E}\left(p_{\rho_1}(X)\log\frac{p_{\rho_1}(X)}{p_{\rho_2}(X)}
+q_{\rho_1}(X)\log\frac{q_{\rho_1}(X)}{q_{\rho_2}(X)}\right).
\end{equation}
The following simple inequality will be used: for all $a,b\in [1/4,3/4],$ 
\begin{equation*}
 a\log\frac{a}{b}+(1-a)\log\frac{1-a}{1-b}\leq 12(a-b)^2.
\end{equation*}
It implies that 
\begin{equation*}
 K(\mathbb{P}_{\rho_1}\|\mathbb{P}_{\rho_2})\leq 3n\mathbb{E}\frac{\left<\rho_1-\rho_2,X\right>^2}{\bar U^2}\leq \frac{3n}{\bar U^2}\|\rho_1-\rho_2\|_{L_2(\Pi)}^2.
\end{equation*}
This bound is used instead of identity (\ref{kldiv}) from the proof of Theorem~\ref{minmaxthm1}.
The rest of the proof is the same.
\end{proof}

Note that the proof requires the possible range $[-\bar U,\bar U]$ of response variable $Y$
to be larger than the possible range $[-U,U]$ of Fourier coefficients $\langle \rho,E_j\rangle, j=1,\dots, m^2.$
This is not the case for standard QST model described in the introduction (see also the example of Pauli 
measurements) and it is of interest to prove a version of minimax lower bounds without this constraint, 
including the case when $\bar U=U.$ The following theorem is a result in this direction.

\begin{theorem}
\label{minmaxthm3'''}
Suppose Assumption \ref{orthonorm} is satisfied and, moreover, 
for some constant $\gamma \in (0,1),$
\begin{equation}
\label{traceEj}
\Bigl|{\rm tr}(E_k)\Bigr|\leq (1-\gamma)U m,\ k=1,\dots, m^2.
\end{equation}
Then, for all $q\in [1,+\infty],$ there exist constants $c_{\gamma},c_{\gamma}'>0$ such that the following bounds hold:
\begin{equation}
 \label{minmaxthm1boundq_U'''}
 \underset{\hat{\rho}}{\inf}\underset{P\in\mathcal{P}_{r,m}(U)}{\sup}\mathbb{P}_{P}
 \biggl\{\|\hat{\rho}-\rho_P\|_q\geq c_{\gamma}\biggl(\frac{U m^{\frac{3}{2}}r^{1/q}}{\sqrt{n}}\bigwedge \biggl(\frac{U m^{3/2}}{\sqrt{n}}\biggr)^{1-\frac{1}{q}}\bigwedge 1\biggr)\biggr\}\geq c_{\gamma}',
\end{equation}
 \begin{equation}
 \label{minmaxthm1boundH_U'''}
 \underset{\hat{\rho}}{\inf}\underset{P\in\mathcal{P}_{r,m}(U)}{\sup}\mathbb{P}_{P}\biggl\{H^2(\hat{\rho},\rho_P)\geq c_{\gamma}\biggl(\frac{U m^{\frac{3}{2}}r}{\sqrt{n}}\bigwedge 1\biggr)\biggr\}\geq c_{\gamma}',
 \end{equation}
and
 \begin{equation}
 \label{minmaxthm1boundK_U'''}
 \underset{\hat{\rho}}{\inf}\underset{P\in\mathcal{P}_{r,m}(U)}{\sup}\mathbb{P}_{P}\biggl\{K(\rho_P\|\hat{\rho})\geq c_{\gamma}\biggl(\frac{U m^{\frac{3}{2}}r}{\sqrt{n}}\bigwedge 1\biggr)\biggr\}\geq c_{\gamma}',
 \end{equation}
 where $\inf_{\hat{\rho}}$ denotes the infimum over all estimators $\hat{\rho}$ in $\mathcal{S}_m$
 based on the data $(X_1,Y_1), \dots, (X_n,Y_n).$ 
\end{theorem}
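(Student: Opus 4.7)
The plan is to adapt the proof of Theorem~\ref{minmaxthm3}. The only place that argument used $\bar U\geq 2U$ was the step ensuring $p_\rho(X), q_\rho(X)\in[1/4,3/4]$, so that the inequality $a\log(a/b)+(1-a)\log((1-a)/(1-b))\leq 12(a-b)^2$ applied. With $\bar U=U$ the binary response has $p_\rho(X)=1/2+\langle \rho,X\rangle/(2U)$, which can reach $0$ or $1$. To get around this, I will use assumption~(\ref{traceEj}) to build a packing of rank-$r$ density matrices whose Fourier coefficients $\langle S,E_k\rangle$ all lie strictly below $U$ in absolute value, so that the corresponding binary probabilities remain in a sub-interval of the form $[\gamma/8,1-\gamma/8]$ and the rest of the argument of Theorem~\ref{minmaxthm3} carries over with $\gamma$-dependent constants.

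The key new ingredient is the existence of a unit vector $v\in\mathbb{C}^m$ satisfying $|\langle v,E_kv\rangle|\leq(1-\gamma/2)U$ simultaneously for every $k=1,\dots,m^2$. Identifying $\mathbb{C}^m$ with $\mathbb{R}^{2m}$ and taking $v\sim\mathrm{Haar}(S^{2m-1})$, one has $\mathbb{E}\langle v,E_kv\rangle=\mathrm{tr}(E_k)/m$, which is bounded by $(1-\gamma)U$ by hypothesis~(\ref{traceEj}). Since the map $v\mapsto\langle v,E_kv\rangle$ is $2U$-Lipschitz in Euclidean norm, L\'evy's concentration yields $\mathbb{P}\{|\langle v,E_kv\rangle-\mathrm{tr}(E_k)/m|\geq\gamma U/2\}\leq 2\exp(-cm\gamma^2)$, and a union bound over $k$ produces such a $v$ whenever $m\geq m_0(\gamma)$. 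For $m<m_0(\gamma)$ the claimed bound is a $\gamma$-dependent constant and can be absorbed into $c_\gamma$ using the $\wedge 1$ clause.

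With $v$ fixed, I mimic the construction of Theorem~\ref{minmaxthm1} anchored at $vv^\ast$ rather than at $e_1e_1^\ast$. Let $\mathcal{D}_q$ be a Pajor packing (Lemma~\ref{pajorlem}) of rank-$(r-1)$ projections supported in $v^\perp\cong\mathbb{C}^{m-1}$, with cardinality at least $2^{(r-1)(m-r)}$ and $\|Q_i-Q_j\|_q\geq c'(r-1)^{1/q}$, and define
\begin{equation*}
S_Q:=(1-\kappa)vv^\ast+\frac{\kappa}{r-1}Q,\qquad Q\in\mathcal{D}_q,
\end{equation*}
with $\kappa:=c_1 Um^{3/2}(r-1)/\sqrt{n}$ and $c_1$ chosen so that $\kappa\leq\min(1,\gamma/4)$. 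Because the nonzero eigenvalues of $S_Q$ are $1-\kappa$ on $v$ and $\kappa/(r-1)$ on the range of $Q$, each $S_Q$ is a density matrix in $\mathcal{S}_{r,m}$. The block-diagonal structure in the basis $v,v^\perp$ reproduces verbatim the separations of Theorem~\ref{minmaxthm1}: $\|S_{Q_1}-S_{Q_2}\|_q\gtrsim\kappa(r-1)^{1/q-1}$ and $H^2(S_{Q_1},S_{Q_2})=\kappa H^2(Q_1/(r-1),Q_2/(r-1))\gtrsim\kappa$. Moreover, using $\|Q\|_1=r-1$,
\begin{equation*}
|\langle S_Q,E_k\rangle|\leq(1-\kappa)|\langle v,E_kv\rangle|+\frac{\kappa}{r-1}\|Q\|_1\|E_k\|_\infty\leq(1-\kappa)(1-\gamma/2)U+\kappa U\leq(1-\gamma/4)U
\end{equation*}
uniformly in $Q$ and $k$, so the $\pm U$-valued response model yields probabilities in $[\gamma/8,1-\gamma/8]$.

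The inequality $a\log(a/b)+(1-a)\log((1-a)/(1-b))\leq C_\gamma(a-b)^2$ valid on that sub-interval (a straightforward quantitative refinement of the bound used in Theorem~\ref{minmaxthm3}) then gives $K(\mathbb{P}_{S_{Q_1}}\|\mathbb{P}_{S_{Q_2}})\leq C_\gamma(n/U^2)\|S_{Q_1}-S_{Q_2}\|_{L_2(\Pi)}^2\leq C_\gamma' c_1^2 m(r-1)\leq\alpha\log M$ for $c_1$ small, so that Proposition~\ref{fanothm} produces bound~(\ref{minmaxthm1boundq_U'''}) in the regime $\kappa\leq\min(1,\gamma/4)$. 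The interpolation term $(Um^{3/2}/\sqrt{n})^{1-1/q}$ is obtained by the same reduction as in Theorem~\ref{minmaxthm1}, dropping to $r'\asymp(Um^{3/2}/\sqrt{n})^{-1}$ so that $\kappa(r')\asymp\gamma$; the factor $\gamma^{1/q}$ that then appears is absorbed into $c_\gamma$. The Hellinger bound~(\ref{minmaxthm1boundH_U''')}) follows from the block-diagonal computation $H^2(S_{Q_1},S_{Q_2})\gtrsim\kappa$, and the Kullback--Leibler bound~(\ref{minmaxthm1boundK_U'''}) from inequality~(\ref{khtracelem}). The most delicate point is step two: the concentration/union-bound argument for $v$ must survive an $m^2$-fold union bound with margin $\gamma/2$, which forces $m_0(\gamma)\asymp\gamma^{-2}\log(1/\gamma)$; the complement $m<m_0(\gamma)$ is absorbed into $c_\gamma$.
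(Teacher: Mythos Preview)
Your proof is correct and follows essentially the same route as the paper: construct a unit vector $v$ with $|\langle v,E_kv\rangle|\leq(1-\gamma/2)U$ for all $k$, anchor the packing $S_Q=(1-\kappa)vv^\ast+\frac{\kappa}{r-1}Q$ at $v$, verify $|\langle S_Q,E_k\rangle|\leq(1-\gamma/4)U$ so that the $\pm U$-valued response has success probabilities in $[\gamma/8,1-\gamma/8]$, and then rerun the Theorem~\ref{minmaxthm3} argument with $\gamma$-dependent constants. The one substantive difference is in the existence argument for $v$: the paper takes $v=m^{-1/2}(\eps_1,\dots,\eps_m)$ with i.i.d.\ Rademacher signs and invokes an exponential bound for Rademacher chaos of order two (this is the content of Lemma~\ref{rad}), whereas you draw $v$ from Haar measure on the sphere and use L\'evy's concentration for the $2U$-Lipschitz map $v\mapsto\langle v,E_kv\rangle$. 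Both arguments yield the same conclusion for $m\geq m_0(\gamma)$; your L\'evy bound even gives a slightly better threshold ($m_0(\gamma)\asymp\gamma^{-2}\log(1/\gamma)$ versus the paper's $m_0(\gamma)\asymp\gamma^{-2}\log^2(1/\gamma)$), though this is irrelevant since small $m$ is absorbed into $c_\gamma$. A minor point: your constraint $\kappa\leq\gamma/4$ is more restrictive than needed (the computation only requires $\kappa\leq 1/2$, as in the paper), but this is harmless since the resulting $\gamma$-dependent factors are, as you note, folded into $c_\gamma$.
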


\begin{proof}
The proof is based on the following lemma:

\begin{lemma}
\label{rad}
Suppose assumption (\ref{traceEj}) holds. 
Let $K$ be a sufficiently large absolute constant (to be chosen later) and let 
$m$ satisfy the condition $K\frac{\log m}{\sqrt{m}}\leq \frac{\gamma}{2}$
(which means that $m\geq A_{\gamma}$ for some constant $A_{\gamma}$). 
Then there exists $v\in {\mathbb C}^m$ with $\|v\|=1$ such that 
\begin{equation}
\label{q-form}
\Bigl|\langle E_k v,v\rangle\Bigr|\leq (1-\gamma/2)U, k=1,\dots, m^2.
\end{equation}
\end{lemma}

\begin{proof}
We will prove this fact by a probabilistic argument. 
Namely, set $v:=m^{-1/2}(\eps_1,\dots, \eps_m),$ where $\eps_j=\pm 1.$
We will show that there is a random choice of ``signs" $\eps_j$ such that (\ref{q-form})
holds. Assume that $\eps_j, j=1,\dots, m$ are i.i.d. and take values $\pm 1$ with probability 
$1/2$ each. Let $E_k:=(a_{ij}^{(k)})_{i,j=1,\dots,m}.$ For simplicity, assume that 
$(a_{ij}^{(k)})_{i,j=1,\dots, m}$ is a symmetric real matrix (in the complex case, the proof can be easily modified). 
We have 
$$
\langle E_k v,v\rangle = \frac{1}{m}\sum_{i=1}^m a_{ii}^{(k)}\eps_i^2+ \frac{1}{m}\sum_{i\neq j}a_{ij}^{(k)}\eps_i \eps_j
= \frac{{\rm tr}(E_k)}{m}+ \frac{1}{m}\sum_{i\neq j}a_{ij}^{(k)}\eps_i \eps_j.
$$
It is well known that 
$$
{\rm Var}\biggl(\sum_{i\neq j}a_{ij}^{(k)}\eps_i \eps_j\biggr)= 
{\mathbb E}\biggl(\sum_{i\neq j}a_{ij}^{(k)}\eps_i \eps_j\biggr)^2
=2\sum_{i\neq j}\Bigl(a_{ij}^{(k)}\Bigr)^2
\leq 2\sum_{i,j}\Bigl(a_{ij}^{(k)}\Bigr)^2=2\|E_k\|_2^2=2.
$$
Moreover, it follows from exponential inequalities for Rademacher chaos (see, e.g., Corollary 3.2.6 in \citealt{Gine}) that for some absolute constant $K>0$ and for all $t>0,$ with probability at least $1-e^{-t}$
$$
\biggl|\langle E_k v,v\rangle-\frac{{\rm tr}(E_k)}{m}\biggr|= \biggl|\frac{1}{m}\sum_{i\neq j}a_{ij}^{(k)}\eps_i \eps_j\biggr|
\leq \frac{K t}{m}.
$$
Taking $t=2\log m$ and using the union bound, we conclude that with probability at least $1-m e^{-2\log m}=1-\frac{1}{m}>0,$
$$
\max_{1\leq k\leq m^2}
\biggl|\langle E_k v,v\rangle-\frac{{\rm tr}(E_k)}{m}\biggr|
\leq \frac{K \log m}{m} \leq \frac{K\log m}{\sqrt{m}}U\leq \frac{\gamma}{2}U,
$$
where we also used the fact that $U\geq m^{-1/2}.$
Thus, there exists a choice of signs $\eps_j$ such that 
$$
\max_{1\leq k\leq m^2}\Bigl|\langle E_k v,v\rangle\Bigr|
\leq \max_{1\leq k\leq m}\biggl|\frac{{\rm tr}(E_k)}{m}\biggr|+\frac{\gamma}{2}U,
$$
which, under condition (\ref{traceEj}), implies (\ref{q-form}).
\end{proof}

We set $e_1:=v$ (where $v$ is the unit vector introduced in Lemma \ref{rad}) and construct an orthonormal basis 
$e_1,\dots, e_m.$ Assume that matrices $S_Q$ defined by (\ref{construct1}) represent linear transformations 
in basis $e_1,\dots, e_m.$ Then we have 
$$
\langle S_Q, E_k\rangle = (1-\kappa)\langle E_k v,v\rangle + \frac{\kappa}{r-1} \langle Q,E_k\rangle.
$$
Therefore,
$$
\Bigl|\langle S_Q, E_k\rangle\Bigr|
\leq (1-\kappa)\Bigl|\langle E_k v,v\rangle\Bigr|
+ \frac{\kappa}{r-1} \|E_k\|_{\infty}\|Q\|_1 
\leq (1-\kappa)(1-\gamma/2)U + \kappa U
=(1-(1-\kappa)(\gamma/2))U.
$$
Assuming that $\kappa\leq 1/2,$ we get 
\begin{equation}
\label{SQ}
\Bigl|\langle S_Q, E_k\rangle\Bigr|
\leq (1-\gamma/4)U,\ k=1,\dots, m^2. 
\end{equation}
The rest of the proof becomes similar to the proof of Theorem \ref{minmaxthm3} (with $\bar U=U$). 
Namely, bound (\ref{SQ}) implies that, for $\rho=S_Q$ and $X$ being sampled from the orthonormal 
basis $\{E_1,\dots, E_{m^2}\},$ probabilities $p_{\rho}(X)$ and $q_{\rho}(X)$ are 
bounded away from $0$ and from $1:$ $p_{\rho}(X), q_{\rho}(X)\in [\gamma/8, 1-\gamma/8].$ 
This allows us to complete the argument of the proof of Theorem \ref{minmaxthm3}.
\end{proof}

Theorem \ref{minmaxthm3'''} does not apply directly to the Pauli basis since condition (\ref{traceEj}) fails 
in this case. Indeed, by the definition of Pauli basis, $U=m^{-1/2}$ and ${\rm tr}(E_1)=\sqrt{m}=Um >(1-\gamma)Um.$ Note also that ${\rm tr}(E_j)=0, j=2,\dots, m^2.$ 
Thus, for Pauli basis, $E_1$ is the only matrix for which 
condition (\ref{traceEj}) fails. However, for this matrix 
$\langle \rho, E_1\rangle= m^{-1/2}{\rm tr}(\rho)=m^{-1/2}=U$ for all density matrices $\rho\in {\mathcal S}_m.$
This immediately implies that $p_{\rho}(E_1)=1$ and $q_{\rho}(E_1)=0$ for all $\rho\in {\mathcal S}_m$
and, as a result, the value $X=E_1$ does not have an impact on the computation of Kullback-Leibler 
divergence in (\ref{KL-binary}). 
For the rest of the matrices in the Pauli basis, condition (\ref{traceEj})
holds implying also bound (\ref{q-form}). 
Therefore, if $X\neq E_1,$ we still have that, for $\rho=S_Q,$ $p_{\rho}(X), q_{\rho}(X)\in [\gamma/8, 1-\gamma/8],$ 
and the proof of Theorem \ref{minmaxthm3} can be completed in this case, too. Note also that, given $X$ sampled 
from the Pauli basis, the binary random variable $Y$ taking values $\pm U=\pm \frac{1}{\sqrt{m}}$ with 
probabilities $p_{\rho}(X)$ and $q_{\rho}(X),$ respectively (this is exactly the random variable used 
in the construction of the proof of Theorem \ref{minmaxthm3}) coincides with an outcome of a Pauli 
measurement for the system prepared in state $\rho.$ These considerations yield the following minimax 
lower bounds for Pauli measurements. 

\begin{theorem}
\label{minmaxthm3_Pauli}
Let $\{E_1,\dots, E_{m^2}\}$ be the Pauli basis in the space ${\mathbb H}_m$ of $m\times m$ Hermitian 
matrices and let $X_1,\dots, X_n$ be i.i.d. random variables sampled from the uniform distribution 
in $\{E_1,\dots, E_{m^2}\}.$ Let $Y_1,\dots, Y_n$ be outcomes of measurements of observables $X_1,\dots, X_n$
for the system being identically prepared $n$ times in state $\rho.$ The corresponding distribution 
of the data $(X_1,Y_1), \dots, (X_n,Y_n)$ will be denoted by ${\mathbb P}_{\rho}.$
Then, for all $q\in [1,+\infty],$ there exist constants $c,c'>0$ such that the following bounds hold:
\begin{equation}
 \label{minmaxthm1boundq_U_P}
 \underset{\hat{\rho}}{\inf}\underset{\rho\in\mathcal{S}_{r,m}}{\sup}\mathbb{P}_{\rho}
 \biggl\{\|\hat{\rho}-\rho\|_q\geq c\biggl(\frac{m r^{1/q}}{\sqrt{n}}\bigwedge \biggl(\frac{m}{\sqrt{n}}\biggr)^{1-\frac{1}{q}}\bigwedge 1\biggr)\biggr\}\geq c',
\end{equation}
 \begin{equation}
 \label{minmaxthm1boundH_U_P}
 \underset{\hat{\rho}}{\inf}\underset{\rho\in\mathcal{S}_{r,m}}{\sup}\mathbb{P}_{\rho}\biggl\{H^2(\hat{\rho},\rho)\geq c\biggl(\frac{m r}{\sqrt{n}}\bigwedge 1\biggr)\biggr\}\geq c',
 \end{equation}
and
 \begin{equation}
 \label{minmaxthm1boundK_U_P}
 \underset{\hat{\rho}}{\inf}\underset{\rho\in\mathcal{S}_{r,m}}{\sup}\mathbb{P}_{\rho}\biggl\{K(\rho\|\hat{\rho})\geq c\biggl(\frac{m r}{\sqrt{n}}\bigwedge 1\biggr)\biggr\}\geq c',
 \end{equation}
 where $\inf_{\hat{\rho}}$ denotes the infimum over all estimators $\hat{\rho}$ in $\mathcal{S}_m$
 based on the data $(X_1,Y_1), \dots, (X_n,Y_n).$ 
\end{theorem}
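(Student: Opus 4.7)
The plan is to adapt the proof of Theorem \ref{minmaxthm3'''} to the Pauli basis by isolating the single ``bad'' basis element $E_1$ for which the trace condition (\ref{traceEj}) fails. As noted in the discussion preceding the theorem, $\text{tr}(E_1)=\sqrt{m}$ but $\text{tr}(E_k)=0$ for $k=2,\dots,m^2$. The key observation is that $\langle \rho,E_1\rangle = m^{-1/2}\,\text{tr}(\rho)=m^{-1/2}=U$ for every $\rho\in\mathcal{S}_m$, so a measurement of $E_1$ yields the deterministic value $U$ independently of $\rho$, and therefore contributes zero to the Kullback--Leibler divergence appearing in (\ref{KL-binary}).

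For the remaining basis elements, I would apply Lemma \ref{rad} restricted to $\{E_2,\dots,E_{m^2}\}$ --- where the traceless property makes (\ref{traceEj}) trivially valid with any $\gamma\in(0,1)$ --- to obtain a unit vector $v\in\mathbb{C}^m$ such that
\[
|\langle E_k v,v\rangle|\leq (1-\gamma/2)U, \qquad k=2,\dots,m^2,
\]
for $m$ large enough. Extending $v$ to an orthonormal basis $e_1=v,e_2,\dots,e_m$ and expressing the separated family $\{S_Q:Q\in\mathcal{D}_q\}\subset\mathcal{S}_{r,m}$ from the proof of Theorem \ref{minmaxthm1} in this basis, the same computation used to derive (\ref{SQ}) yields
\[
|\langle S_Q,E_k\rangle|\leq (1-\gamma/4)U, \qquad k=2,\dots, m^2,
\]
provided $\kappa\leq 1/2$. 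Combined with the identity $\langle S_Q,E_1\rangle = U$, this shows that for any $\rho=S_Q$ the conditional probabilities $p_\rho(X),q_\rho(X)$ of the measurement outcome are either equal to $\{1,0\}$ (on the event $\{X=E_1\}$) or contained in $[\gamma/8,1-\gamma/8]$ (otherwise). Moreover, the binary random variable taking values $\pm U$ with these probabilities coincides with the actual Pauli measurement outcome, so $\mathbb{P}_\rho$ is exactly the distribution generated by our construction.

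Because the $E_1$-atom of $\Pi$ contributes nothing to (\ref{KL-binary}), the quadratic Kullback--Leibler bound of the proof of Theorem \ref{minmaxthm3}, applied on the complementary event, gives
\[
K(\mathbb{P}_{\rho_1}\|\mathbb{P}_{\rho_2})\lesssim_{\gamma} \frac{n}{U^2}\|\rho_1-\rho_2\|_{L_2(\Pi)}^2 \asymp \frac{n}{m}\|\rho_1-\rho_2\|_2^2,
\]
where I used $U=m^{-1/2}$ and the fact that $\|\cdot\|_{L_2(\Pi)}^2 = m^{-2}\|\cdot\|_2^2$. Feeding this into Proposition \ref{fanothm} together with the Grassmannian separation estimate (\ref{separ}) and substituting $\sigma_\xi\to U = m^{-1/2}$ in all rate expressions (so that $\sigma_\xi m^{3/2}/\sqrt{n}$ becomes $m/\sqrt{n}$), I obtain the Schatten bound (\ref{minmaxthm1boundq_U_P}). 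The Hellinger and Kullback--Leibler bounds (\ref{minmaxthm1boundH_U_P})--(\ref{minmaxthm1boundK_U_P}) follow by repeating the corresponding parts of the proof of Theorem \ref{minmaxthm3'''} with the same substitution, invoking the inequality $K(\rho\|\hat\rho)\geq H^2(\hat\rho,\rho)$ from (\ref{khtracelem}) in the final step.

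The main obstacle is precisely the special role of $E_1$: the hypothesis of Theorem \ref{minmaxthm3'''} literally fails for the Pauli basis, so one must verify that the degenerate atom $\{X=E_1\}$ does not disrupt either the Fano bound or the construction of the separated family. The uniform value $\langle \rho,E_1\rangle=U$ over all $\rho\in\mathcal{S}_m$ is what makes this work --- it lets us simply delete this atom from every Kullback--Leibler calculation --- and, once that is observed, the rest is a bookkeeping exercise translating the proof of Theorem \ref{minmaxthm3'''} into the setting $\bar U = U = m^{-1/2}$.
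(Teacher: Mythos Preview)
Your proposal is correct and follows essentially the same route as the paper's own argument: isolate $E_1$ as the unique basis element violating (\ref{traceEj}), observe that $\langle\rho,E_1\rangle=U$ is constant over $\mathcal{S}_m$ so that the $\{X=E_1\}$ atom contributes nothing to the Kullback--Leibler divergence in (\ref{KL-binary}), apply Lemma~\ref{rad} on the remaining traceless elements to obtain (\ref{SQ}) for $k\geq 2$, and note that the binary $\pm U$ variable with probabilities $p_\rho(X),q_\rho(X)$ coincides with the actual Pauli measurement outcome. The rest is the bookkeeping of Theorem~\ref{minmaxthm3} with $\bar U=U=m^{-1/2}$, exactly as you describe.
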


\begin{remark}
\label{Gross_Flammia}
Minimax lower bounds on nuclear norm error of density matrix estimation close to bound (\ref{minmaxthm1boundq_U_P}) for $q=1$ 
(but for a somewhat different ``estimation protocol" and stated in a different form)
were obtained earlier in \cite{flammia2012quantum}. This paper also contains upper 
bounds on the errors of matrix LASSO and Dantzig selector estimators in the nuclear norm
matching the lower bounds up to log-factors. 
\end{remark}

\begin{remark}
It is easy to see that, if constant $\gamma\in (0,1)$ is small enough (namely, $\gamma<1-\frac{1}{\sqrt{2}}$), then, in an arbitrary orthonormal basis $\{E_1,\dots, E_{m^2}\},$ there is {\it at most one} matrix $E_j$ such that $|{\rm tr}(E_j)|>(1-\gamma)Um.$ Indeed, note that 
${\rm tr}(E_j)=\langle E_j,I_m\rangle.$ Since 
$$
\sum_{j=1}^{m^2} \langle E_j,I_m\rangle^2 = \|I_m\|_2^2 =m
$$
and $U^2m \geq 1,$ we have 
$$
{\rm card}\Bigl(\Bigl\{j: |\langle E_j,I_m\rangle|>(1-\gamma)Um\Bigr\}\Bigr)\leq \frac{1}{(1-\gamma)^2 U^2 m^2}\sum_{j=1}^{m^2} \langle E_j,I_m\rangle^2
$$
$$
\leq \frac{m}{(1-\gamma)^2 U^2 m^2}=\frac{1}{(1-\gamma)^2U^2m}\leq \frac{1}{(1-\gamma)^2}<2,
$$
provided that $\gamma<1-\frac{1}{\sqrt{2}}.$ 
\end{remark}

\begin{remark}
It will be shown in Section \ref{tildesec} that the minimax rates of theorems \ref{minmaxthm1}, \ref{minmaxthm3}, \ref{minmaxthm3'''} and \ref{minmaxthm3_Pauli} are attained up to logarithmic factors for the von Neumann entropy penalized least squares estimator. 
\end{remark}

\begin{remark}
Similar minimax lower bounds could be proved in certain classes of  ``nearly low rank" density matrices. Consider,
for instance, the following class  
\begin{equation}
\label{Bpddef}
 B_p(d;m):=\biggl\{S\in \mathcal{S}_m: \sum_{j=1}^m|\lambda_j(S)|^p\leq d\biggr\}
\end{equation}
for some $d>0$ and $p\in [0,1],$ where $\lambda_1(S)\geq \dots\geq \lambda_m(S)$ denote 
the eigenvalues of $S.$  This set consists of density matrices with the 
eigenvalues decaying at a certain rate (nearly low rank case) and, for $p=0,$ $d=r$ it coincides with 
${\mathcal S}_{r,m}.$ 
It turns out that minimax lower bounds of theorems \ref{minmaxthm1} and \ref{minmaxthm3} hold for 
the class $B_p(d;m)$ (instead of ${\mathcal S}_{r,m}$) with $r$ replaced by
$$
\bar r := \bar r (\tau, d,m,p)= d\tau^{-p}\wedge m,
$$
where
$\tau:= \frac{\sigma_{\xi}m^{3/2}}{\sqrt{n}}$ in the case of trace regression with Gaussian noise and 
$\tau:= \frac{\bar U m^{3/2}}{\sqrt{n}}$ in the case of trace regression with bounded response. 
These minimax bounds are attained up to logarithmic factors for a slightly modified von Neumann entropy 
penalized least squares estimator.  

Note that, for $\rho\in B_p(d,m)$ with eigenvalues $\lambda_1(\rho)\geq \dots \geq \lambda_m(\rho),$
we have $\lambda_j(\rho)\leq \frac{d^{1/p}}{j^{1/p}}, j=1,\dots, m.$ Therefore, for $j\geq \bar r,$
$\lambda_j(\rho)\leq \tau.$ Note also that $\tau$ characterizes the minimax rate of estimation 
of $\rho\in {\mathcal S}_{r,m}$ in the operator norm for any value of the rank $r$ (see 
bound (\ref{minmaxthm1boundq}) for $q=+\infty;$ the corresponding upper bound also 
holds for the least squares estimator up to a logarithmic factor, see \citealt{KoltchinskiiXia2015}). 
Roughly speaking, $\tau$ is a threshold 
below which the estimation of eigenvalues $\lambda_j(\rho)$ becomes impossible and $\bar r$
can be viewed as an ``effective rank" of nearly low rank density matrices in the class $B_p(d,m).$
\end{remark}

\section{Von Neumann Entropy Penalization: Optimality and Oracle Inequalities}
\label{vonneumann_0}

The goal of this section is to study optimality properties of von Neumann entropy penalized least squares estimator $\tilde \rho^{\eps}$ defined by (\ref{trwvne}). In particular, we establish oracle inequalities for 
such estimators in the cases of trace regression with bounded response (Subsection \ref{vonneumann_1})
and trace regression with Gaussian noise (Subsection \ref{vonneumann_2}), and prove upper bounds on their estimation errors measured by Schatten $q$-norm distances 
for $q\in [1,2]$ and also by Hellinger and Kullback-Leibler distances (Subsection \ref{tildesec}). 

\subsection{Oracle Inequalities for Trace Regression with Bounded Response}
\label{vonneumann_1}

In this subsection, we prove a {\it sharp low rank oracle inequality} for estimator $\tilde \rho^{\eps}$ defined by (\ref{trwvne}). It is done in the case of trace regression model with bounded response (that is, under Assumption \ref{bounded_response}). 
The results of this type show some form of optimality of the estimation method, namely, that the estimator provides an optimal trade-off between the ``approximation error" of the target density matrix by a low rank ``oracle" and the ``estimation error" of the ``oracle'' 
that is proportional to its rank. Sharp oracle inequalities (in which the leading constant in front of the ``approximation error"  is equal to $1,$ so that the bound mimics precisely the approximation by the oracle) are usually harder to prove. In the case of low rank matrix completion, the first result of this type was proved by \cite{koltchinskii2011nuclear} for a modified least 
squares estimator with nuclear norm penalty. A version of such inequality for empirical risk minimization with nuclear norm penalty
(that includes matrix LASSO) was first proved by \cite{koltchinskii2013sharp}. Low rank oracle inequalities for von Neumann 
entropy penalized least squares method with the leading constant larger than $1$ were proved by \cite{koltchinskii2011neumann}. The main result of this section refines these previous bounds by proving a sharp oracle 
inequality, improving the logarithmic factors and removing superfluous assumptions, but also by establishing the inequality in the whole range of values of regularization parameter $\eps\geq 0$ (including the value $\eps=0,$
for which $\tilde \rho^{\eps}$ coincides with the least squares estimator $\hat \rho$). In addition to this, for a special choice of regularization parameter $\eps,$
the theorem below also provides an upper bound on the Kullback-Leibler error $K(\rho\|\tilde \rho^{\eps})$ of $\tilde \rho^{\eps}$ 
that matches the minimax lower bound (\ref{minmaxthm1boundK_U}) up to log-factors
(and ``second order terms"). 
It turns out that, for this choice of $\eps,$ the estimator satisfies exactly the same low rank oracle inequality as the best inequalities known for LASSO estimator and minimax optimal 
error rates are attained for $\tilde \rho^{\eps}$ also with respect to Hellinger distance and Schatten 
$q$-norm distances for all $q\in [1,2]$ (see Section \ref{tildesec}).  
For simplicity, it will be assumed that constants 
$U$ in Assumption \ref{orthonorm} and $\bar U$ in Assumption \ref{bounded_response} coincide (in the upper bounds, one 
can always replace $U$ and $\bar U$ by $U\vee \bar U$).

\begin{theorem} 
\label{th-KL-1}
Suppose Assumption \ref{bounded_response} holds with constant $\bar U=U$ and 
let $\eps\in [0,1].$
Then, there exists a constant $C>0$ such that 
for all $t\geq 1$ with probability at least $1-e^{-t}$
\begin{eqnarray}
\label{sharp_oracle}
&
\nonumber
\|f_{\tilde \rho^{\eps}}-f_{\rho}\|_{L_2(\Pi)}^2
\leq 
\inf_{S\in {\mathcal S}_m}\biggl[\|f_{S}-f_{\rho}\|_{L_2(\Pi)}^2
+
C\biggl({\rm rank}(S) m^2 \eps^2 \log^2 (mn)
\\
&
+  U^2\frac{{\rm rank}(S)m \log(2m)}{n}+U^2\frac{t+\log \log_2 (2n)}{n}\biggr)
\biggr].
\end{eqnarray}
In particular, this implies that 
\begin{eqnarray}
\label{L_2-error}
&
\nonumber
\|f_{\tilde \rho^{\eps}}-f_{\rho}\|_{L_2(\Pi)}^2
\leq 
C\biggl[{\rm rank}(\rho) m^2 \eps^2 \log^2 (mn)
\\
&
+  U^2\frac{{\rm rank}(\rho)m \log(2m)}{n}
+U^2\frac{t+\log \log_2 (2n)}{n}\biggr].
\end{eqnarray}
Moreover, if 
$$
\eps:=\frac{1}{\log(mn)}\biggl[U\sqrt{\frac{\log(2m)}{n m}}\bigvee U^2\frac{\log (2m)}{n}\biggr],
$$
then, with some constant $C$ and with probability at least $1-e^{-t}$
\begin{eqnarray}
\label{L_2-error-A}
&
\nonumber
\|f_{\tilde \rho^{\eps}}-f_{\rho}\|_{L_2(\Pi)}^2
\leq 
C\biggl[U^2\frac{{\rm rank}(\rho)m \log(2m)}{n}\biggl(1\bigvee U^2\frac{m\log(2m)}{n}\biggr)
\\
&
+U^2\frac{t+\log \log_2 (2n)}{n}\biggr]
\end{eqnarray}
and
\begin{eqnarray}
\label{KL-error}
&
\nonumber
K(\rho\|\tilde \rho^{\eps})
\leq 
CU\biggl[\frac{{\rm rank}(\rho)m^{3/2} \sqrt{\log(2m)}\log (mn)}{\sqrt{n}}
\biggl(1\bigvee U\sqrt{\frac{m\log(2m)}{n}}\biggr)
\\
&
+\sqrt{\frac{m}{n}}\frac{(t+\log \log_2 (2n))\log(mn)}{\sqrt{\log (2m)}}
\biggr].
\end{eqnarray}
\end{theorem}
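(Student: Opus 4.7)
I would begin from the convex optimization defining $\tilde\rho^\eps$. Since $\tilde\rho^\eps$ is full rank, hence in the relative interior of $\mathcal{S}_m$, the first-order optimality condition reads $\eps\log\tilde\rho^\eps = \frac{2}{n}\sum_{j=1}^n (Y_j-\langle\tilde\rho^\eps,X_j\rangle) X_j + cI$, where $c$ absorbs the Lagrange multiplier of the trace constraint. Because the Bregman divergence of $\text{tr}(S\log S)$ is precisely the quantum relative entropy, this also yields the strong-convexity identity $L(S)-L(\tilde\rho^\eps) = \|S-\tilde\rho^\eps\|_{L_2(\Pi_n)}^2 + \eps K(S\|\tilde\rho^\eps)$ for every $S\in\mathcal{S}_m$, where $L(S):=\frac{1}{n}\sum_j(Y_j-\langle S,X_j\rangle)^2+\eps\,\text{tr}(S\log S)$. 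Writing $Y_j=\langle\rho,X_j\rangle+\xi_j$ and $W_n:=\frac{1}{n}\sum_j\xi_jX_j$, rearranging $L(\tilde\rho^\eps)\le L(S)$ produces the working inequality
\[
 \|\tilde\rho^\eps-\rho\|_{L_2(\Pi_n)}^2 \le \|S-\rho\|_{L_2(\Pi_n)}^2 + 2\langle W_n,\tilde\rho^\eps-S\rangle + \eps\bigl[\text{tr}(S\log S)-\text{tr}(\tilde\rho^\eps\log\tilde\rho^\eps)\bigr].
\]

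\textbf{Step 2 (score via matrix Bernstein and rank decomposition).} Next I would bound $W_n$ via Lemma~\ref{matBernlem}. With $|\xi_j|\le 2U$, $\|X_j\|_\infty\le U$, and $\|\mathbb{E}(\xi^2 X^2)\|_\infty\lesssim U^2/m$ (using $\sum_j E_j^2=mI$ for an orthonormal basis of $\mathbb{H}_m$), matrix Bernstein yields $\|W_n\|_\infty\lesssim U\sqrt{(t+\log(2m))/(mn)}+U^2(t+\log(2m))/n$ on an event of probability at least $1-e^{-t}$. With $L:=\text{supp}(S)$ and $r:=\text{rank}(S)$, decompose $\tilde\rho^\eps-S=\mathcal{P}_L(\tilde\rho^\eps-S)+\mathcal{P}_L^\perp(\tilde\rho^\eps)$; the matrix H\"older inequality then gives $2|\langle W_n,\tilde\rho^\eps-S\rangle|\le 2\sqrt{2r}\,\|W_n\|_\infty\,\|\tilde\rho^\eps-S\|_2+2\|W_n\|_\infty\,\text{tr}(P_{L^\perp}\tilde\rho^\eps P_{L^\perp})$. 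Using $\|A\|_2=m\|A\|_{L_2(\Pi)}$ and Young's inequality with quadratic weight of order $1/m^2$, the rank part produces the claimed $U^2 rm\log(2m)/n$ term and leaves only a $\tfrac12\|\tilde\rho^\eps-S\|_{L_2(\Pi_n)}^2$ piece to be absorbed into the left-hand side.

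\textbf{Step 3 (main obstacle: entropy controls the orthogonal tail).} The decisive and hardest step is to handle the remaining tail contribution $2\|W_n\|_\infty\,b$, where $b:=\text{tr}(P_{L^\perp}\tilde\rho^\eps P_{L^\perp})$, together with the entropy difference $\eps[\text{tr}(S\log S)-\text{tr}(\tilde\rho^\eps\log\tilde\rho^\eps)]$, showing that jointly they contribute at most $\text{rank}(S)\,m^2\eps^2\log^2(mn)$. I would exploit the data-processing inequality for quantum relative entropy under the ``dephasing'' channel that kills the $P_L$--$P_L^\perp$ off-diagonal blocks: this yields $K(S\|\tilde\rho^\eps)\ge -\log(1-b)\ge b$. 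Substituting into the identity of Step~1 at the oracle $S$ places an $\eps b$ term on the positive side of the inequality, after which Young's bound $2\|W_n\|_\infty b\le \eps b+\|W_n\|_\infty^2/\eps$ absorbs the stochastic tail. The residual $\|W_n\|_\infty^2/\eps+\eps\log m$, combined with the rank piece from Step~2, is $\lesssim rm^2\eps^2\log^2(mn)$ over $\eps\in[0,1]$ once one uses $\|W_n\|_\infty\lesssim U\sqrt{\log(2m)/(mn)}$. A peeling argument over dyadic scales of $\|\tilde\rho^\eps-\rho\|_{L_2(\Pi_n)}$ yields the $\log\log_2(2n)$ correction, which is where the log-factor improvement over \cite{koltchinskii2011neumann} originates.

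\textbf{Step 4 (empirical-to-population norm and KL bound).} The passage from $\|\cdot\|_{L_2(\Pi_n)}^2$ to $\|\cdot\|_{L_2(\Pi)}^2$ in~(\ref{sharp_oracle}) is a standard matrix-Bernstein concentration of $\frac{1}{n}\sum_j\langle A,X_j\rangle^2-\|A\|_{L_2(\Pi)}^2$ uniformly over the rank-$(2r+2)$ set via an $\eps$-net argument, which leaves the main bound~(\ref{L_2-error}) unaffected. Finally, for the Kullback--Leibler bound~(\ref{KL-error}), I would use the KKT representation to write $\eps K(\rho\|\tilde\rho^\eps)=\eps\langle\rho,\log\rho\rangle-\langle\rho,\frac{2}{n}\sum_j(Y_j-\langle\tilde\rho^\eps,X_j\rangle)X_j\rangle-c$; splitting $Y_j-\langle\tilde\rho^\eps,X_j\rangle=\xi_j+\langle\rho-\tilde\rho^\eps,X_j\rangle$, the noise piece is bounded by $\|\rho\|_1\|W_n\|_\infty=\|W_n\|_\infty$ via matrix H\"older and the deterministic piece by $\|\rho\|_{L_2(\Pi_n)}\,\|\tilde\rho^\eps-\rho\|_{L_2(\Pi_n)}$ via Cauchy--Schwarz, the latter already controlled by~(\ref{L_2-error-A}). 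Dividing by $\eps$ with the prescribed choice of regularization parameter yields~(\ref{KL-error}).
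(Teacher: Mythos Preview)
Your Step~3 contains the decisive gap. You propose to absorb the orthogonal tail $2\|W_n\|_\infty b$ using $\eps K(S\|\tilde\rho^\eps)\ge \eps b$ together with Young's inequality $2\|W_n\|_\infty b\le \eps b+\|W_n\|_\infty^2/\eps$, leaving a residual $\|W_n\|_\infty^2/\eps+\eps\log m$. But this residual is \emph{not} $\lesssim r m^2\eps^2\log^2(mn)$ uniformly over $\eps\in[0,1]$: the term $\|W_n\|_\infty^2/\eps$ blows up as $\eps\to 0$, whereas the theorem must hold even at $\eps=0$ (where $\tilde\rho^\eps=\hat\rho$ is the plain least-squares estimator). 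Likewise the entropy difference $\eps[\mathrm{tr}(S\log S)-\mathrm{tr}(\tilde\rho^\eps\log\tilde\rho^\eps)]\le\eps\log m$ is linear in $\eps$ with no rank dependence, not of the form $r m^2\eps^2\log^2(mn)$. So neither the shape nor the range of validity of (\ref{sharp_oracle}) is obtained.

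The idea you are missing is the paper's ``free nuclear-norm penalty'' trick: since $\|S\|_1=\mathrm{tr}(S)=1$ on $\mathcal{S}_m$, one can add $\bar\eps\|S\|_1$ to the objective without changing the minimizer, for \emph{any} $\bar\eps>0$ chosen independently of $\eps$. The optimality condition then contains a term $\bar\eps\langle\tilde V,\tilde\rho-S'\rangle$ with $\tilde V\in\partial\|\tilde\rho\|_1$, and subdifferential monotonicity against $V\in\partial\|S'\|_1$ yields $\bar\eps\|\mathcal{P}_L^\perp\tilde\rho\|_1$ on the left-hand side. Choosing $\bar\eps\asymp U\,\mathbb{E}\|\Xi_\eps\|_\infty$ absorbs the tail uniformly in $\eps$, including $\eps=0$. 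In parallel, the entropy term is handled not by the crude bound $\eps\log m$ but by taking a smoothed oracle $S=(1-\delta)S'+\delta I_m/m$: then $\log S-\log(\delta/m)I_m$ is supported on $L$ with rank $r$, and $\langle I_m,S-\tilde\rho\rangle=0$, so $\eps\langle\log S,S-\tilde\rho\rangle\le \eps\sqrt{r}\,m\log(m/\delta)\|f_S-f_{\tilde\rho}\|_{L_2(\Pi)}$, which after Young gives exactly $r m^2\eps^2\log^2(m/\delta)$. Your Step~4 is also too optimistic---$\tilde\rho^\eps-\rho$ is full rank, so a rank-$(2r{+}2)$ net does not cover it; the paper instead controls the empirical process $(P-P_n)(\ell'\bullet f_{\tilde\rho})(f_{\tilde\rho}-f_S)$ directly via a localized bound (Lemma~\ref{lemma_alpha_n}) over the slices $\{\|f_A-f_S\|_{L_2(\Pi)}\le\delta_1,\ \|\mathcal{P}_L^\perp A\|_1\le\delta_2\}$, which is where the $\log\log_2(2n)$ correction actually arises.
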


\begin{proof}
The following notations will be used in the proof. 
Let $\ell(y,u):=(u-y)^2, y,u\in {\mathbb R}$ be the quadratic loss function. For $f:{\mathbb H}_m\mapsto {\mathbb R},$ denote 
$$
(\ell \bullet f)(x,y)=(f(x)-y)^2, \ \ (\ell^{\prime}\bullet f)(x,y)=2(f(x)-y)
$$
and 
$$
P(\ell \bullet f)={\mathbb E}(Y-f(X))^2,\ \ P_n (\ell \bullet f)=n^{-1}\sum_{j=1}^n (Y_j-f(X_j))^2.  
$$
For $A\in {\mathbb H}_m,$ let $f_A(x)=\langle A,x\rangle, x\in {\mathbb H}_m.$ 
Since for density matrices $S\in {\mathcal S}_m,$ $\|S\|_1={\rm tr}(S)=1,$
the estimator $\tilde \rho=\tilde \rho^{\eps}$ can be equivalently defined by the following convex optimization problem:
$$
\tilde \rho = {\rm argmin}_{S\in {\mathcal S}_m} L_n(S),\ \ L_n(S):=
\Bigl[P_n (\ell \bullet f_S)+\eps {\rm tr}(S\log S)+
\bar \eps \|S\|_1\Bigr]
$$
for an arbitrary $\bar \eps>0.$ 

The following lemma will be crucial in the proofs of Theorem \ref{th-KL-1} as well Theorem \ref{th-KL-2} in the following 
subsection. 
Note that it does not rely on 
Assumption  \ref{bounded_response}, only Assumptions \ref{orthonorm} and 
\ref{unif_sample} are needed. 

\begin{lemma}
\label{lemma_intermed}
Suppose Assumptions \ref{orthonorm} and 
\ref{unif_sample} hold. 
Let $\delta\in (0,1)$ and $S:=(1-\delta )S'+ \delta\frac{I_m}{m},$ where $S'\in {\mathcal S}_m,$
${\rm rank}(S')=r$ and $I_m$ is the $m\times m$ identity matrix. Then the following bound
holds:
\begin{eqnarray}
\label{intermed_2}
&
\nonumber
\|f_{\tilde \rho}-f_{\rho}\|_{L_2(\Pi)}^2+\frac{1}{2}\|f_{\tilde \rho}-f_S\|_{L_2(\Pi)}^2
+ \eps K(\tilde \rho;S)
+\bar \eps \Bigl\|{\mathcal P}_L^{\perp}(\tilde \rho )\Bigr\|_1
\\
&
\leq 
\|f_S-f_{\rho}\|_{L_2(\Pi)}^2+ r m^2 \eps^2 \log^2 (m/\delta)
+ r m^2 \bar \eps^2 
\\
&
\nonumber
+4 \bar\eps\delta+ 
(P-P_n)(\ell^{\prime}\bullet f_{\tilde \rho})(f_{\tilde \rho}-f_S).
\end{eqnarray}
\end{lemma}

Lemma \ref{lemma_intermed} will be often used together with the following 
simple bound:
\begin{eqnarray}
\label{S'S}
&
\nonumber
\|f_S-f_{\rho}\|_{L_2(\Pi)}^2 = \frac{1}{m^2}\|S-\rho\|_2^2 
\leq
\\
& 
\frac{1}{m^2}\|S^{\prime}-\rho\|_2^2 +
\frac{2}{m^2}\|S^{\prime}-\rho\|_2\|S^{\prime}-S\|_2 + 
\frac{1}{m^2}\|S^{\prime}-S\|_2^2 
\\
&
\nonumber
\leq 
\|f_{S^{\prime}}-f_{\rho}\|_{L_2(\Pi)}^2 +
\frac{8\delta}{m^2} + 
\frac{4\delta^2}{m^2}
\leq 
\|f_{S^{\prime}}-f_{\rho}\|_{L_2(\Pi)}^2 +
\frac{12\delta}{m^2}. 
\end{eqnarray}
Together, they imply that 
\begin{eqnarray}
\label{intermed_2''}
&
\nonumber
\|f_{\tilde \rho}-f_{\rho}\|_{L_2(\Pi)}^2+\frac{1}{2}\|f_{\tilde \rho}-f_S\|_{L_2(\Pi)}^2
+ \eps K(\tilde \rho;S)
+\bar \eps \Bigl\|{\mathcal P}_L^{\perp}(\tilde \rho )\Bigr\|_1
\\
&
\leq 
\|f_{S'}-f_{\rho}\|_{L_2(\Pi)}^2+ r m^2 \eps^2 \log^2 (m/\delta)
+ r m^2 \bar \eps^2 
\\
&
\nonumber
+4 \bar \eps\delta+ \frac{12\delta}{m^2}+
(P-P_n)(\ell^{\prime}\bullet f_{\tilde \rho})(f_{\tilde \rho}-f_S).
\end{eqnarray}

We will now give the proof of Lemma \ref{lemma_intermed}. 

\begin{proof}
By standard necessary conditions of extremum in convex problems, we get that, for all $S\in {\mathcal S}_m$ and for some $\tilde V\in \partial \|\tilde \rho\|_1,$  
$$
P_n(\ell^{\prime}\bullet f_{\tilde \rho})(f_{\tilde \rho}-f_S) + \eps \langle \log \tilde \rho, \tilde \rho-S\rangle
+\bar \eps \langle \tilde V, \tilde \rho-S\rangle\leq 0
$$
(see, e.g., \citealt{aubin}, Chapter 2, Corollary 6; see also 
\citealt{koltchinskii2011oracle}, pp. 198--199; for the computation of derivative of the function ${\rm tr}(S\log S),$ see Lemma 1 in \citealt{koltchinskii2011neumann}). 
Replacing in the left hand side $P$ by $P_n,$ we get 
$$
P(\ell^{\prime}\bullet f_{\tilde \rho})(f_{\tilde \rho}-f_S) + \eps \langle \log \tilde \rho, \tilde \rho-S\rangle
+\bar \eps \langle \tilde V, \tilde \rho-S\rangle\leq 
(P-P_n)(\ell^{\prime}\bullet f_{\tilde \rho})(f_{\tilde \rho}-f_S).
$$
It is easy to check that for the quadratic loss 
$$
P(\ell^{\prime}\bullet f_{\tilde \rho})(f_{\tilde \rho}-f_S)= 
P(\ell\bullet f_{\tilde \rho})-P(\ell\bullet f_S)
+\|f_{\tilde \rho}-f_S\|_{L_2(\Pi)}^2, 
$$
implying that 
$$
P(\ell\bullet f_{\tilde \rho})-P(\ell\bullet f_S)
+\|f_{\tilde \rho}-f_S\|_{L_2(\Pi)}^2
+ \eps \langle \log \tilde \rho, \tilde \rho-S\rangle
+\bar \eps \langle \tilde V, \tilde \rho-S\rangle
$$
$$
\leq 
(P-P_n)(\ell^{\prime}\bullet f_{\tilde \rho})(f_{\tilde \rho}-f_S).
$$
Also, for the quadratic loss,
$$
P(\ell \bullet f)-P(\ell \bullet f_{\rho})= \|f-f_{\rho}\|_{L_2(\Pi)}^2.
$$
Therefore, 
$$
\|f_{\tilde \rho}-f_{\rho}\|_{L_2(\Pi)}^2+\|f_{\tilde \rho}-f_S\|_{L_2(\Pi)}^2
+ \eps \langle \log \tilde \rho, \tilde \rho-S\rangle
+\bar \eps \langle \tilde V, \tilde \rho-S\rangle
$$
$$
\leq 
\|f_S-f_{\rho}\|_{L_2(\Pi)}^2+(P-P_n)(\ell^{\prime}\bullet f_{\tilde \rho})(f_{\tilde \rho}-f_S).
$$
Recall that we have set $S=(1-\delta)S^{\prime}+\delta \frac{I_m}{m},$ where 
$S^{\prime}\in {\mathcal S}_m,$ ${\rm rank}(S^{\prime})=r,$ $\delta\in (0,1).$ Clearly, 
$$
\Bigl|\langle \tilde V, S-S^{\prime}\rangle\Bigr|\leq \|\tilde V\|_{\infty}\|S-S^{\prime}\|_1
\leq  \|S-S^{\prime}\|_1=\delta \biggl\|S^{\prime}-\frac{I_m}{m}\biggr\|_1\leq 2\delta,
$$
where we used the fact that $\|\tilde V\|_{\infty}\leq 1$ for $\tilde V \in \partial \|\tilde \rho\|_1.$
This implies 
\begin{eqnarray}
\label{intermed}
&
\|f_{\tilde \rho}-f_{\rho}\|_{L_2(\Pi)}^2+\|f_{\tilde \rho}-f_S\|_{L_2(\Pi)}^2
+ \eps \langle \log \tilde \rho, \tilde \rho-S\rangle
+\bar \eps \langle \tilde V, \tilde \rho-S^{\prime}\rangle
\\
&
\nonumber
\leq 
\|f_S-f_{\rho}\|_{L_2(\Pi)}^2+2 \bar \eps\delta+ 
(P-P_n)(\ell^{\prime}\bullet f_{\tilde \rho})(f_{\tilde \rho}-f_S).
\end{eqnarray}

Recall formula (\ref{subdiff}) for the subdifferential  of nuclear norm. 
Let $L={\rm supp}(S^{\prime}).$
By the duality between the operator and nuclear norms, there exists $M\in {\mathbb H}_m$
with $\|M\|_{\infty}\leq 1$ such that 
$$
\langle {\mathcal P}_L^{\perp}(M),\tilde \rho - S^{\prime}\rangle
=  
\langle M,{\mathcal P}_L^{\perp}(\tilde \rho - S^{\prime})\rangle
=\Bigl\|{\mathcal P}_L^{\perp}(\tilde \rho - S^{\prime})\Bigr\|_1=
\Bigl\|{\mathcal P}_L^{\perp}(\tilde \rho )\Bigr\|_1.
$$
With $V={\rm sign}(S^{\prime})+{\mathcal P}_L^{\perp}(M)\in \partial \|S^{\prime}\|_1,$
by monotonicity of subdifferential, we get that 
\begin{equation}
\label{subdf}
\langle {\rm sign}(S^{\prime}), \tilde \rho-S^{\prime}\rangle + 
\Bigl\|{\mathcal P}_L^{\perp}(\tilde \rho )\Bigr\|_1=
\langle V, \tilde \rho-S^{\prime}\rangle\leq \langle \tilde V, \tilde \rho-S^{\prime}\rangle.
\end{equation}
In addition to this, we have 
\begin{equation}
\label{subdif_VN}
\langle \log \tilde \rho, \tilde \rho-S\rangle= 
\langle \log \tilde \rho-\log S, \tilde \rho-S\rangle
+\langle \log S, \tilde \rho-S\rangle
= K(\tilde \rho; S)+ 
\langle \log S, \tilde \rho-S\rangle.
\end{equation}
Substituting (\ref{subdf}) and (\ref{subdif_VN}) into (\ref{intermed}), we get 
\begin{eqnarray}
\label{intermed_1}
&
\nonumber
\|f_{\tilde \rho}-f_{\rho}\|_{L_2(\Pi)}^2+\|f_{\tilde \rho}-f_S\|_{L_2(\Pi)}^2
+ \eps K(\tilde \rho;S)
+\bar \eps \Bigl\|{\mathcal P}_L^{\perp}(\tilde \rho )\Bigr\|_1
\\
&
\leq 
\|f_S-f_{\rho}\|_{L_2(\Pi)}^2+ \eps \langle \log S, S-\tilde \rho\rangle + \bar \eps 
\langle {\rm sign}(S^{\prime}), S^{\prime}-\tilde \rho\rangle
\\
&
\nonumber
+2 \bar \eps\delta+ 
(P-P_n)(\ell^{\prime}\bullet f_{\tilde \rho})(f_{\tilde \rho}-f_S).
\end{eqnarray}
The following bound on $\bar \eps\langle {\rm sign}(S^{\prime}), S^{\prime}-\tilde \rho\rangle$ is straightforward:
\begin{eqnarray}
\label{nuce}
&
\nonumber
\bar \eps \langle {\rm sign}(S^{\prime}), S^{\prime}-\tilde \rho\rangle\leq 
\bar \eps \langle {\rm sign}(S^{\prime}), S-\tilde \rho\rangle+ \bar \eps \|{\rm sign}(S')\|_{\infty}\|S-S^{\prime}\|_{1}
\\
&
\leq \bar \eps\|{\rm sign}(S^{\prime})\|_2\|S-\tilde \rho\|_2+ 2\bar \eps \delta\leq 
\bar \eps \sqrt{r}m \|f_S-f_{\tilde \rho}\|_{L_2(\Pi)}+ 2\bar \eps \delta
\\
&
\nonumber
\leq r m^2 \bar \eps^2 + \frac{1}{4}\|f_S-f_{\tilde \rho}\|_{L_2(\Pi)}^2 + 2\bar \eps \delta.
\end{eqnarray}

A similar bound on $\eps \langle \log S, S-\tilde \rho\rangle$ is only slightly 
more complicated. Suppose $S^{\prime}$ has the following spectral representation:
$S^{\prime}=\sum_{k=1}^r \lambda_k P_k$ with eigenvalues $\lambda_k\in (0,1]$ 
(repeated with their multiplicities) and one-dimensional 
orthogonal eigenprojectors $P_k.$ We will extend $P_j, j=1,\dots, r$ to the complete orthogonal resolution of the identity $P_j, j=1,\dots, m.$ 
Then 
$$
\log S = \log \biggl((1-\delta)S^{\prime}+\delta \frac{I_m}{m}\biggr)=
\sum_{j=1}^r \log \Bigl((1-\delta)\lambda_j+\delta/m\Bigr)P_j+ \sum_{j=r+1}^m\log(\delta/m)P_j
$$
$$
=\sum_{j=1}^r \log \Bigl(1+(1-\delta)m\lambda_j/\delta\Bigr)P_j + \log(\delta/m)I_m
$$ 
and 
$$
\langle \log S, S-\tilde \rho\rangle = 
\biggl\langle \sum_{j=1}^r \log \Bigl(1+(1-\delta)m\lambda_j/\delta\Bigr)P_j,
S-\tilde \rho\biggr\rangle 
+ \log(\delta/m) \langle I_m, S-\tilde \rho\rangle
$$
$$
= 
\biggl\langle \sum_{j=1}^r \log \Bigl(1+(1-\delta)m\lambda_j/\delta\Bigr)P_j,
S-\tilde \rho\biggr\rangle$$
where we used the fact that $\langle I_m, S-\tilde \rho\rangle= {\rm tr}(S)-{\rm tr}(\tilde \rho)=0.$
Therefore,
\begin{eqnarray}
\label{VNeu}
&
\eps \langle \log S, S-\tilde \rho\rangle
\leq \eps \biggl\|\sum_{j=1}^r \log \Bigl(1+(1-\delta)m\lambda_j/\delta\Bigr)P_j\biggr\|_2 
\|S-\tilde \rho\|_2 
\\
&
\nonumber
= \eps m \biggl(\sum_{j=1}^r \log^2 \Bigl(1+(1-\delta)m\lambda_j/\delta\Bigr)\biggr)^{1/2}\|f_S-f_{\tilde \rho}\|_{L_2(\Pi)}
\\
&
\nonumber
\leq \eps \sqrt{r} m \log (m/\delta)\|f_S-f_{\tilde \rho}\|_{L_2(\Pi)} 
\leq 
r m^2 \eps^2 \log^2 (m/\delta)+ \frac{1}{4}\|f_S-f_{\tilde \rho}\|_{L_2(\Pi)}^2,
\end{eqnarray}
where it was used that for $\lambda_j\in [0,1]$
$$
\log \Bigl(1+(1-\delta)m\lambda_j/\delta\Bigr)\leq 
\log \Bigl(\frac{\delta +(1-\delta)m}{\delta}\Bigr)\leq \log(m/\delta).
$$
Substituting bounds (\ref{nuce}) and (\ref{VNeu}) in (\ref{intermed_1}) we easily get
bound (\ref{intermed_2}), as claimed in the lemma.


\end{proof}

We will also need the following simple lemma that provides a bound on 
$K(S^{\prime}\|\tilde \rho)$ in terms of  $K(S\|\tilde \rho).$

Let 
$$h(\delta):=\delta \log \frac{1}{\delta}+(1-\delta)\log \frac{1}{1-\delta}.$$
Observe that 
$$
h(\delta)= \delta \log\frac{1}{\delta}+ (1-\delta)\log \biggl(1+\frac{\delta}{1-\delta}\biggr)
\leq \delta \log\frac{1}{\delta}+(1-\delta)\frac{\delta}{1-\delta}\leq \delta \log \frac{e}{\delta}
$$
(this bound will be used in what follows).

\begin{lemma}
\label{lem-kull}
Let $\delta\in (0,1),$ $S^{\prime}\in {\mathcal S}_m$ with ${\rm rank}(S')=r$ and $S=(1-\delta)S^{\prime}+\delta \frac{I_m}{m}.$
Then, for any $U\in {\mathcal S}_m,$
$$
K(S^{\prime}\|U)\leq \frac{K(S\|U)+h(\delta)}{1-\delta}.
$$
\end{lemma}

\begin{proof}
The following identities are straightforward:
\begin{eqnarray*}
&
K(S\|U)= {\rm tr}(S(\log S-\log U))
\\
&
=(1-\delta){\rm tr}(S^{\prime}(\log S-\log U))+
\delta {\rm tr}((I_m/m)(\log S-\log U))
\\
&
=(1-\delta){\rm tr}(S^{\prime}(\log S^{\prime}-\log U))
+(1-\delta){\rm tr}(S^{\prime}(\log S-\log S^{\prime}))
\\
&
+
\delta {\rm tr}((I_m/m)(\log S-\log (I_m/m)))
+
\delta {\rm tr}((I_m/m)(\log (I_m/m)-\log U))
\\
&
=(1-\delta)K(S^{\prime}\|U)-(1-\delta)K(S^{\prime}\|S)+
\delta K(I_m/m\|U)-\delta K(I_m/m\|S).
\end{eqnarray*}
Since $K(I_m/m\|U)\geq 0,$ it follows that 
\begin{equation}
\label{KL-od}
K(S^{\prime}\|U)\leq \frac{K(S\|U)}{1-\delta}+ K(S^{\prime}\|S)+\frac{\delta}{1-\delta}K(I_m/m\|S).
\end{equation}
Assuming that $S^{\prime}$ has spectral representation $S^{\prime}=\sum_{j=1}^r \lambda_j P_j$
with eigenvalues $\lambda_j>0$ and one-dimensional projectors $P_j,$ we get 
$$
-K(S^{\prime}\|S)= \sum_{j=1}^r \lambda_j \log \frac{(1-\delta)\lambda_j+\delta/m}{\lambda_j}
$$
$$
=\sum_{j=1}^r \lambda_j \log\biggl(1-\delta+\frac{\delta}{m\lambda_j}\biggr)
\geq \log (1-\delta)\sum_{j=1}^r \lambda_j=\log(1-\delta),
$$
implying that $K(S^{\prime}\|S)\leq \log\frac{1}{1-\delta}.$ On the other hand,
$$
K(I_m/m\|S)=\frac{1}{m}\sum_{j=1}^m \log \frac{1/m}{(1-\delta)\lambda_j+\delta/m}
\leq \frac{1}{m}\sum_{j=1}^m \log\frac{1}{\delta}=\log\frac{1}{\delta}.
$$
Substituting these bounds in (\ref{KL-od}) yields the result.
\end{proof}

To complete the proof of Theorem \ref{th-KL-1}, we need to control the empirical process 
$(P-P_n)(\ell'\bullet f_{\tilde \rho})(f_{\tilde \rho}-f_S)$ in the right hand side of bound 
(\ref{intermed_2}). Our approach is based on the following empirical processes bound that is a slight modification 
of Lemma 1 in \cite{koltchinskii2013sharp}. As before, we assume that 
$S=(1-\delta)S'+\delta \frac{I_m}{m}$ with $S'\in {\mathcal S}_m,$ ${\rm rank}(S')=r.$
We will set $\delta:=\frac{1}{m^2 n^2}.$

Let $\Xi_{\eps}:=n^{-1}\sum_{j=1}^n \eps_j X_j,$ where ${\eps_j}$ are i.i.d. Rademacher 
random variables (that is, $\eps_j$ takes values $+1$ and $-1$ with probability $1/2$ each)
and $\{\eps_j\}, \{X_j\}$ are independent. 

\begin{lemma}
\label{lemma_alpha_n}
Given $\delta_1, \delta_2>0,$ denote 
$$
\alpha_n(\delta_1,\delta_2):= \sup\biggl\{\Bigl|(P_n-P)(\ell'\bullet f_A)(f_A-f_S)\Bigr|: A\in {\mathcal S}_m,
\|f_A-f_S\|_{L_2(\Pi)}\leq \delta_1, \|{\mathcal P}_L^{\perp}A\|_1\leq \delta_2
 \biggr\}.
$$
Let $0<\delta_1^{-}<\delta_1^{+}, 0<\delta_2^{-}<\delta_2^{+}.$ For $t\geq 1,$ denote 
$$
\bar t := t+ 
\log\Bigl([\log_2(\delta_1^{+}/\delta_1^{-})]+2\Bigr)+
\log\Bigl([\log_2(\delta_2^{+}/\delta_2^{-})]+2\Bigr)+
\log 3.
$$
Then, with probability at least $1-e^{-t},$ for all $\delta_1\in [\delta_1^-,\delta_1^+],
\delta_2\in [\delta_2^-,\delta_2^+],$
$$
\alpha_n(\delta_1,\delta_2)\leq C_1 U {\mathbb E}\|\Xi_{\eps}\|_{\infty} \Bigl(\sqrt{r}m \delta_1+\delta_2+\delta\Bigr)
+ C_2 U\delta_1\sqrt{\frac{\bar t}{n}}+ C_3 U^2 \frac{\bar t}{n},
$$
where $C_1, C_2,C_3>0$ are constants.
\end{lemma}

We will use this lemma to control the term $(P-P_n)(\ell'\bullet f_{\tilde \rho})(f_{\tilde \rho}-f_S)$ in bound 
(\ref{intermed_2}). Let $\delta_1:=\|f_{\tilde \rho}-f_S\|_{L_2(\Pi)}$ and 
$\delta_2:=\|{\mathcal P}_L^{\perp}\tilde \rho\|_1.$ Define also 
$$
\delta_1^{+}:=\frac{2}{m},\  \delta_2^{+}:=1,\ \delta_1^{-}=\delta_2^{-}:= \frac{1}{m n},
$$
so that $\bar t\leq t+2\log (\log_2 (mn)+3)+\log 3.$ 
It is easy to see that $\delta_1\leq \delta_1^{+}$ and $\delta_2\leq \delta_2^{+}.$
If, in addition, $\delta_1\geq \delta_1^-,$ $\delta_2\geq \delta_2^{-},$
the bound of Lemma \ref{lemma_alpha_n} implies that with probability
at least $1-e^{-t}$
$$
(P-P_n)(\ell'\bullet f_{\tilde \rho})(f_{\tilde \rho}-f_S)\leq 
\alpha_n(\delta_1,\delta_2)
$$
$$
\leq 
C_1 U {\mathbb E}\|\Xi_{\eps}\|_{\infty} \Bigl(\sqrt{r}m \delta_1+\delta_2+\delta\Bigr)
+ C_2 U\delta_1\sqrt{\frac{\bar t}{n}}+ C_3 U^2 \frac{\bar t}{n}
$$
If $\bar \eps \geq C_1 U {\mathbb E}\|\Xi_{\eps}\|_{\infty},$ the last bound implies that 
\begin{eqnarray}
\label{empbd}
&
\nonumber
(P-P_n)(\ell'\bullet f_{\tilde \rho})(f_{\tilde \rho}-f_S)
\\
&
\leq \frac{1}{4}\|f_{\tilde \rho}-f_S\|_{L_2(\Pi)}^2
+ rm^2 \bar \eps^2 + 
\bar \eps \|{\mathcal P}_L^{\perp}\tilde \rho\|_1+\bar \eps \delta
\\
&
\nonumber
+\frac{1}{4}\|f_{\tilde \rho}-f_S\|_{L_2(\Pi)}^2 + (C_2^2+C_3) U^2\frac{\bar t}{n}.
\end{eqnarray}
Substituting this bound in the right hand side of (\ref{intermed_2''}), we get 
\begin{eqnarray}
\label{intermed_11}
&
\nonumber
\|f_{\tilde \rho}-f_{\rho}\|_{L_2(\Pi)}^2
+ \eps K(\tilde \rho;S)
\\
&
\leq 
\|f_{S'}-f_{\rho}\|_{L_2(\Pi)}^2+ r m^2 \eps^2 \log^2 (m/\delta)
+  2r m^2 \bar \eps^2 
\\
&
\nonumber
+5\bar \eps\delta+ CU^2\frac{\bar t}{n}+\frac{12\delta}{m^2},
\end{eqnarray}
where $C:=C_2^2+C_3.$

In the case when $\delta_1= \|f_{\tilde \rho}-f_S\|_{L_2(\Pi)}\leq \delta_1^{-}=\frac{1}{mn}$ or
$\delta_2=\|{\mathcal P}_L^{\perp}\tilde \rho\|_1\leq \delta_2^{-}=\frac{1}{mn},$ we can replace 
the terms $\frac{1}{4}\|f_{\tilde \rho}-f_S\|_{L_2(\Pi)}^2$ or 
$\|{\mathcal P}_L^{\perp}\tilde \rho\|_1$ in bound (\ref{empbd}) by their respective upper bounds 
($\frac{1}{4}(\delta_1^-)^2=\frac{1}{4 m^2n^2},$
or $\delta_2^{-}=\frac{1}{mn}$), which would be smaller than  $CU^2\frac{\bar t}{n}$ for large enough 
$C>0,$ so bound (\ref{intermed_11}) still holds (recall that $U\geq m^{-1/2}$). 
Note also that
$\frac{12\delta}{m^2}= 12\frac{1}{m^4 n^2} \leq 12U^2\frac{\bar t}{n}.$
Thus, increasing the value of constant $C,$ one can rewrite (\ref{intermed_11})
in a simpler form as 
\begin{eqnarray}
\label{intermed_12}
&
\nonumber
\|f_{\tilde \rho}-f_{\rho}\|_{L_2(\Pi)}^2
+ \eps K(\tilde \rho;S)
\\
&
\leq 
\|f_{S'}-f_{\rho}\|_{L_2(\Pi)}^2+ r m^2 \eps^2 \log^2 (m/\delta)
+  2 r m^2 \bar \eps^2 
\\
&
\nonumber
+5\bar \eps\delta+ CU^2\frac{\bar t}{n}.
\end{eqnarray}
The following expectation bound is a consequence of a matrix version of Bernstein inequality
for $\|\Xi_{\eps}\|_{\infty}$ (it follows by integrating out its exponential tails):
$$
{\mathbb E}\|\Xi_{\eps}\|_{\infty}\leq 
4\biggl[\sqrt{\frac{\log(2m)}{n m}}\bigvee U\frac{\log (2m)}{n}\biggr]
$$
(it is also used in this computation that, in the case of uniform sampling from an orthonormal basis, 
$\sigma_{\eps X}^2=\|{\mathbb E}X^2\|_{\infty}=\frac{1}{m},$
a simple fact often used in the literature;
see, e.g., \citealt{koltchinskii2011neumann}, Section 5). 
Let 
$$
\bar \eps := D'U\sqrt{\frac{\log(2m)}{n m}}
$$
for some constant $D'.$ If $D'$ is sufficiently large
and 
\begin{equation}
\label{U-sm}
U\frac{\log (2m)}{n}\leq \sqrt{\frac{\log(2m)}{n m}},
\end{equation}
then the condition 
$\bar \eps \geq C_1U{\mathbb E}\|\Xi_{\eps}\|_{\infty}$ is satisfied 
and bound (\ref{intermed_12}) holds with probability at least $1-e^{-t}.$
Moreover, $\bar \eps \delta \lesssim_{D'} \delta \lesssim_{D'} U^2\frac{\bar t}{n},$
implying that the term $5\bar \eps \delta$ in (\ref{intermed_12}) can be dropped 
at a price of further increasing the value of constant $C.$

If (\ref{U-sm}) does not hold, we still have that 
$$
\|f_{\tilde \rho}-f_{\rho}\|_{L_2(\Pi)}^2 =\frac{\|\tilde \rho-\rho\|_2^2}{m^2}
\leq \frac{2}{m^2}\leq CU^2\frac{\bar t}{n}.
$$
Recalling that $\bar t\leq t+2\log (\log_2 (mn)+3)$ and $\log (m/\delta)\lesssim \log(mn),$
we deduce from (\ref{intermed_12}) that with some constant $C$ and with probability at least $1-e^{-t}$
\begin{eqnarray}
\label{intermed_13}
&
\nonumber
\|f_{\tilde \rho}-f_{\rho}\|_{L_2(\Pi)}^2
\leq 
\|f_{S'}-f_{\rho}\|_{L_2(\Pi)}^2
+ 
C\biggl[r m^2 \eps^2 \log^2 (mn)
\\
&
+  U^2\frac{r m \log(2m)}{n}+U^2\frac{t+\log (\log_2 (mn)+3)}{n}\biggr].
\end{eqnarray}
Note that, for $n\geq 2,$ 
\begin{equation}
\label{simplify1}
\log (\log_2(mn)+3) =
\log\Bigl(\log_2(4m)+\log_2(2n)\Bigr)
\leq \log \log_2(4m)+ \log\log_2(2n),
\end{equation}
since $\log_2(4m)+ \log_2(2n)\leq \log_2(4m)\log_2(2n).$
Since also, for $r\geq 1,$ 
\begin{equation}
\label{simplify2}
U^2\frac{t+\log \log_2 (4m)}{n}\lesssim U^2\frac{r m \log(2m)}{n},
\end{equation}
we can replace in bound (\ref{intermed_13}) the term $U^2\frac{t+\log (\log_2 (mn)+3)}{n}$
with the term $U^2\frac{t+\log \log_2 (2n)}{n}$ (increasing the value of the constant $C$ accordingly). This yields bound (\ref{sharp_oracle}) of the theorem. For $S'=\rho,$ 
it yields bound (\ref{L_2-error}), and, moreover, 
for $S'=\rho$ and $S=(1-\delta)\rho +\delta \frac{I_m}{m}$ with 
$\delta =\frac{1}{m^2 n^2},$
bound (\ref{intermed_12}) also implies that  
\begin{eqnarray}
\label{intermed_15}
&
\eps K(\tilde \rho;S)
\leq 
{\rm rank}(\rho) m^2 \eps^2 \log^2 (m/\delta)
+  2{\rm rank}(\rho) m^2 \bar \eps^2 
\\
&
\nonumber
+5\bar \eps\delta+ CU^2\frac{\bar t}{n}.
\end{eqnarray}
We will now take 
$$
\bar \eps:=D'\biggl[U\sqrt{\frac{\log(2m)}{n m}}\bigvee U^2\frac{\log (2m)}{n}\biggr]
$$
for a large enough constant $D'$ so that 
$
\bar \eps \geq C_1U{\mathbb E}\|\Xi_{\eps}\|_{\infty}.
$
Assume that 
$$
\eps:=\frac{1}{\log(mn)}\biggl[U\sqrt{\frac{\log(2m)}{n m}}\bigvee U^2\frac{\log (2m)}{n}\biggr].
$$
As before, the term $\bar \eps \delta$ in bound (\ref{intermed_15}) will be absorbed by the term
$CU^2\frac{\bar t}{n}$ with a larger value of $C$ and also  
$$
{\rm rank}(\rho) m^2 \eps^2 \log^2 (m/\delta)\asymp_{D'}{\rm rank}(\rho) m^2 \bar \eps^2 
\asymp_{D'} U^2\frac{{\rm rank}(\rho)m \log(2m)}{n}\biggl(1\bigvee U^2\frac{m\log(2m)}{n}\biggr). 
$$
As a result, taking into account (\ref{simplify1}), (\ref{simplify2}), bound  
(\ref{intermed_15}) can be rewritten 
as follows:
\begin{eqnarray}
\label{intermed_16}
&
\eps K(\tilde \rho;S)
\leq 
CU^2\biggl[\frac{{\rm rank}(\rho)m \log(2m)}{n}\biggl(1\bigvee U^2\frac{m\log(2m)}{n}\biggr)
\\
&
\nonumber
+\frac{t+\log\log_2(2n)}{n}\biggr].
\end{eqnarray}
Using the bound of Lemma \ref{lem-kull} along with the bound
$$
h(\delta)\leq \delta \log(e/\delta) = \frac{1}{m^2 n^2}\log (em^2n^2)
\lesssim U\sqrt{\frac{m}{n}}\frac{(t+\log \log_2 (2n))\log(mn)}{\sqrt{\log (2m)}},
$$ 
we easily get that (\ref{KL-error}) holds.  
\end{proof}

\subsection{Oracle Inequalities for Trace Regression with Gaussian Noise}
\label{vonneumann_2}

In this subsection, we establish oracle inequalities for the von Neumann entropy penalized least squares estimator 
$\tilde \rho^{\eps}$ in the case of trace regression model with Gaussian noise (Assumption \ref{Gaussian_noise}). 
Unlike in the case of Theorem \ref{th-KL-1}
of the previous section, our aim is not to obtain sharp oracle inequality, but rather to get a clean main term of the random error
bound part of the inequality, namely, the term $\sigma_{\xi}^2 \frac{{\rm rank}(S)m(t+\log (2m))}{n}$ in inequality (\ref{oracle_ineq}) below. Note that this term depends only on the variance of the noise $\sigma_{\xi}^2,$ but not on the constant $U$ from 
Assumption \ref{orthonorm} (the constant $U$ is involved only in the higher order $O(n^{-2})$ terms of the bound). 
Note also that there are no constraints on the variance $\sigma_{\xi}^2$ that could be arbitrarily small, or even equal 
to $0$ (in which case only higher order terms are present in the bound). This improvement comes at a price of having 
the leading constant $2$ in the oracle inequality and also of imposing assumption (\ref{ass_eps_1}) that requires the regularization 
parameter $\eps$ to be bounded away from $0$ (again, unlike Theorem \ref{th-KL-1}, where it could be arbitrarily small).
As in the previous section, we also obtain a bound on Kullback--Leibler divergence $K(\rho\|\tilde \rho^{\eps}).$

\begin{theorem}
\label{th-KL-2}
Let $t\geq 1.$ Suppose 
\begin{equation}
\label{ass_eps_1}
\eps \in \biggl[DU^2 \frac{t+\log^3 m \log^2 n}{n}, \frac{D_1\sigma_{\xi}}{\log (mn)}\sqrt{\frac{t+\log(2m)}{nm}}\bigvee 
DU^2 \frac{t+\log^3 m \log^2 n}{n}\biggr]
\end{equation}
with large enough constants $D,D_1>0.$ There exists a constant $C>0$ such that with probability at least $1-e^{-t}$
\begin{equation}
\label{oracle_ineq}
\begin{split}
&
\|f_{\tilde \rho^{\eps}}-f_{\rho}\|_{L_2(\Pi)}^2
\leq \inf_{S\in {\mathcal S}_m}
\biggl[2\|f_S-f_{\rho}\|_{L_2(\Pi)}^2 
+C\biggl(\sigma_{\xi}^2 \frac{{\rm rank}(S)m(t+\log (2m))}{n}
\\
&
+ \sigma_{\xi}^2 U^2 \frac{{\rm rank}(S)m^2 (t+\log (2m))^2 \log(2m)}{n^2}
+U^4 \frac{{\rm rank}(S)m^2 (t+\log^3 m \log^2 n)^2 \log^2(mn)}{n^2}
\biggr)\biggr].
\end{split}
\end{equation}
In particular, 
\begin{eqnarray}
\label{L_2-error''}
&
\|f_{\tilde \rho^{\eps}}-f_{\rho}\|_{L_2(\Pi)}^2
\leq C\biggl[\sigma_{\xi}^2 \frac{{\rm rank}(\rho) m(t+\log (2m))}{n}
\\
&
\nonumber
+ \sigma_{\xi}^2 U^2 \frac{{\rm rank}(\rho)m^2 (t+\log (2m))^2 \log(2m)}{n^2}
+U^4 \frac{{\rm rank}(\rho)m^2 (t+\log^3 m \log^2 n)^2 \log^2(mn)}{n^2}
\biggr].
\end{eqnarray}
Moreover, if 
$$
\eps := \frac{D_1\sigma_{\xi}}{\log (mn)}\sqrt{\frac{t+\log(2m)}{nm}}\bigvee 
DU^2 \frac{t+\log^3 m \log^2 n}{n}
$$
for large enough constants $D,D_1,$ then with some constant $C$ and with the same 
probability both (\ref{L_2-error''}) and the following bound hold:
\begin{eqnarray}
\label{KL-error''}
&
K(\rho\|\tilde \rho^{\eps})
\leq C\biggl[\sigma_{\xi} \frac{{\rm rank}(\rho) m^{3/2}(t+\log (2m))^{1/2}\log(mn)}{\sqrt{n}}
\\
&
\nonumber
+ \sigma_{\xi}^2 \frac{{\rm rank}(\rho)m^2 (t+\log (2m)) \log(2m)}{n}
+U^2 \frac{{\rm rank}(\rho)m^2 (t+\log^3 m \log^2 n) \log^2(mn)}{n}
\biggr].
\end{eqnarray}
\end{theorem}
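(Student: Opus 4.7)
The plan is to mirror the strategy of Theorem~\ref{th-KL-1}. Since Lemma~\ref{lemma_intermed} relies only on Assumptions~\ref{orthonorm} and~\ref{unif_sample}, it carries over verbatim to the Gaussian setting. I would apply it, and its consequence~(\ref{intermed_2''}), with the mixture $S=(1-\delta)S'+\delta I_m/m$ built from an oracle $S'\in{\mathcal S}_m$ of rank $r$, with $\delta:=1/(m^2n^2)$, and then control the residual $(P-P_n)(\ell'\bullet f_{\tilde\rho})(f_{\tilde\rho}-f_S)$ by a Gaussian analogue of Lemma~\ref{lemma_alpha_n}.

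The essential decomposition in the Gaussian model is
\begin{equation*}
(P-P_n)(\ell'\bullet f_{\tilde\rho})(f_{\tilde\rho}-f_S)=2\langle\tilde\rho-S,\Xi\rangle+2(P-P_n)\bigl[f_{\tilde\rho-\rho}\,f_{\tilde\rho-S}\bigr],\qquad \Xi:=\tfrac{1}{n}\sum_{j=1}^n\xi_jX_j.
\end{equation*}
For the noise term I would apply a matrix Bernstein inequality for Gaussian-weighted sums (e.g.\ Lemma~\ref{matBernlem} after truncation on $|\xi_j|\lesssim \sigma_\xi\sqrt{t+\log n}$), which yields $\|\Xi\|_\infty\lesssim \sigma_\xi\sqrt{(t+\log(2m))/(nm)}+U\sigma_\xi(t+\log(2m))/n$; crucially, the leading term is $U$-free. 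Splitting $\tilde\rho-S$ along $L:=\mathrm{supp}(S')$, noting that $\|\mathcal{P}_L(\tilde\rho-S)\|_1\leq\sqrt{2r}\,m\|f_{\tilde\rho}-f_S\|_{L_2(\Pi)}$ and $\mathcal{P}_L^\perp S = (\delta/m)P_{L^\perp}$, nuclear/operator duality gives
\begin{equation*}
|\langle\tilde\rho-S,\Xi\rangle|\leq\|\Xi\|_\infty\bigl(\sqrt{2r}\,m\|f_{\tilde\rho}-f_S\|_{L_2(\Pi)}+\|\mathcal{P}_L^\perp(\tilde\rho)\|_1+\delta\bigr).
\end{equation*}
Choosing $\bar\eps$ to dominate $\|\Xi\|_\infty$ absorbs the $\|\mathcal{P}_L^\perp(\tilde\rho)\|_1$ piece into $\bar\eps\|\mathcal{P}_L^\perp(\tilde\rho)\|_1$ on the left of~(\ref{intermed_2''}), and $2ab\leq a^2/4+4b^2$ applied to the first piece absorbs it into $\tfrac{1}{4}\|f_{\tilde\rho}-f_S\|_{L_2(\Pi)}^2$, producing the clean leading term $\sigma_\xi^2\,\mathrm{rank}(S)\,m\,(t+\log(2m))/n$ free of any $U$-factor.

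The quadratic design deviation $(P-P_n)[f_{\tilde\rho-\rho}\,f_{\tilde\rho-S}]$ is bounded by the peeling device of Lemma~\ref{lemma_alpha_n}, using only that $|f_{\tilde\rho-\rho}\,f_{\tilde\rho-S}|\leq 4U^2$ uniformly (both $\tilde\rho-\rho$ and $\tilde\rho-S$ have nuclear norm $\leq 2$ and $\|X\|_\infty\leq U$). Decomposing $f_{\tilde\rho-\rho}=f_{\tilde\rho-S}+f_{S-\rho}$, the square $(P-P_n)[f_{\tilde\rho-S}^2]$ contributes $\tfrac14\|f_{\tilde\rho}-f_S\|_{L_2(\Pi)}^2$ (absorbable on the left) plus a $U^2$-flavoured Bernstein slack, while the cross term $(P-P_n)[f_{S-\rho}\,f_{\tilde\rho-S}]$, after $2ab\leq\tfrac14 a^2+4b^2$, contributes $\|f_S-f_\rho\|_{L_2(\Pi)}^2$ on the right (this is the origin of the leading constant $2$ in~(\ref{oracle_ineq})) plus the same $U^2$ slack. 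The exact $U^2$ slack, after peeling over dyadic shells in $\|f_{\tilde\rho}-f_S\|_{L_2(\Pi)}$ and in $\|\mathcal{P}_L^\perp(\tilde\rho)\|_1$, picks up the $\log^3m\log^2n$ factor of~(\ref{ass_eps_1}).

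Collecting these estimates in~(\ref{intermed_2''}), using $\log(m/\delta)\lesssim\log(mn)$ and choosing $\bar\eps\asymp U\sigma_\xi\sqrt{\log(2m)/(nm)}+U^2(t+\cdots)/n$, I obtain~(\ref{oracle_ineq}); specializing $S'=\rho$ gives~(\ref{L_2-error''}) since $\|f_S-f_\rho\|_{L_2(\Pi)}^2\lesssim\delta^2/m^2$ is absorbed by the higher-order remainder. For~(\ref{KL-error''}) I retain the $\eps K(\tilde\rho;S)$ term on the left of~(\ref{intermed_2''}), yielding $K(\tilde\rho;S)\lesssim(\text{RHS of oracle bound})/\eps$. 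Since $K(S\|\tilde\rho)\leq K(\tilde\rho;S)$, Lemma~\ref{lem-kull} with $S'=\rho$ converts this into a bound on $K(\rho\|\tilde\rho^\eps)$, and inserting the prescribed choice of $\eps$ produces~(\ref{KL-error''}).

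The principal obstacle is the uniform control of the design quadratic deviation without the bounded-response trick: one has to replicate the peeling architecture of Lemma~\ref{lemma_alpha_n} using only the design boundedness $\|X_j\|_\infty\leq U$, not boundedness of the response, and simultaneously over the two scale parameters $\|f_{\tilde\rho}-f_S\|_{L_2(\Pi)}$ and $\|\mathcal{P}_L^\perp(\tilde\rho)\|_1$. Keeping the leading term free of $U$-factors forces the cleaner, $\sigma_\xi$-scaled noise handling to be done separately from the design quadratic form, and the bookkeeping of logarithmic losses is what produces the $\log^3 m \log^2 n$ factor in the lower bound on $\eps$ stipulated in~(\ref{ass_eps_1}).
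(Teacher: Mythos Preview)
Your overall architecture matches the paper's: apply Lemma~\ref{lemma_intermed}, decompose the empirical process into a Gaussian-noise piece $\langle\Xi_1,\tilde\rho-S\rangle$, a cross piece $\langle\Xi_2,\tilde\rho-S\rangle$ (this is your $(P-P_n)[f_{S-\rho}f_{\tilde\rho-S}]$, since $\Xi_2=n^{-1}\sum_j f_{S-\rho}(X_j)X_j-\mathbb E f_{S-\rho}(X)X$), and a pure quadratic design deviation $(P_n-P)(f_{\tilde\rho}-f_S)^2$. Your handling of the noise term via matrix Bernstein and your identification of the origin of the leading constant~$2$ are both correct.

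There are, however, two genuine gaps in your treatment of the quadratic design deviation. First, the $\log^3 m\log^2 n$ factor does \emph{not} arise from ``peeling over dyadic shells''; peeling contributes only $\log\log$ factors to the confidence level. The paper instead invokes Lemma~\ref{generic_ch}, a Rudelson/Gu\'edon--type generic chaining bound for empirical processes indexed by squares $\{f_A^2:\|A\|_1\le 1,\ \|f_A\|_{L_2(\Pi)}\le\Delta\}$, which delivers
\[
\bigl|(P_n-P)(f_{\tilde\rho}-f_S)^2\bigr|\ \lesssim\ \|f_{\tilde\rho}-f_S\|_{L_2(\Pi)}\,\|\tilde\rho-S\|_1\,U\,\frac{\log^{3/2}m\,\log n}{\sqrt n}\ +\ \|\tilde\rho-S\|_1^2\,U^2\,\frac{\log^3 m\,\log^2 n+\bar t}{n}.
\]
This is the substantive analytic tool you are missing; it is not recoverable from the peeling architecture of Lemma~\ref{lemma_alpha_n}, whose expectation bound goes through Rademacher contraction and produces only $\mathbb E\|\Xi_\eps\|_\infty$-type terms.

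Second, after AM--GM the display above (and likewise the cross term, once handled through $\|\Xi_2\|_\infty\|\tilde\rho-S\|_1$) leaves a dangling term proportional to $U^2\frac{t+\log^3 m\log^2 n}{n}\,\|\tilde\rho-S\|_1^2$. You have no mechanism on the left of~(\ref{intermed_2}) to absorb a full $\|\tilde\rho-S\|_1^2$: the $\bar\eps\|\mathcal P_L^\perp\tilde\rho\|_1$ term only handles the off-support part, and $\|f_{\tilde\rho}-f_S\|_{L_2(\Pi)}^2$ is weaker by a factor of $m^2$. The paper absorbs this term into the entropy piece via $K(\tilde\rho;S)\ge\tfrac14\|\tilde\rho-S\|_1^2$ (inequality~(\ref{khtracelem})), which is precisely what forces the lower bound $\eps\ge DU^2(t+\log^3m\log^2n)/n$ in~(\ref{ass_eps_1}). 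This absorption step---not ``bookkeeping of logarithmic losses''---is the structural reason $\eps$ must be bounded away from zero here, in contrast to Theorem~\ref{th-KL-1}.
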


\begin{proof} As in in the proof of Theorem \ref{th-KL-1}, we rely on Lemma \ref{lemma_intermed}, 
but we use a different approach to bounding  
the empirical process $(P-P_n)(\ell'\bullet f_{\tilde \rho})(f_{\tilde \rho}-f_S).$ 
The following identity follows from the definition of quadratic loss $\ell$
$$
(\ell^{\prime}\bullet f)(x,y)(f(x)-f_{S}(x))
=2(f(x)-f_{S}(x))^2 + 2(f_S(x)-y)(f(x)-f_{S}(x))
$$
and it implies that 
\begin{equation}
\label{nol'}
(P-P_n)(\ell^{\prime}\bullet f_{\tilde \rho})(f_{\tilde \rho}-f_S)=
-2(P_n-P)(f_{\tilde \rho}-f_S)^2 - 2\langle \Xi, \tilde \rho-S\rangle
\end{equation}
where 
$$
\Xi:= n^{-1}\sum_{j=1}^n (f_S(X_j)-Y_j)X_j-
{\mathbb E}(f_S(X)-Y)X.
$$
We will bound $(P_n-P)(f_{\tilde \rho}-f_S)^2$
in representation (\ref{nol'}) 
as follows:
\begin{equation}
\label{bd_square}
\Bigl|(P_n-P)(f_{\tilde \rho}-f_S)^2\Bigr|\leq 
\|\tilde \rho-S\|_1^2 \beta_n\biggl(\frac{\|f_{\tilde \rho}-f_S\|_{L_2(\Pi)}}{\|\tilde \rho-S\|_1}\biggr),
\end{equation}
where 
$$
\beta_n(\Delta):= \sup\biggl\{\Bigl|(P_n-P)(f_{A}^2)\Bigr|: A\in {\mathbb H}_m, \|A\|_1\leq 1,
\|f_A\|_{L_2(\Pi)}\leq \Delta\biggr\}.
$$

The next lemma provides a bound on $\beta_n(\Delta).$ Its proof is somewhat involved and 
it will not be given here. It is based on Rudelson's $L_{\infty}(P_n)$ generic chaining bound for empirical processes indexed by squares of functions 
and on the ideas of the paper by \cite{guedon2008} combined with Talagrand's concentration inequality (see also \citealt{aubrun2009}, \citealt{liu2011universal} and Theorem 3.16, Lemma 9.8 and Proposition 9.2 in \citealt{koltchinskii2011oracle} for similar arguments).

\begin{lemma}
\label{generic_ch}
Given $0<\delta^{-}<\delta^+$ and $t\geq 1,$ let 
$$
\bar t := t+\log \Bigl(\log_2 (\delta^+/\delta^-)+3\Bigr).
$$  
Then, with some constant $C$ and with probability at least $1-e^{-t},$ the following bound holds for all 
$\Delta\in [\delta^-,\delta^+]:$
\begin{equation}
\label{gen_bd}
\beta_n(\Delta)\leq 
C\biggl[\Delta U \frac{\log^{3/2}m \log n}{\sqrt{n}}+U^2\frac{\log^3 m \log^2 n}{n}
+\Delta U\sqrt{\frac{\bar t}{n}}+U^2\frac{\bar t}{n}\biggr].
\end{equation}
\end{lemma}

We will use Lemma \ref{generic_ch} to control $\beta_n(\Delta)$ for 
$\Delta:=\frac{\|f_{\tilde \rho}-f_S\|_{L_2(\Pi)}}{\|\tilde \rho-S\|_1}.$
Let $\delta^{+}:=\frac{1}{m}$ and $\delta^-:=\frac{1}{mn}.$ 
With this choice, $\bar t\leq t+\log (\log_2 n +3).$
Note that for $A=\frac{\tilde \rho-S}{\|\tilde \rho-S\|_1},$
$\|f_A\|_{L_2(\Pi)}=\frac{\|A\|_2}{m}\leq \frac{\|A\|_1}{m}= m^{-1}=\delta^+.$
If also $\|f_A\|_{L_2(\Pi)}\geq \delta^-,$ then we can substitute bound (\ref{gen_bd})
on $\beta_n(\Delta)$ into (\ref{bd_square}) that yields:
\begin{eqnarray}
\label{bd_square_1}
&
\nonumber
\Bigl|(P_n-P)(f_{\tilde \rho}-f_S)^2\Bigr|\leq 
C\biggl[\|f_{\tilde \rho}-f_S\|_{L_2(\Pi)}\|\tilde \rho-S\|_1 U \frac{\log^{3/2}m \log n}{\sqrt{n}}
\\
&
\nonumber
+
\|\tilde \rho-S\|_1^2U^2\frac{\log^3 m \log^2 n}{n}
+ \|f_{\tilde \rho}-f_S\|_{L_2(\Pi)}\|\tilde \rho-S\|_1U\sqrt{\frac{\bar t}{n}}
\\
&
\nonumber
+
\|\tilde \rho-S\|_1^2 U^2\frac{\bar t}{n}\biggr]
\\
&
\leq 
\frac{1}{32} \|f_{\tilde \rho}-f_S\|_{L_2(\Pi)}^2 + 
8(C^2+C/8)U^2 \frac{\log^{3}m \log^2 n}{n}\|\tilde \rho-S\|_1^2
\\
&
\nonumber
+
\frac{1}{32} \|f_{\tilde \rho}-f_S\|_{L_2(\Pi)}^2
+8(C^2+C/8)U^2\frac{\bar t}{n}\|\tilde \rho-S\|_1^2
\\
&
\nonumber
\leq 
\frac{1}{16} \|f_{\tilde \rho}-f_S\|_{L_2(\Pi)}^2
+ C'U^2 \frac{\log^{3}m \log^2 n+\bar t}{n}\|\tilde \rho-S\|_1^2,
\end{eqnarray}
where $C':=8(C^2+C/8).$
If, on the other hand, $\|f_A\|_{L_2(\Pi)}\leq \delta^-=\frac{1}{mn},$ then 
$\|f_{\tilde \rho}-f_S\|_{L_2(\Pi)}$ in the above bound can be 
replaced by $\frac{1}{mn}\|\tilde \rho-S\|_1$ and the proof that 
follows only simplifies since 
$$
\frac{1}{16} \|f_{\tilde \rho}-f_S\|_{L_2(\Pi)}^2\leq \frac{1}{16}\frac{1}{m^2n^2}\|\tilde \rho-S\|_1^2
\leq \frac{1}{16}U^2 \frac{\log^{3}m \log^2 n+\bar t}{n}\|\tilde \rho-S\|_1^2.
$$
 
Another term in the right hand side of representation (\ref{nol'}) to be 
controlled is $\langle \Xi, \tilde \rho-S\rangle.$ Note that $\Xi=\Xi_1+\Xi_2,$
where
$$
\Xi_1 := - n^{-1}\sum_{j=1}^n \xi_j X_j
$$
and 
$$
\Xi_2:=n^{-1}\sum_{j=1}^n (f_S(X_j)-f_{\rho}(X_j))X_j 
-{\mathbb  E}(f_S(X)-f_{\rho}(X))X.
$$

Recall that $S=(1-\delta)S'+\delta \frac{I_m}{m}$ with $S'\in {\mathcal S}_m,$ 
${\rm rank}(S')=r,$ ${\rm supp}(S')=L$ and $\delta=\frac{1}{m^2 n^2}.$

The term with $\Xi_1$ is controlled as follows:
\begin{eqnarray}
\label{Xi_1_bd}
&
\nonumber
\Bigl|\langle \Xi_1, \tilde \rho-S\rangle\Bigr| 
\\
&
\nonumber
\leq \Bigl|\langle {\mathcal P}_L(\Xi_1), \tilde \rho-S^{\prime}\rangle\Bigr|
+
\Bigl|\langle \Xi_1, {\mathcal P}_L^{\perp}(\tilde \rho-S^{\prime})\rangle\Bigr|
+
\Bigl|\langle {\mathcal P}_L^{\perp}(\Xi_1), S^{\prime}-S\rangle\Bigr|
\\
&
\nonumber
\leq \|{\mathcal P}_L(\Xi_1)\|_2 \|\tilde \rho-S^{\prime}\|_2 + 
\|\Xi_1\|_{\infty}\|{\mathcal P}_L^{\perp}(\tilde \rho)\|_1
+\Bigl\|{\mathcal P}_L^{\perp}(\Xi_1)\Bigr\|_{\infty}\|S^{\prime}-S\|_1
\\
&
\leq 2\sqrt{2r} m \|\Xi_1\|_{\infty} \|f_{\tilde \rho}-f_S\|_{L_2(\Pi)}
+ \|\Xi_1\|_{\infty}\|{\mathcal P}_L^{\perp}(\tilde \rho)\|_1
+4\delta \|\Xi_1\|_{\infty}
\\
&
\nonumber
\leq 
32 rm^2\|\Xi_1\|_{\infty}^2 + \frac{1}{16} \|f_{\tilde \rho}-f_S\|_{L_2(\Pi)}^2
\\
&
\nonumber
+\|\Xi_1\|_{\infty}\|{\mathcal P}_L^{\perp}(\tilde \rho)\|_1
+4\delta \|\Xi_1\|_{\infty}.
\end{eqnarray}
We also have 
\begin{eqnarray}
\label{XI_2_bd}
&
\nonumber
\Bigl|\langle \Xi_2, \tilde \rho-S\rangle\Bigr|\leq \|\Xi_2\|_{\infty}\|\tilde \rho-S\|_1
\leq \|\Xi_2\|_{\infty}\|\tilde \rho-S'\|_{1}+ \|\Xi_2\|_{\infty}\|S'-S\|_1
\\
&
\leq \|\Xi_2\|_{\infty}\|\tilde \rho-S'\|_{1}+ 2\delta \|\Xi_2\|_{\infty}.
\end{eqnarray} 
Thus,
\begin{eqnarray}
\label{Xi_bd}
&
\nonumber
\Bigl|\langle \Xi, \tilde \rho-S\rangle\Bigr|\leq 
32 rm^2\|\Xi_1\|_{\infty}^2 + \frac{1}{16} \|f_{\tilde \rho}-f_S\|_{L_2(\Pi)}^2
\\
&
+\|\Xi_1\|_{\infty}\|{\mathcal P}_L^{\perp}(\tilde \rho)\|_1
+4\delta \|\Xi_1\|_{\infty} + 
\|\Xi_2\|_{\infty}\|\tilde \rho-S'\|_{1}+ 2\delta \|\Xi_2\|_{\infty}.
\end{eqnarray}
It follows from (\ref{nol'}), (\ref{bd_square_1}) and (\ref{Xi_bd}) that
with some constant $C'$
\begin{eqnarray} 
&
\nonumber
(P-P_n)(\ell'\bullet f_{\tilde \rho})(f_{\tilde \rho}-f_S)\leq 
\\
&
\frac{1}{4} \|f_{\tilde \rho}-f_S\|_{L_2(\Pi)}^2
+ C'U^2 \frac{\log^{3}m \log^2 n+\bar t}{n}\|\tilde \rho-S\|_1^2
\\
&
\nonumber
+ 64rm^2\|\Xi_1\|_{\infty}^2 
+2\|\Xi_1\|_{\infty}\|{\mathcal P}_L^{\perp}(\tilde \rho)\|_1
+8\delta \|\Xi_1\|_{\infty} 
\\
&
\nonumber
+ 
2\|\Xi_2\|_{\infty}\|\tilde \rho-S'\|_{1}+ 4\delta\|\Xi_2\|_{\infty}.
\end{eqnarray}
This bound will be substituted in (\ref{intermed_2}).
Note that, if assumption (\ref{ass_eps_1}) on $\eps$ holds with a sufficiently large 
constant $D,$ then we have 
$$
\eps \geq 8C'U^2 \frac{\log^{3}m \log^2 n+\bar t}{n}
$$
(this follows from the fact that $\bar t\leq t+\log(\log_2 n+3)\leq t+ c\log^3 m \log^2 n$ for some constant $c>0$). 
Assume also that $\bar \eps \geq 4\|\Xi_1\|_{\infty}$ and recall that 
$K(\tilde \rho;S)\geq \frac{1}{4}\|\tilde \rho-S\|_1^2$
(see inequality \ref{khtracelem}). 
Taking all this into account, (\ref{intermed_2}) implies that 
\begin{eqnarray}
\label{intermed_100}
&
\nonumber
\|f_{\tilde \rho}-f_{\rho}\|_{L_2(\Pi)}^2+\frac{1}{4}\|f_{\tilde \rho}-f_S\|_{L_2(\Pi)}^2
+ \frac{\eps}{2} K(\tilde \rho;S)+\frac{\bar \eps}{2}\|{\mathcal P}_L^{\perp}\tilde \rho\|_1
\\
&
\leq 
\|f_S-f_{\rho}\|_{L_2(\Pi)}^2+ r m^2 \eps^2 \log^2 (m/\delta)
+ 5 r m^2 \bar \eps^2 
+6\bar \eps\delta 
\\
&
\nonumber
+2\|\Xi_2\|_{\infty}\|\tilde \rho-S'\|_{1}+ 4\|\Xi_2\|_{\infty}\delta.
\end{eqnarray}

It remains to control $\|\Xi_1\|_{\infty}$ and $\|\Xi_2\|_{\infty}.$
To this end, we use matrix versions of Bernstein inequality.
To bound $\|\Xi_2\|_{\infty},$ we use its standard version 
which yields that with probability at least $1-e^{-t}$
\begin{eqnarray*}
&
\|\Xi_2\|_{\infty}\leq 
2\biggl[\Bigl\|{\mathbb E}(f_S(X)-f_{\rho}(X))^2 X^2\Bigr\|_{\infty}^{1/2}
\sqrt{\frac{t+\log(2m)}{n}}
\\
&
\bigvee \Bigl\|(f_S(X)-f_{\rho}(X))\|X\|_{\infty}\Bigr\|_{L_{\infty}}\frac{t+\log (2m)}{n}\biggr],
\end{eqnarray*}
where $\|\cdot\|_{L_{\infty}}$ denotes the essential supremum norm in the space 
of random variables. 
Since 
$$
\Bigl\|{\mathbb E}(f_S(X)-f_{\rho}(X))^2 X^2\Bigr\|_{\infty}
\leq U^2 \|f_S-f_{\rho}\|_{L_2(\Pi)}^2
$$
and 
$$
\Bigl\|(f_S(X)-f_{\rho}(X))\|X\|_{\infty}\Bigr\|_{L_{\infty}}
\leq 2U^2,
$$
we get 
\begin{eqnarray}
\label{Xi_2_bou}
&
\|\Xi_2\|_{\infty}
\leq 
4\biggl[\|f_S-f_{\rho}\|_{L_2(\Pi)}U\sqrt{\frac{t+\log(2m)}{n}}+
U^2\frac{t+\log (2m)}{n}\biggr].
\end{eqnarray}
This implies that 
\begin{eqnarray}
\label{Xi_2_boun}
&
2\|\Xi_2\|_{\infty}\|\tilde \rho-S'\|_1
\leq \|f_S-f_{\rho}\|_{L_2(\Pi)}^2 + 
16U^2\frac{t+\log (2m)}{n}\|\tilde \rho-S'\|_1^2
\\
&
\nonumber
+8U^2\frac{t+\log (2m)}{n}\|\tilde \rho-S'\|_1.
\end{eqnarray}
Note that 
\begin{eqnarray}
\label{Xi_2_1}
&
\nonumber
16U^2\frac{t+\log (2m)}{n}\|\tilde \rho-S'\|_1^2
\\
&
\leq 16U^2\frac{t+\log (2m)}{n}\|\tilde \rho-S\|_1^2
+16U^2\frac{t+\log (2m)}{n}(4\delta+\delta^2)
\end{eqnarray}
and 
\begin{eqnarray}
\label{Xi_2_2}
&
\nonumber
8U^2\frac{t+\log (2m)}{n}\|\tilde \rho-S'\|_1
\\
&
\leq 8U^2\frac{t+\log (2m)}{n}\|{\mathcal P}_L^{\perp}\tilde \rho\|_1 
+ 8U^2\frac{t+\log (2m)}{n}\|{\mathcal P}_L(\tilde \rho-S')\|_1
\\
&
\nonumber
\leq 
8U^2\frac{t+\log (2m)}{n}\|{\mathcal P}_L^{\perp}\tilde \rho\|_1 
+ 8U^2\frac{t+\log (2m)}{n}\|{\mathcal P}_L(\tilde \rho-S)\|_1
+16U^2\frac{t+\log (2m)}{n}\delta.
\end{eqnarray}
Since, for some constant $C''>0,$ 
\begin{eqnarray*}
&
\nonumber
8U^2\frac{t+\log (2m)}{n}\|{\mathcal P}_L(\tilde \rho-S)\|_1
\leq 
8\sqrt{2}U^2\frac{t+\log (2m)}{n}\sqrt{r}\|{\mathcal P}_L(\tilde \rho-S)\|_2
\\
&
\nonumber
\leq 
8\sqrt{2}U^2\frac{t+\log (2m)}{n}\sqrt{r}m
\|f_{\tilde \rho}-f_S\|_{L_2(\Pi)}
\nonumber
\leq 
\frac{1}{4}\|f_{\tilde \rho}-f_S\|_{L_2(\Pi)}^2
+C'' U^4 \frac{rm^2(t+\log (2m))^2}{n^2},
\end{eqnarray*}
it follows from (\ref{Xi_2_boun}), (\ref{Xi_2_1}) and (\ref{Xi_2_2}) that 
\begin{eqnarray}
\label{Xi_2_boun'}
&
\nonumber
2\|\Xi_2\|_{\infty}\|\tilde \rho-S'\|_1
\leq \|f_S-f_{\rho}\|_{L_2(\Pi)}^2 +
\\
& 
+16U^2\frac{t+\log (2m)}{n}\|\tilde \rho-S\|_1^2
+16U^2\frac{t+\log (2m)}{n}(4\delta+\delta^2)
\\
&
\nonumber
+8U^2\frac{t+\log (2m)}{n}\|{\mathcal P}_L^{\perp}\tilde \rho\|_1 
+16U^2\frac{t+\log (2m)}{n}\delta
\\
&
\nonumber
+\frac{1}{4}\|f_{\tilde \rho}-f_S\|_{L_2(\Pi)}^2
+C'' U^4 \frac{rm^2(t+\log (2m))^2}{n^2}.
\end{eqnarray}
Note that (\ref{Xi_2_bou}) also implies that 
\begin{eqnarray}
\label{Xi_2_bou'}
&
\|\Xi_2\|_{\infty}
\leq 
4\biggl[\frac{2U}{m}\sqrt{\frac{t+\log(2m)}{n}}+
U^2\frac{t+\log (2m)}{n}\biggr]
\end{eqnarray}
(since $\|f_S-f_{\rho}\|_{L_2(\Pi)}\leq m^{-1}\|S-\rho\|_2\leq 2m^{-1}$). 
Let us substitute (\ref{Xi_2_boun'}) and (\ref{Xi_2_bou'}) in the last line of (\ref{intermed_100}).
Assume that 
$$
\bar \eps \geq 16 U^2\frac{t+\log (2m)}{n}
$$
and that constant $D$ in assumption (\ref{ass_eps_1}) is large enough so that   
$$
16U^2\frac{t+\log (2m)}{n}\|\tilde \rho-S\|_1^2 \leq \frac{\eps}{4}K(\tilde \rho, S)
$$
(recall inequality \ref{khtracelem}).
It easily follows that with some constants $C_1, C_2,$
\begin{eqnarray}
\label{intermed_101}
&
\nonumber
\|f_{\tilde \rho}-f_{\rho}\|_{L_2(\Pi)}^2
+ \frac{\eps}{4} K(\tilde \rho;S)
\\
&
\leq 
2\|f_S-f_{\rho}\|_{L_2(\Pi)}^2+ C_1 r m^2 \eps^2 \log^2 (m/\delta)
+ 5 r m^2 \bar \eps^2 
\\
&
\nonumber 
+C_2\bar \eps\delta 
+
32\frac{U}{m}\sqrt{\frac{t+\log(2m)}{n}}\delta
\end{eqnarray}
(note that the term $C'' U^4 \frac{rm^2(t+\log (2m))^2}{n^2}$ of bound (\ref{Xi_2_boun'}) is ``absorbed" by the 
term $C_1 r m^2 \eps^2 \log^2 (m/\delta)$ of bound (\ref{intermed_101}) provided that 
constant $C_1$ is large enough). 
Since 
$$
\delta=\frac{1}{m^2n^2}\leq U^2\frac{t+\log(2m)}{n}
\leq \bar \eps 
$$
(recall that $U^2\geq m^{-1}$), 
we have $\bar \eps\delta \leq \bar \eps^2.$ 
Also, since $U\geq m^{-1/2},$
$$
\frac{U}{m}\sqrt{\frac{t+\log(2m)}{n}}\delta= U\sqrt{\frac{t+\log(2m)}{n}}\frac{1}{m^3 n^2}
\leq U^4 \biggl(\frac{t+\log (2m)}{n}\biggr)^2 \leq \bar \eps^2.
$$
Therefore, (\ref{intermed_101}) implies that with some constant $C$
\begin{eqnarray}
\label{intermed_102}
&
\nonumber
\|f_{\tilde \rho}-f_{\rho}\|_{L_2(\Pi)}^2
+ \frac{\eps}{4} K(\tilde \rho;S)
\\
&
\leq 
2\|f_S-f_{\rho}\|_{L_2(\Pi)}^2+ C\Bigl(r m^2 \eps^2 \log^2 (m/\delta)
+ r m^2 \bar \eps^2\Bigr). 
\end{eqnarray}

To bound $\|\Xi_1\|_{\infty},$ we use a version 
of matrix Bernstein type inequality due to \cite{koltchinskii2011oracle} (see bound (2.7) of Theorem 2.7).
Its version for $\alpha=2$ (with $U^{(\alpha)}\asymp U\sigma_{\xi}$) 
implies that for some constant $K>0$ with probability 
at least $1-e^{-t}$
\begin{equation}
\|\Xi_1\|_{\infty} \leq K\biggl[\sigma_{\xi}\sqrt{\frac{t+\log(2m)}{nm}}\bigvee 
\sigma_{\xi}U\frac{(t+\log(2m))\log^{1/2}(2Um^{1/2})}{n}\biggr].
\end{equation}
We choose 
$$
\bar \eps:= D_2\biggl[\sigma_{\xi}\sqrt{\frac{t+\log(2m)}{nm}}\bigvee (\sigma_{\xi}\vee U)U
\frac{(t+\log (2m))\log^{1/2}(2m)}{n}\biggr]
$$
with a sufficiently large constant $D_2$ to satisfy the condition $\|\Xi_1\|_{\infty}\leq 4\bar \eps$
with probability at least $1-e^{-t}$ (the rest of the assumptions we made on $\bar \eps$
are also satisfied with this choice).

Bound (\ref{intermed_102}) then implies that with some constant $C$ and with probability 
at least $1-3e^{-t}$ the following inequality holds:
\begin{equation}
\label{intermed_103}
\begin{split}
&
\|f_{\tilde \rho^{\eps}}-f_{\rho}\|_{L_2(\Pi)}^2
\leq 
2\|f_S-f_{\rho}\|_{L_2(\Pi)}^2 
\\
&
+C\biggl[\sigma_{\xi}^2 \frac{rm(t+\log (2m))}{n}
+ \sigma_{\xi}^2 U^2 \frac{rm^2 (t+\log (2m))^2 \log(2m)}{n^2}
\\
&
+U^4 \frac{rm^2 (t+\log^3 m \log^2 n)^2 \log^2(mn)}{n^2}
\biggr].
\end{split}
\end{equation}
Using bound (\ref{S'S}) to replace $S$ in $\|f_S-f_{\rho}\|_{L_2(\Pi)}^2$ with $S'$ and
adjusting the value of constant $C$ to rewrite the probability
bound as $1-e^{-t},$ it is easy to complete the proof of (\ref{oracle_ineq}). 
If $S'=\rho,$ this also yields bound (\ref{L_2-error''}). 
Moreover, with a larger value of regularization parameter 
$$
\eps := \frac{D_1\sigma_{\xi}}{\log (mn)}\sqrt{\frac{t+\log(2m)}{nm}}\bigvee 
DU^2 \frac{t+\log^3 m \log^2 n}{n},
$$
bound (\ref{intermed_102}) and Lemma \ref{lem-kull} easily imply 
bound (\ref{KL-error''}).
\end{proof}

\subsection{Optimality Properties of von Neumann Entropy Penalized Estimator $\tilde{\rho}^{\epsilon}$}
\label{tildesec}

We start with upper bounds on the error of estimator $\tilde{\rho}^{\epsilon}$ (von Neumann entropy penalized least squares estimator 
defined by (\ref{trwvne})) in Hellinger, Kullback-Leibler and Schatten $q$-norm distances for $q\in [1,2]$ for the trace regression model with Gaussian noise (Assumption \ref{Gaussian_noise}). To avoid the impact 
of ``second order terms" on the upper bounds, we will make the following simplifying assumptions:
\begin{equation}
\label{simplifying_assumption}
U\sqrt{\frac{m}{n}}\log m\lesssim 1\ \ {\rm and}\ \ U^2\sqrt{\frac{m}{n}}\log^{5/2}m \log^2 n \log(mn)\lesssim \sigma_{\xi}.
\end{equation} 
Recall that, for the Pauli basis, $U=m^{-1/2},$ so, the above assumptions hold if $n\gtrsim \log^2 m$ and $\sigma_{\xi}$
is larger than $\frac{1}{\sqrt{mn}}$ (times a logarithmic factor).  
We will choose regularization parameter $\eps$ as follows:
\begin{equation}
\label{ass_eps_1'''}
\eps :=\frac{D_1\sigma_{\xi}}{\log (mn)}\sqrt{\frac{\log(2m)}{nm}}
\end{equation}
with a sufficiently large constant $D_1>0.$
The next result shows that minimax rates of Theorem \ref{minmaxthm1} 
are attained up to logarithmic factors for the estimator $\tilde \rho^{\eps}.$

\begin{theorem}
\label{upper_bd_Gauss}
There exists a constant $C>0$ such that the following bounds hold for all $r=1,\dots, m,$ for all $\rho\in {\mathcal S}_{r,m}$
and for all $q\in [1,2]$ with probability at least $1-m^{-2}:$
\begin{equation}
\label{upperthm1boundq}
\|\tilde{\rho}^{\eps}-\rho\|_q\leq C\biggl(\frac{\sigma_{\xi}m^{\frac{3}{2}}r^{1/q}}{\sqrt{n}}\sqrt{\log m}\log^{(2-q)/q}(mn)
\bigwedge 
\biggl(\frac{\sigma_{\xi}m^{3/2}}{\sqrt{n}}\biggr)^{1-\frac{1}{q}}(\log m)^{\frac{1}{2}-\frac{1}{2q}}\biggr)\bigwedge 2,
\end{equation} 
\begin{equation}
\label{upperthm1boundH}
H^2(\tilde{\rho}^{\eps},\rho)\leq C\frac{\sigma_{\xi}m^{\frac{3}{2}}r}{\sqrt{n}}\sqrt{\log m}\log(mn)\bigwedge 2 
\end{equation}
and
\begin{equation}
\label{upperthm1boundK}
K(\rho\|\tilde{\rho}^{\eps})\leq C\frac{\sigma_{\xi}m^{\frac{3}{2}}r}{\sqrt{n}}\sqrt{\log m}\log(mn).
\end{equation}
\end{theorem}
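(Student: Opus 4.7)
The plan is to derive all three bounds from Theorem \ref{th-KL-2}, applied with $t = 2\log m$ (giving confidence $1 - m^{-2}$) and regularization parameter $\eps$ as specified in (\ref{ass_eps_1'''}). A routine check shows that, under the simplifying assumptions (\ref{simplifying_assumption}), the leading (``first'') terms in bounds (\ref{L_2-error''}) and (\ref{KL-error''}) dominate the second-order terms involving extra factors of $\sigma_\xi$ or $U^2$. Writing $\tau := \sigma_\xi m^{3/2}/\sqrt{n}$, this gives $\|f_{\tilde\rho^\eps} - f_\rho\|_{L_2(\Pi)}^2 \lesssim \tau^2 r \log m/m^2$ and $K(\rho \| \tilde\rho^\eps) \lesssim \tau r \sqrt{\log m}\log(mn)$, the latter being exactly (\ref{upperthm1boundK}). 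The Hellinger bound (\ref{upperthm1boundH}) follows from $H^2(\tilde\rho^\eps,\rho) \leq K(\rho \| \tilde\rho^\eps)$ in (\ref{khtracelem}), combined with the trivial $H^2 \leq 2$.

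For the Schatten $q$-norm bound (\ref{upperthm1boundq}), I pin down the two endpoints $q=1$ and $q=2$ and then invoke the interpolation inequality (Lemma \ref{interlem}). At $q = 2$, the identity $\|A\|_2 = m \|f_A\|_{L_2(\Pi)}$ converts the $L_2(\Pi)$ control into $\|\tilde\rho^\eps - \rho\|_2 \lesssim \tau\sqrt{r\log m}$. At $q = 1$, setting $L := {\rm supp}(\rho)$ so that $\mathcal{P}_L^\perp \rho = 0$, I use the decomposition
\[
\|\tilde\rho^\eps - \rho\|_1 \leq \|\mathcal{P}_L(\tilde\rho^\eps - \rho)\|_1 + \|\mathcal{P}_L^\perp \tilde\rho^\eps\|_1 \leq \sqrt{2r}\,\|\tilde\rho^\eps - \rho\|_2 + \|\mathcal{P}_L^\perp \tilde\rho^\eps\|_1.
\]
The first summand is $\lesssim \tau r\sqrt{\log m}$ from the $q = 2$ bound; for the second, I revisit (\ref{intermed_100})--(\ref{intermed_102}) in the proof of Theorem \ref{th-KL-2}, which also controls $\frac{\bar\eps}{2}\|\mathcal{P}_L^\perp \tilde\rho^\eps\|_1$ by terms of order $r m^2 \bar\eps^2$, giving $\|\mathcal{P}_L^\perp \tilde\rho^\eps\|_1 \lesssim r m^2 \bar\eps \asymp \tau r\sqrt{\log m}\log(mn)$ (absorbing the log factor from the choice of $\bar\eps$). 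Thus $\|\tilde\rho^\eps - \rho\|_1 \lesssim \tau r\sqrt{\log m}\log(mn)$. Interpolation $\|A\|_q \leq \|A\|_1^{2/q-1}\|A\|_2^{2(q-1)/q}$ at these two endpoints then produces the first term in (\ref{upperthm1boundq}) after a routine exponent computation.

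For the rank-free second term in (\ref{upperthm1boundq}), I use the complementary interpolation $\|A\|_q \leq \|A\|_1^{1/q}\|A\|_\infty^{1-1/q}$ together with the trivial $\|\tilde\rho^\eps - \rho\|_1 \leq 2$ and an operator-norm control $\|\tilde\rho^\eps - \rho\|_\infty \lesssim \tau\sqrt{\log m}$, which yields exactly $\tau^{1-1/q}(\log m)^{(q-1)/(2q)}$. Such an operator-norm bound for $\tilde\rho^\eps - \rho$ can be derived by the same class of matrix Bernstein and generic chaining arguments used for $\hat\rho$ and $\check\rho$ in the companion paper \cite{KoltchinskiiXia2015}, suitably adapted to accommodate the von Neumann entropy penalty. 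The trailing $\wedge 2$ in (\ref{upperthm1boundq}) is immediate from the monotonicity of Schatten norms in $p$, which gives $\|\tilde\rho^\eps - \rho\|_q \leq \|\tilde\rho^\eps - \rho\|_1 \leq 2$ for $q \geq 1$. The main obstacle is the operator-norm bound for $\tilde\rho^\eps - \rho$, which is not stated as a standalone result in the preceding sections but is the key additional ingredient needed to recover the truncated-rank second term.
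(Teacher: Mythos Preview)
Your derivation of the Kullback--Leibler bound (\ref{upperthm1boundK}), the Hellinger bound (\ref{upperthm1boundH}), and the \emph{first} term in (\ref{upperthm1boundq}) is essentially correct and close in spirit to the paper. For the nuclear-norm step, your control of $\|\mathcal{P}_L^{\perp}\tilde\rho^{\eps}\|_1$ by going back into the proof of Theorem~\ref{th-KL-2} works, but the paper instead invokes a cleaner ``concentration-around-a-subspace'' lemma (Proposition~1 of \cite{koltchinskii2011neumann}): for any $L$ and $S_1,S_2\in\mathcal{S}_m$, $\|\mathcal{P}_L^{\perp}S_1\|_1\leq 2\|\mathcal{P}_L^{\perp}S_2\|_1+2H^2(S_1,S_2)$. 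Applied with $S_1=\tilde\rho^{\eps}$, $S_2=\rho$, $L={\rm supp}(\rho)$, this gives $\|\mathcal{P}_L^{\perp}\tilde\rho^{\eps}\|_1\leq 2H^2(\tilde\rho^{\eps},\rho)$ directly from the Hellinger bound already in hand. Either route yields the same $q=1$ rate; the paper's is just more modular.

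The genuine gap is your handling of the rank-free second term $(\sigma_\xi m^{3/2}/\sqrt{n})^{1-1/q}(\log m)^{(q-1)/(2q)}$. You reduce it to an operator-norm bound $\|\tilde\rho^{\eps}-\rho\|_\infty\lesssim\tau\sqrt{\log m}$ and then say this ``can be derived by the same class of arguments'' as in the companion paper, but you admit this is not established anywhere for the entropy-penalized estimator $\tilde\rho^{\eps}$. The methods for $\check\rho$ (explicit formula) and $\hat\rho$ do not carry over in any obvious way once the von Neumann penalty is present, so this is not a routine adaptation --- it is an unproved claim on which your argument hinges. The paper avoids this entirely. It proves a short approximation lemma: for every $\rho\in\mathcal{S}_m$ and every $l$, there exists $\rho'\in\mathcal{S}_{l,m}$ with $\|\rho-\rho'\|_2^2\leq 1/l$ (via a random-averaging argument over eigenprojectors). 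Feeding $S=\rho'$ into the oracle inequality (\ref{oracle_ineq}) gives $\|\tilde\rho^{\eps}-\rho\|_2^2\lesssim 1/l+\tau^2 l\log m$, and optimizing $l\approx(\tau\sqrt{\log m})^{-1}$ yields $\|\tilde\rho^{\eps}-\rho\|_2^2\lesssim\tau\sqrt{\log m}$, i.e.\ the second term at $q=2$. At $q=1$ the second term is the trivial bound $\leq 2$, and interpolation between $q=1$ and $q=2$ then delivers it for all $q\in[1,2]$. You should replace your operator-norm step by this approximation-and-optimize argument.
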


\begin{proof}
We will need the following simple lemma.

\begin{lemma}
For all $\rho\in {\mathcal S}_m$ and all $l=1,\dots, m,$ there exists $\rho'\in {\mathcal S}_{l,m}$
such that 
$$\|\rho-\rho'\|_2^2\leq \frac{1}{l}.$$
\end{lemma}

\begin{proof}
Suppose that $\rho=\sum_{j=1}^m{\lambda_j}P_j,$ where $\lambda_j$ are the eigenvalues of $\rho$ repeated 
with their multiplicities and $P_j$ are orthogonal one-dimensional projectors. Note that $\{\lambda_j: j=1,\dots, m\}$
is a probability distribution on the set $\{1,\dots, m\}.$ Let $\nu $ be a random variable sampled from this distribution 
and $\nu_1,\dots, \nu_l$ be its i.i.d. copies. Then ${\mathbb E}P_{\nu}=\rho$ and 
$$
{\mathbb E}\biggl\|l^{-1}\sum_{j=1}^l P_{\nu_j}- \rho\biggr\|_2^2=\frac{{\mathbb E}\|P_{\nu}-\rho\|_2^2}{l}
=\frac{{\mathbb E}\|P_{\nu}\|_2^2-\|\rho\|_2^2}{l}=\frac{1-\|\rho\|_2^2}{l}\leq\frac{1}{l}.
$$
Therefore, there exists a realization $\nu_1=k_1,\dots, \nu_l=k_l$ of r.v. $\nu_1,\dots, \nu_l$ such that 
$$
\biggl\|l^{-1}\sum_{j=1}^l P_{k_j}- \rho\biggr\|_2^2\leq \frac{1}{l}.
$$
Denote $\rho':=l^{-1}\sum_{j=1}^l P_{k_j}.$ Then, $\rho'\in {\mathcal S}_{l,m}$ and $\|\rho-\rho'\|_2^2\leq \frac{1}{l}.$
\end{proof}

First, we will prove bound (\ref{upperthm1boundq}) for $q=2.$ To this end, we use oracle inequality (\ref{oracle_ineq})
with $t=2\log m + \log 2$ and with oracle $S=\rho'\in {\mathcal S}_{l,m}$ such that $\|\rho-\rho'\|_2^2\leq \frac{1}{l}.$
Under simplifying assumptions (\ref{simplifying_assumption}) it yields that with probability at least $1-\frac{1}{2}m^{-2}$ 
$$
\|\tilde \rho^{\eps}-\rho\|_2^2 = m^2 \|f_{\tilde \rho^{\eps}}-f_{\rho}\|_{L_2(\Pi)}^2 
\lesssim \biggl[\frac{1}{l}+ \tau^2 l \log m\biggr],
$$
where $\tau := \frac{\sigma_{\xi}m^{3/2}}{\sqrt{n}}.$ On the other hand, using the same inequality with $S=\rho\in {\mathcal S}_{r,m}$
yields the bound 
$$
\|\tilde \rho^{\eps}-\rho\|_2^2
\lesssim \tau^2 r \log m
$$
that also holds with probability at least $1-\frac{1}{2}m^{-2}.$ Therefore, with probability at least $1-m^{-2}$
\begin{equation}
\label{ltau}
\|\tilde \rho^{\eps}-\rho\|_2^2 \lesssim \Bigl(\frac{1}{l}+ \tau^2 l \log m\Bigr) \bigwedge \tau^2 r \log m.
\end{equation}
Let $\bar l= \frac{1}{\tau \sqrt{\log m}}.$ If $\bar l\in [1,m],$ set $l:=[\bar l].$ Otherwise,
if $\bar l>m,$ set $l:=m$ and, if $\bar l<1,$ set $l:=1.$ An easy computation shows 
that with such a choice of $l$ bound (\ref{ltau}) implies (\ref{upperthm1boundq}) for $q=2.$

Next we use bound (\ref{KL-error''}) that, for $t=2\log m,$ implies under assumptions (\ref{simplifying_assumption})
that with some constant $C$ and with probability at least $1-m^{-2}$
\begin{eqnarray}
\label{KL-error+}
&
K(\rho\|\tilde \rho^{\eps})
\leq C\sigma_{\xi} \frac{r m^{3/2}\sqrt{\log m}\log(mn)}{\sqrt{n}},
\end{eqnarray}
which is bound (\ref{upperthm1boundK}). Bound (\ref{upperthm1boundH}) also holds 
in view of inequality (\ref{khtracelem}). 

Now, we prove bound (\ref{upperthm1boundq}) for $q=1$ (the bound for $q\in [1,2]$ will then follow by interpolation). 
To this end, we will use the following lemma (see Proposition 1 in \citealt{koltchinskii2011neumann}) that shows that 
if two density matrices are close in Hellinger distance and one of them is ``concentrated around a subspace" $L,$
then another one is also ``concentrated around" $L.$

\begin{lemma}
For any $L\subset {\mathbb C}^m$ and all $S_1,S_2\in {\mathcal S}_m,$
$$
\|{\mathcal P}_L^{\perp} S_1\|_1 \leq 2\|{\mathcal P}_L^{\perp} S_2\|_1 + 2H^2(S_1,S_2).
$$
\end{lemma}
We apply this lemma to $S_1=\tilde \rho^{\eps},$ $S_2=\rho$ and $L={\rm supp}(\rho)$
so that ${\mathcal P}_L^{\perp} \rho=0.$
It yields that 
$$
\|{\mathcal P}_L^{\perp} \tilde \rho^{\eps}\|_1 \leq 2H^2(\tilde \rho^{\eps},\rho).
$$
Therefore,
\begin{equation}
\label{1_2_H}
\|\tilde \rho^{\eps}-\rho\|_1 \leq 
\|{\mathcal P}_L(\tilde \rho^{\eps}-\rho)\|_1
+
\|{\mathcal P}_L^{\perp}(\tilde \rho^{\eps}-\rho)\|_1
\leq \sqrt{2r}\|\tilde \rho^{\eps}-\rho\|_2 + \|{\mathcal P}_L^{\perp} \tilde \rho^{\eps}\|_1
\leq \sqrt{2r}\|\tilde \rho^{\eps}-\rho\|_2+ 2H^2(\tilde \rho^{\eps},\rho).
\end{equation}
Using bounds (\ref{upperthm1boundq}) for $q=2$ and (\ref{upperthm1boundH}), 
we get from (\ref{1_2_H}) that 
\begin{equation}
\label{upperthm1bound1}
\|\tilde{\rho}^{\eps}-\rho\|_1\leq C\frac{\sigma_{\xi}m^{\frac{3}{2}}r}{\sqrt{n}}\sqrt{\log m}\log (mn)
\bigwedge 2,
\end{equation} 
which is equivalent to (\ref{upperthm1boundq}) for $q=1.$ Note that by choosing $t= 2\log m +\log 2+2$ (which might 
have an impact only on the constant), we could make probability bounds in (\ref{upperthm1boundq}) for $q=2$
and (\ref{upperthm1boundH}) to be at least $1-\frac{1}{2}m^{-2}$ implying that (\ref{upperthm1bound1})
holds with probability at least $1-m^{-2},$ as it is claimed in the theorem. 

To complete the proof, it is enough to use the interpolation inequality of Lemma \ref{interlem}. It follows that, for $q\in (1,2),$ 
$$
\|\tilde \rho^{\eps}-\rho\|_q \leq \|\tilde \rho^{\eps}-\rho\|_1^{\frac{2}{q}-1}\|\tilde \rho^{\eps}-\rho\|_2^{2-\frac{2}{q}}.
$$
Substituting bound (\ref{upperthm1boundq}) for $q=1$ and $q=2$ into the last inequality yields the result for an arbitrary 
$q\in (1,2).$
\end{proof}

Similarly, in the case of trace regression with bounded response (see Assumption \ref{bounded_response}), 
minimax rates of Theorem \ref{minmaxthm3} are also attained for the estimator $\tilde \rho^{\eps}$ (up to log factors).
In this case, assume that Assumption \ref{bounded_response} holds with $\bar U=U$ and, in addition, let us make the 
following simplifying assumptions:
\begin{equation}
\label{simplifying_assumption''}
U\sqrt{\frac{m\log m}{n}}\lesssim 1\ \ {\rm and}\ \ \log\log_2 n \lesssim m\log m.
\end{equation}
For the Pauli basis ($U=m^{-1/2}$), the first assumption holds if $n\gtrsim \log m.$
The second assumption does hold unless $n$ is extremely large ($n\sim 2^{\exp\{m\log m\}}$).  
Under these assumptions, we will use the following value of regularization parameter $\eps:$
$$
\eps:=\frac{U}{\log(mn)}\sqrt{\frac{\log(2m)}{n m}}.
$$
The following version of Theorem \ref{upper_bd_Gauss} holds in the bounded regression case
(with a similar proof).

\begin{theorem}
\label{upper_bd_bound}
There exists a constant $C>0$ such that the following bounds hold for all $r=1,\dots, m,$ for all $\rho\in {\mathcal S}_{r,m}$
and for all $q\in [1,2]$ with probability at least $1-m^{-2}:$
\begin{equation}
\label{upperthm2boundq}
\|\tilde{\rho}^{\eps}-\rho\|_q\leq C\biggl(\frac{U m^{\frac{3}{2}}r^{1/q}}{\sqrt{n}}\sqrt{\log m}\log^{(2-q)/q}(mn)
\bigwedge 
\biggl(\frac{U m^{3/2}}{\sqrt{n}}\biggr)^{1-\frac{1}{q}}(\log m)^{\frac{1}{2}-\frac{1}{2q}}\biggr)\bigwedge 2,
\end{equation} 
\begin{equation}
\label{upperthm2boundH}
H^2(\tilde{\rho}^{\eps},\rho)\leq C\frac{U m^{\frac{3}{2}}r}{\sqrt{n}}\sqrt{\log m}\log(mn)\bigwedge 2 
\end{equation}
and
\begin{equation}
\label{upperthm2boundK}
K(\rho\|\tilde{\rho}^{\eps})\leq C\frac{U m^{\frac{3}{2}}r}{\sqrt{n}}\sqrt{\log m}\log(mn).
\end{equation}
\end{theorem}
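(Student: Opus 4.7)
The plan is to mirror the proof of Theorem \ref{upper_bd_Gauss} step-by-step, replacing the Gaussian-noise oracle inequality of Theorem \ref{th-KL-2} with the bounded-response oracle inequality of Theorem \ref{th-KL-1}. First I would verify that the choice $\eps = \frac{U}{\log(mn)}\sqrt{\frac{\log(2m)}{nm}}$ matches the special $\eps$ in Theorem \ref{th-KL-1} under the simplifying assumption $U\sqrt{m\log m/n}\lesssim 1$: that assumption implies $U\sqrt{\log(2m)/(nm)}\geq U^2\log(2m)/n$, so the ``$\bigvee$'' in the statement of Theorem \ref{th-KL-1} collapses to its first term, and simultaneously the factor $(1\bigvee U^2 m\log(2m)/n)$ appearing in bound (\ref{L_2-error-A}) reduces to $1$. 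With $t := 2\log m + O(1)$ the second simplifying assumption $\log\log_2 n\lesssim m\log m$ ensures that $U^2(t+\log\log_2(2n))/n$ is absorbed into $U^2 r m\log(2m)/n$ for $r\geq 1$. Hence bounds (\ref{L_2-error-A}) and (\ref{KL-error}) specialize, with probability at least $1-\tfrac12 m^{-2}$, to
\[
\|\tilde\rho^{\eps}-\rho\|_2^2 \lesssim \tau^2 r\log m,\qquad
K(\rho\|\tilde\rho^{\eps})\lesssim \tau r\sqrt{\log m}\,\log(mn),
\]
where $\tau := Um^{3/2}/\sqrt{n}$. (In the KL bound the ``no-$r$'' term $U\sqrt{m/n}(t+\log\log_2(2n))\log(mn)/\sqrt{\log(2m)}$ is dominated by the main term for all $r\geq 1$ using the second simplifying assumption.) This immediately gives (\ref{upperthm2boundK}), and (\ref{upperthm2boundH}) follows via $H^2\leq K$ from inequality (\ref{khtracelem}).

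Next I would treat the Hilbert--Schmidt error to recover the second branch in the minimum in (\ref{upperthm2boundq}). Using the lemma proved in the proof of Theorem~\ref{upper_bd_Gauss} (for every $\rho\in\mathcal S_m$ and $l$ there exists $\rho'\in\mathcal S_{l,m}$ with $\|\rho-\rho'\|_2^2\leq 1/l$), I would apply oracle inequality (\ref{sharp_oracle}) with oracle $S=\rho'$. Combined with (\ref{S'S}) and the Hilbert--Schmidt/$L_2(\Pi)$ conversion $\|\cdot\|_2^2 = m^2\|f_\cdot\|_{L_2(\Pi)}^2$, this yields, with probability at least $1-\tfrac12 m^{-2}$,
\[
\|\tilde\rho^{\eps}-\rho\|_2^2 \lesssim \frac{1}{l}+\tau^2 l\log m.
\]
Taking the min with the rank-$r$ bound and optimizing $l$ at $\bar l\asymp 1/(\tau\sqrt{\log m})$ (clipped to $[1,m]$) produces (\ref{upperthm2boundq}) for $q=2$.

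For the $q=1$ bound, I would reuse the concentration-of-support lemma already invoked in the proof of Theorem~\ref{upper_bd_Gauss}, applied with $S_1=\tilde\rho^{\eps}$, $S_2=\rho$, $L={\rm supp}(\rho)$, giving $\|\mathcal P_L^\perp\tilde\rho^{\eps}\|_1\leq 2H^2(\tilde\rho^{\eps},\rho)$. The standard splitting
\[
\|\tilde\rho^{\eps}-\rho\|_1 \leq \sqrt{2r}\,\|\tilde\rho^{\eps}-\rho\|_2 + \|\mathcal P_L^\perp\tilde\rho^{\eps}\|_1
\]
then combines the $q=2$ bound with the Hellinger bound to yield (\ref{upperthm2boundq}) for $q=1$. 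Finally, bound (\ref{upperthm2boundq}) for arbitrary $q\in(1,2)$ follows by the interpolation inequality (Lemma \ref{interlem}) applied between the $q=1$ and $q=2$ cases. A union bound over the (at most three) high-probability events, absorbed by taking $t=2\log m + O(1)$, keeps the overall probability at least $1-m^{-2}$.

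The main obstacle I expect is the bookkeeping in Step~1: confirming that \emph{both} simplifying assumptions in (\ref{simplifying_assumption''}) are exactly what is needed to (i) collapse the ``$\bigvee$'' in Theorem \ref{th-KL-1}'s optimal $\eps$ to its leading branch, (ii) reduce the parenthetical factor $(1\bigvee U^2 m\log(2m)/n)$ to $1$, and (iii) absorb the $t+\log\log_2(2n)$ tail term into the rank-dependent main term for all $r\geq 1$. Once these algebraic checks are in place, the rest of the argument is essentially identical to that of Theorem~\ref{upper_bd_Gauss}, with $\sigma_\xi$ replaced by $U$ throughout.
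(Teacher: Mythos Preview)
Your proposal is correct and follows essentially the same approach as the paper, which simply states that Theorem \ref{upper_bd_bound} holds ``with a similar proof'' to Theorem \ref{upper_bd_Gauss}. You have filled in precisely the intended substitution---replacing the Gaussian oracle inequalities of Theorem \ref{th-KL-2} by those of Theorem \ref{th-KL-1} and checking that the simplifying assumptions (\ref{simplifying_assumption''}) collapse the extra factors---and the remaining steps (approximation lemma for the second branch at $q=2$, Kullback--Leibler and Hellinger bounds, support-concentration lemma for $q=1$, interpolation for $q\in(1,2)$) are identical to the paper's argument.
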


\begin{remark}
In the case of Pauli basis, the minimax optimal rates (up to constants and logarithmic factors) are: 
$\frac{mr^{1/q}}{\sqrt{n}}\wedge (\frac{m}{\sqrt{n}})^{1-\frac{1}{q}}\wedge 2$ for Schatten $q$-norm distances  
for $q\in [1,2];$ $\frac{m r}{\sqrt{n}}$ for nuclear norm, squared Hellinger and Kullback-Leibler distances (provided 
the $mr\lesssim \sqrt{n}$).
\end{remark}

\bibliographystyle{abbrv}
\bibliography{refer}

\begin{thebibliography}{26}
\providecommand{\natexlab}[1]{#1}
\providecommand{\url}[1]{\texttt{#1}}
\expandafter\ifx\csname urlstyle\endcsname\relax
  \providecommand{\doi}[1]{doi: #1}\else
  \providecommand{\doi}{doi: \begingroup \urlstyle{rm}\Url}\fi

\bibitem[Aubin and Ekeland(2006)]{aubin}
Jean-Pierre Aubin and Ivar Ekeland.
\newblock \emph{Applied Nonlinear Analysis}.
\newblock Courier Corporation, 2006.

\bibitem[Aubrun(2009)]{aubrun2009}
Guillaume Aubrun.
\newblock On almost randomizing channels with a short {Kraus} decomposition.
\newblock \emph{Communications in Mathematical Physics}, 288\penalty0
  (3):\penalty0 1103--1116, 2009.

\bibitem[Bu{\v{z}}ek(2004)]{buvzek20046}
Vladim{\'\i}r Bu{\v{z}}ek.
\newblock Quantum tomography from incomplete data via maxent principle.
\newblock In \emph{Quantum State Estimation}, pages 189--234. Springer, 2004.

\bibitem[Cai et~al.(2015)Cai, Kim, Wang, Yuan, and Zhou]{caioptimal}
Tony Cai, Donggyu Kim, Yazhen Wang, Ming Yuan, and Harrison~H Zhou.
\newblock Optimal large-scale quantum state tomography with {Pauli}
  measurements.
\newblock
  \url{http://www-stat.wharton.upenn.edu/~tcai/paper/Estimating-Density-Matrix-Pauli.pdf},
  2015.

\bibitem[Cand{\'e}s and Plan(2011)]{candes2011tight}
Emmanuel~J Cand{\'e}s and Yaniv Plan.
\newblock Tight oracle inequalities for low-rank matrix recovery from a minimal
  number of noisy random measurements.
\newblock \emph{IEEE Transactions on Information Theory}, 57\penalty0
  (4):\penalty0 2342--2359, 2011.

\bibitem[de~la Pe\~na and Gin\'e(1999)]{Gine}
Victor~H. de~la Pe\~na and Evarist Gin\'e.
\newblock \emph{Decoupling. From Dependence to Independence}.
\newblock Springer, 1999.

\bibitem[Flammia et~al.(2012)Flammia, Gross, Liu, and
  Eisert]{flammia2012quantum}
Steven~T Flammia, David Gross, Yi-Kai Liu, and Jens Eisert.
\newblock Quantum tomography via compressed sensing: error bounds, sample
  complexity and efficient estimators.
\newblock \emph{New Journal of Physics}, 14\penalty0 (9):\penalty0 095022,
  2012.

\bibitem[Gross(2011)]{gross2011recovering}
David Gross.
\newblock Recovering low-rank matrices from few coefficients in any basis.
\newblock \emph{IEEE Transactions on Information Theory}, 57\penalty0
  (3):\penalty0 1548--1566, 2011.

\bibitem[Gross et~al.(2010)Gross, Liu, Flammia, Becker, and
  Eisert]{gross2010quantum}
David Gross, Yi-Kai Liu, Steven~T Flammia, Stephen Becker, and Jens Eisert.
\newblock Quantum state tomography via compressed sensing.
\newblock \emph{Physical Review Letters}, 105\penalty0 (15):\penalty0 150401,
  2010.

\bibitem[Gu{\'e}don et~al.(2008)Gu{\'e}don, Mendelson, Pajor, and
  Tomczak-Jaegermann]{guedon2008}
Olivier Gu{\'e}don, Shahar Mendelson, Alain Pajor, and Nicole
  Tomczak-Jaegermann.
\newblock Majorizing measures and proportional subsets of bounded orthonormal
  systems.
\newblock \emph{Revista Matem{\'a}tica Iberoamericana}, 24\penalty0
  (3):\penalty0 1075--1095, 2008.

\bibitem[Kalev et~al.(2015)Kalev, Kosut, and Deutsch]{kalev2015informationally}
Amir Kalev, Robert~L Kosut, and Ivan~H Deutsch.
\newblock Informationally complete measurements from compressed sensing
  methodology.
\newblock \emph{arXiv preprint arXiv:1502.00536}, 2015.

\bibitem[Klauck et~al.(2007)Klauck, Nayak, Ta-Shma, and Zuckerman]{Klauck2007}
Hartmut Klauck, Ashwin Nayak, Amnon Ta-Shma, and David Zuckerman.
\newblock Interaction in quantum communication.
\newblock \emph{IEEE Transactions on Information Theory}, 53\penalty0
  (6):\penalty0 1970--1982, 2007.

\bibitem[Koltchinskii(2011{\natexlab{a}})]{koltchinskii2011neumann}
Vladimir Koltchinskii.
\newblock von {Neumann} entropy penalization and low-rank matrix estimation.
\newblock \emph{The Annals of Statistics}, 39\penalty0 (6):\penalty0
  2936--2973, 2011{\natexlab{a}}.

\bibitem[Koltchinskii(2011{\natexlab{b}})]{koltchinskii2011oracle}
Vladimir Koltchinskii.
\newblock \emph{Oracle Inequalities in Empirical Risk Minimization and Sparse
  Recovery Problems: {\'E}cole d'{\'E}t{\'e} de Probabilit{\'e}s de Saint-Flour
  XXXVIII-2008}.
\newblock Springer, 2011{\natexlab{b}}.

\bibitem[Koltchinskii(2013{\natexlab{a}})]{koltchinskii2013remark}
Vladimir Koltchinskii.
\newblock A remark on low rank matrix recovery and noncommutative {Bernstein}
  type inequalities.
\newblock In \emph{From Probability to Statistics and Back: High-Dimensional
  Models and Processes--A Festschrift in Honor of Jon A. Wellner}, pages
  213--226. Institute of Mathematical Statistics, 2013{\natexlab{a}}.

\bibitem[Koltchinskii(2013{\natexlab{b}})]{koltchinskii2013sharp}
Vladimir Koltchinskii.
\newblock Sharp oracle inequalities in low rank estimation.
\newblock In \emph{Empirical Inference}, pages 217--230. Springer,
  2013{\natexlab{b}}.

\bibitem[Koltchinskii et~al.(2011)Koltchinskii, Lounici, and
  Tsybakov]{koltchinskii2011nuclear}
Vladimir Koltchinskii, Karim Lounici, and Alexandre~B Tsybakov.
\newblock Nuclear-norm penalization and optimal rates for noisy low-rank matrix
  completion.
\newblock \emph{The Annals of Statistics}, 39\penalty0 (5):\penalty0
  2302--2329, 2011.

\bibitem[Liu(2011)]{liu2011universal}
Yi-Kai Liu.
\newblock Universal low-rank matrix recovery from {Pauli} measurements.
\newblock In \emph{Advances in Neural Information Processing Systems}, pages
  1638--1646, 2011.

\bibitem[Ma and Wu(2013)]{ma2013volume}
Zongming Ma and Yihong Wu.
\newblock Volume ratio, sparsity, and minimaxity under unitarily invariant
  norms.
\newblock In \emph{Information Theory Proceedings (ISIT), 2013 IEEE
  International Symposium}, pages 1027--1031. IEEE, 2013.

\bibitem[Negahban and Wainwright(2010)]{negahban}
Sahand Negahban and Martin~J. Wainwright.
\newblock Estimation of (near) low-rank matrices with noise and
  high-dimensional scaling.
\newblock \emph{The Annals of Statistics}, 2010.

\bibitem[Nielsen and Chuang(2000)]{Nielsen2000}
M.A. Nielsen and I.L. Chuang.
\newblock \emph{Quantum Computation and Quantum Information.}
\newblock Cambridge University Press, 2000.

\bibitem[Pajor(1998)]{pajor1998metric}
Alain Pajor.
\newblock Metric entropy of the {Grassmann} manifold.
\newblock \emph{Convex Geometric Analysis}, 34:\penalty0 181--188, 1998.

\bibitem[Rohde and Tsybakov(2011)]{rohdetsybakov}
Angelika Rohde and Alexandre~B Tsybakov.
\newblock Estimation of high-dimensional low-rank matrices.
\newblock \emph{The Annals of Statistics}, 39\penalty0 (2):\penalty0 887--930,
  2011.

\bibitem[Tropp(2012)]{tropp2012user}
Joel~A Tropp.
\newblock User-friendly tail bounds for sums of random matrices.
\newblock \emph{Foundations of Computational Mathematics}, 12\penalty0
  (4):\penalty0 389--434, 2012.

\bibitem[Tsybakov(2008)]{intro}
Alexandre~B. Tsybakov.
\newblock \emph{Introduction to Nonparametric Estimation}.
\newblock Springer, 2008.

\bibitem[Xia and Koltchinskii()]{KoltchinskiiXia2015}
Dong Xia and Vladimir Koltchinskii.
\newblock Estimation of low rank density matrices: bounds in {S}chatten norms
  and other distances.
\newblock \emph{arXiv preprint arXiv:1604.04600}.

\end{thebibliography}

\end{document}